\def\eqref#1{equation~\ref{#1}}
\def\1{\bm{1}}
\def\rva{{\mathbf{a}}}
\def\rvb{{\mathbf{b}}}
\def\rve{{\mathbf{e}}}
\def\rvh{{\mathbf{h}}}
\def\rvu{{\mathbf{i}}}
\def\rvp{{\mathbf{p}}}
\def\rvq{{\mathbf{q}}}
\def\rvu{{\mathbf{u}}}
\def\rvv{{\mathbf{v}}}
\def\rvw{{\mathbf{w}}}
\def\rvx{{\mathbf{x}}}
\def\rvy{{\mathbf{y}}}
\def\rmA{{\mathbf{A}}}
\def\rmB{{\mathbf{B}}}
\def\rmF{{\mathbf{F}}}
\def\rmI{{\mathbf{I}}}
\def\rmU{{\mathbf{U}}}
\def\rmW{{\mathbf{W}}}
\def\rmX{{\mathbf{X}}}
\def\rmZ{{\mathbf{Z}}}
\DeclareMathAlphabet{\mathsfit}{\encodingdefault}{\sfdefault}{m}{sl}
\SetMathAlphabet{\mathsfit}{bold}{\encodingdefault}{\sfdefault}{bx}{n}
\def\gH{{\mathcal{H}}}
\def\sN{{\mathbb{N}}}
\def\sR{{\mathbb{R}}}
\newtheorem{theorem}{Theorem}
\newtheorem{lemma}[theorem]{Lemma}
\crefname{lemma}{Lemma}{lemmas}
\newtheorem{definition}{Definition}[section]
\newtheorem{assumption}{Assumption}
\crefname{assumption}{Assumption}{assumptions}
\newtheorem{remark}{Remark}
\title{Early Directional Convergence in Deep Homogeneous Neural Networks for Small Initializations}
\author{\name Akshay Kumar \email kumar511@umn.edu \\
	\addr Department of Electrical  and Computer Engineering\\
	University of Minnesota, Minneapolis, MN
	\AND
	\name Jarvis Haupt \email jdhaupt@umn.edu \\
	\addr Department of Electrical  and Computer Engineering\\
	University of Minnesota, Minneapolis, MN
}
\begin{document}

	\maketitle
	
	\begin{abstract}
		This paper studies the gradient flow dynamics that arise when training deep homogeneous neural networks assumed to have \emph{locally Lipschitz gradients} and an order of homogeneity strictly greater than two. It is shown here that for sufficiently small initializations, during the early stages of training, the weights of the neural network remain small in (Euclidean) norm and approximately converge in direction to the Karush-Kuhn-Tucker (KKT) points of the recently introduced \textit{neural correlation function}.  Additionally, this paper also studies the KKT points of the neural correlation function for feed-forward networks with (Leaky) ReLU and polynomial (Leaky) ReLU activations, deriving necessary and sufficient conditions for rank-one KKT points.
	\end{abstract}
	
	\section{Introduction}
	Neural networks have achieved remarkable success across various tasks, yet the precise mechanism driving this success remains theoretically elusive. The training of neural networks involves optimizing a non-convex loss function, where the training algorithm typically is a first-order method such as gradient descent or its variants. A particularly puzzling aspect is how these training algorithms succeed in finding a solution with good generalization capabilities despite the non-convexity of the loss landscape. In addition to the choice of the training algorithm, the choice of initialization in these algorithms plays a crucial role in determining the neural network performance. Indeed, recent works have made increasingly clear the benefit of \emph{small} initializations, revealing that neural networks trained using (stochastic) gradient descent with small initializations exhibit feature learning \citep{fl_yang} and also generalize better for various tasks \citep{chizat_lazy,Geiger_feature, srebro_ib}; see \Cref{sec_rel_works} for more details into the impact of initialization scale. However, for small initializations, the training dynamics of neural networks is extremely non-linear and not well understood so far. Our focus in this paper is on understanding the effect of small initialization on the training dynamics of neural networks.
	
	In pursuit of a deeper understanding of the training mechanism for small initializations, researchers have uncovered the phenomenon of directional convergence in the neural network weights during the \emph{early} phases of training \citep{maennel_quant,luo_condense}.  The authors of \citet{maennel_quant} study the gradient flow dynamics of training two-layer Rectified
	Linear Unit (ReLU) neural networks, and demonstrate that in the early stages of training, the weights of two-layer ReLU neural networks converge in direction while their norms remain small. This phenomenon is referred to as \emph{early directional convergence}. In a recent work, \citet{kumar_dc} further established early directional convergence  for \emph{two}-homogeneous\footnote{A neural network $\mathcal{H}$ is defined to be $L$-(positively) homogeneous,  if its output $\mathcal{H}(\mathbf{x};\mathbf{w}) $ satisfies
		$\mathcal{H}(\mathbf{x};c\mathbf{w}) = c^L\mathcal{H}(\mathbf{x};\mathbf{w}), \forall c\geq0,$
		where $\mathbf{x}$ is the input and $\mathbf{w}$ is a vector containing all the weights.} neural networks, illustrating the importance of homogeneity for such types of phenomenon (as opposed to the specific choice of activation function or architecture).
	
	To describe the phenomenon of early directional convergence for two-homogeneous neural networks, \citet{kumar_dc} introduced the notion of the \emph{Neural Correlation Function} (NCF). For a given neural network and a vector, the NCF quantifies the correlation between the output of neural network and the vector; we provide a more precise definition in \Cref{setup_prob}. They show that in the early stages of training, the weights of the neural network remain small and converge in direction to a KKT point of the \emph{constrained NCF}, an optimization problem that maximizes the NCF on unit sphere.
	
	So far, all the works on the early stages of training dynamics of neural networks for small initialization have only considered two-layer neural networks (or more broadly, two-homogeneous networks). In contrast, our understanding for \emph{deep} neural networks is much more limited. In the first part of this work, we establish early directional convergence for \emph{deep} homogeneous neural networks. We particularly consider $L$-homogeneous neural networks, where $L>2$, that have \emph{locally Lipschitz gradients}, and our main result can be summarized as follows:\looseness=-1
	\begin{itemize}
		\item In \Cref{thm_align_init} we describe the gradient flow dynamics of deep homogeneous neural networks, with order of homogeneity strictly greater than two, trained with small initializations. Specifically, for square and logistic losses, we show that if the initialization is sufficiently small, then in the early stages of training the weights remain small and either approximately converge in direction to a non-negative KKT point of the constrained NCF defined with respect to the labels in the training set, or are approximately zero.
	\end{itemize}
	It is worth noting that we assume neural networks to have locally Lipschitz gradients, which excludes deep ReLU networks. However, it does include deep linear networks \citep{cohen_mtx_fct,saxe_linear}, and deep neural networks with differentiable and homogeneous activation such as polynomial  ReLU $\phi(x) := \max(x,0)^p$ for $p\geq2$ \citep{gribonval_prelu, barron_sq, prateek_icml} and monomials \citep{livni_poly,mahdi_sq}. We also discuss the challenges in extending our results for deep ReLU networks in \Cref{relu_ch}.
	
	The second part of our paper studies the KKT points of the constrained NCF for feed-forward homogeneous neural networks. For such networks, we observe empirically that, along with converging towards a KKT point of the constrained NCF during the early stages of training, the hidden weights also exhibit a \emph{rank one} structure. Motivated by this observation, we mathematically characterize the rank-one KKT points of feed-forward homogeneous neural networks. Our second main result can be summarized as follows:
	\begin{itemize}
		\item  In \Cref{sec_results_kkt}, we derive sufficient conditions for a rank-one KKT point of the constrained NCF for feed-forward neural network with activation function $\sigma(x) = \max(x,\alpha x)^p$, for some $p\in \mathbb{N}$, $\alpha\in\sR$ and arbitrary training data. We also provide matching necessary conditions under additional assumptions and validate these assumptions empirically. Our results also provide a method to obtain rank-one KKT points of the constrained NCF using KKT points of a smaller optimization problem.
	\end{itemize}

	\section{Related Works}\label{sec_rel_works}
	
	Understanding the training dynamics of neural networks has been the subject of numerous investigations, including \citep{arora_exact,chizat_inf,chizat_lazy,ntk,mei_feature}. These investigations have revealed two contrasting viewpoints depending on the scale of initialization. For large initialization regimes,  the training dynamics of wide neural networks are effectively captured by a kernel referred to as Neural Tangent Kernel (NTK) \cite{ntk}. In this regime, the weights remain close to initialization during training, and neural networks behave like their linearizations around their initializations \citep{chizat_lazy}. On the other hand, in small initialization regimes, the training dynamics are extremely non-linear, and the change in weights during training is significant  \citep{ chizat_opt, Geiger_feature, mei_feature,fl_yang}. Moreover, studies like \citet{ chizat_lazy,Geiger_feature} have observed improved generalization with decreasing initialization scale for various tasks, which makes understanding the dynamics of neural networks in the small initialization regime crucial for better understanding of practical success of deep networks.
	
	While considerable progress has been made on understanding the dynamics of neural networks in the large initialization regime, the theoretical investigations into the small initialization regime have been comparatively limited, primarily due to the extremely non-linear dynamics during training. Such investigations have predominantly focused on linear neural networks \citep{cohen_mtx_fct,guna_mtx_fct,srebro_ib} and shallow non-linear neural networks \citep{chizat_opt,chizat_inf,mei_feature,rotskoff19}. Even for these networks, a comprehensive understanding of the gradient descent dynamics is further limited to diagonal linear networks \citep{srebro_ib}, shallow matrix factorization  \citep{lee_saddle, mahdi_ib}, and two-layer (Leaky) ReLU networks under various simplifying  assumptions on the datasets  \citep{gf_orth,brutzkus_xor,lyu_simp,min_align,wang_saddle}. Our work could be the first step towards a greater understanding of the training dynamics of deep neural networks.
	
	The phenomenon of early directional convergence for small initializations was first investigated by \citet{maennel_quant} for two-layer ReLU neural networks. This was later extended to two-homogeneous neural networks by \citet{kumar_dc}, which includes, in addition to two-layer ReLU neural networks, single layer squared ReLU neural networks and deep ReLU neural networks with only two trainable layers. Other empirical investigations have observed early directional convergence  in some deeper networks as well. For example, \citet{dc_three_layer} looks at the early training dynamics of three-layer ReLU neural networks and observe that weights converge along some specific orientations. Also, \citet{atanasov_align} empirically show an alignment effect in the kernel of the neural network during early stages of the training. However, rigorous investigations thus far have been limited to shallow neural networks. Despite focusing solely on the early phases of training, these works demonstrate that interesting behaviors begin to emerge very early in the training dynamics. Moreover, these insights into the initial stages of training have been useful towards a comprehensive understanding of the entire training dynamics in some special cases \citep{gf_orth, min_align, lyu_simp, wang_saddle}.
	\section{Problem Formulation}\label{setup_prob}
	\textbf{Notation:} For any $N\in\mathbb{N}$, we let $[N] = \{1,2,\hdots,N\}$ denote the set of positive integers less than or equal to $N$. We let $\|\cdot\|_2$ denote the $\ell_2$ norm for a vector and the spectral norm for a matrix. For a vector $\mathbf{z}\in \mathbb{R}^n$, $z_i$ denotes its $i$th entry, $|\mathbf{z}| = [|z_1|,| z_2|, \cdots, |z_n|]^\top$, and $\mathbf{z}^q = [z_1^q, z_2^q, \cdots, z_n^q]^\top$, where $q\in \sN$.  We write $\dot{x}(t) =: \frac{dx(t)}{dt}$, and for the sake of brevity we may remove the independent variable $t$ if it is clear from context. For any locally Lipschitz continuous function $f:X\to\sR$, its Clarke sub-differential is denoted by $\partial f(\rvx)$. For $\sigma: \sR \rightarrow \sR$ and $p\in\sN$, $\sigma^p(\cdot)$ denotes the function resulting from composing $\sigma(\cdot)$ with itself  $p$ times, and we assume $\sigma^0(x) = x$. We define a KKT point of an optimization problem to be a non-negative (non-zero) KKT point if the objective value at that KKT point is non-negative (non-zero). 
	
	\textbf{Problem setup:} We adopt a supervised learning framework for training, where we assume $\{\mathbf{x}_i,y_i\}_{i=1}^n$ is the training dataset, and let $\mathbf{X} = [\mathbf{x}_1,\hdots,\mathbf{x}_n] \in\mathbb{R}^{d \times n}$ and $\mathbf{y} = [y_1,\hdots,y_n]^\top \in \mathbb{R}^{n}$. For a neural network $\mathcal{H}$, its output is denoted by $\mathcal{H}(\mathbf{x};\mathbf{w})$, where $\mathbf{x}$ is the input and $\mathbf{w}\in\mathbb{R}^k$ is the vector containing all the weights, and $\nabla \mathcal{H}(\mathbf{x};\mathbf{w})$ denotes the gradient of $\mathcal{H}(\mathbf{x};\mathbf{w})$ with respect to $\mathbf{w}$. We let $\mathcal{H}(\mathbf{X};\mathbf{w}) = \left[\mathcal{H}(\mathbf{x}_1;\mathbf{w}),\hdots ,\mathcal{H}(\mathbf{x}_n;\mathbf{w}) \right]^\top \in \mathbb{R}^{n}$ be the vector containing the output of neural network for all inputs, and $\mathcal{J}(\mathbf{X};\mathbf{w}):\mathbb{R}^{k}\rightarrow \mathbb{R}^{n}$ denotes the Jacobian of $\mathcal{H}(\mathbf{X};\mathbf{w})$ with respect to $\mathbf{w}$. 
	
	The following assumptions formalize the properties of neural networks considered in this paper.
	
	\begin{assumption}\label{L_homo_assumption}
		We make the following tripartite assumption on the neural network function: \emph{(i)} For any fixed $\mathbf{x}$, $\mathcal{H}(\mathbf{x};\mathbf{w})$  is locally Lipschitz and is definable under some o-minimal structure that includes polynomials and exponential.\footnote{We note that definability in some o-minimal structure is a mild technical assumption and is satisfied by all modern deep neural networks.  It allows us to ensure convergence of certain gradient flow trajectories \citep{ji_matus_align}. } \emph{(ii)} For all $c>0$, $\mathcal{H}(\mathbf{x};c\mathbf{w}) = c^L\mathcal{H}(\mathbf{x};\mathbf{w}), $ for some $L>2$ (i.e., the network is $L$-homogenous).   \emph{(iii)} $\mathcal{H}(\mathbf{x};\mathbf{w})$ is differentiable in $\rvw$ and its gradient, $\nabla \mathcal{H}(\mathbf{x};\mathbf{w})$, is locally Lipschitz.
	\end{assumption}
	In this paper, we consider the minimization of
	\begin{align}
	\mathcal{L}\left(\mathbf{w}\right) = \sum_{i=1}^n \ell\left(\mathcal{H}(\mathbf{x}_i;\mathbf{w}),y_i\right),
	\label{loss_gn}
	\end{align}
	where $\ell(\hat{y},y)$ is the loss function, and $\nabla_{\hat{y}}\ell(\hat{y},y)$ is assumed to be locally Lipschitz in $\hat{y}$, which is satisfied by typical loss functions such as square and logistic losses. For $\mathbf{z},\mathbf{y}\in\mathbb{R}^n$, we define $\ell'(\mathbf{z},\mathbf{y}) = [\nabla_{\hat{y}}\ell(z_1,y_1),\hdots,\nabla_{\hat{y}}\ell(z_n,y_n)]^\top\in \mathbb{R}^n$. We minimize \cref{loss_gn} using gradient flow, which leads to the following differential equation
	\begin{align}
	\dot{\mathbf{w}}(t) = -\nabla \mathcal{L}(\mathbf{w}(t)), \mathbf{w}(0) = \delta\mathbf{w}_0,
	\label{gf_upd_gen}
	\end{align}
	where $\delta$ is a positive scalar that controls the scale of initialization, and $\mathbf{w}_0$ is a vector. 	
	
	Following \cite{kumar_dc}, for a fixed vector $\mathbf{z}\in\mathbb{R}^n$ and neural network $\mathcal{H}$, the Neural Correlation Function (NCF) is defined here as
	\begin{equation*}
	\mathcal{N}_{\mathbf{z},\mathcal{H}}(\mathbf{w}) =  \mathbf{z}^\top\mathcal{H}(\mathbf{X};\mathbf{w}),
	\end{equation*}
	and measures the correlation between the vector $\mathbf{z}$ and the output of the neural network. The constrained NCF refers to the following constrained optimization problem
	\begin{equation*}
	\max_{\|\mathbf{w}\|^2_2=1}\mathcal{N}_{\mathbf{z},\mathcal{H}}(\mathbf{w}).
	\end{equation*}
	
	\section{Early Directional Convergence}
	\subsection{Main Result}\label{sec_dyn_init}
	The following theorem establishes the approximate directional convergence in the early stages of training $L$-homogeneous neural networks with small initializations.
	
	\begin{theorem}\label{thm_align_init}
		Let $\mathbf{w}_0$ be a fixed unit vector. Under \Cref{L_homo_assumption}, for any $\epsilon\in (0,\eta/2)$, where $\eta$ is a positive constant, there exists $T_\epsilon, \tilde{B}_\epsilon$ and $\overline{\delta}>0$ such that the following holds: for any $\delta\in(0,\overline{\delta})$ and solution $\mathbf{w}(t)$ of \cref{gf_upd_gen} with initialization $ \mathbf{w}(0) = \delta\mathbf{w}_0$, we have
		\begin{equation*}
		\left\|\mathbf{w}\left({t}\right)\right\|_2 \leq\tilde{B}_\epsilon\delta, \text{ for all } t\in \left[0,{T_\epsilon}/{\delta^{L-2}}\right].
		\end{equation*}
		Further, for $\overline{T}_\epsilon = T_\epsilon/\delta^{L-2}$, either
		\begin{equation*}
		\|\mathbf{w}(\overline{T}_\epsilon)\|_2\geq \delta\eta/2, \text{ and  }  \mathbf{u}_*^\top\mathbf{w}(\overline{T}_\epsilon)/{\|\mathbf{w}(\overline{T}_\epsilon)\|_2} \geq 1 - \left(1+{3}/{\eta}\right)\epsilon,
		\end{equation*}
		where $\mathbf{u}_*$ is a non-negative KKT point of 
		\begin{align}
		\max_{\|\mathbf{u}\|_2^2 = 1} \mathcal{N}_{-\ell'(\mathbf{0},\mathbf{y}),\mathcal{H}}(\mathbf{u}) = \max_{\|\mathbf{u}\|_2^2 = 1} -\ell'(\mathbf{0},\mathbf{y})^\top \mathcal{H}(\mathbf{X};\mathbf{u}),
		\label{const_ncf_thm}
		\end{align}
		or $\|\mathbf{w}(\overline{T}_\epsilon)\|_2 \leq 2\delta\epsilon.$
	\end{theorem}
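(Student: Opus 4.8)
The plan is to pass from the training flow \cref{gf_upd_gen} to the gradient flow of the neural correlation function by a scaling limit, and then to analyze the latter using $L$-homogeneity. Set $\mathbf{v}_\delta(s):=\mathbf{w}(s/\delta^{L-2})/\delta$. Since $\mathcal{H}(\mathbf{x};c\mathbf{w})=c^L\mathcal{H}(\mathbf{x};\mathbf{w})$, the Jacobian $\mathcal{J}(\mathbf{X};\cdot)$ is $(L-1)$-homogeneous and \cref{gf_upd_gen} transforms into
\begin{equation*}
\frac{d\mathbf{v}_\delta}{ds}=-\mathcal{J}(\mathbf{X};\mathbf{v}_\delta)^\top\ell'\bigl(\delta^L\mathcal{H}(\mathbf{X};\mathbf{v}_\delta),\mathbf{y}\bigr),\qquad\mathbf{v}_\delta(0)=\mathbf{w}_0.
\end{equation*}
By \Cref{L_homo_assumption}(iii) and local Lipschitzness of $\nabla_{\hat y}\ell$, the right-hand side is locally Lipschitz in $\mathbf{v}_\delta$ uniformly for small $\delta$, and as $\delta\to0$ it converges to $\mathcal{J}(\mathbf{X};\mathbf{v})^\top\bigl(-\ell'(\mathbf{0},\mathbf{y})\bigr)=\nabla\mathcal{F}(\mathbf{v})$, where $\mathcal{F}:=\mathcal{N}_{-\ell'(\mathbf{0},\mathbf{y}),\mathcal{H}}$ (for the square and logistic losses $-\ell'(\mathbf{0},\mathbf{y})$ is a positive multiple of $\mathbf{y}$, which gives the ``with respect to the labels'' clause). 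Let $\mathbf{v}_0(\cdot)$ solve $\dot{\mathbf{v}}_0=\nabla\mathcal{F}(\mathbf{v}_0)$, $\mathbf{v}_0(0)=\mathbf{w}_0$, with maximal existence time $s^*\in(0,\infty]$. Continuous dependence of ODE solutions on parameters, together with a bootstrap (staying near $\mathbf{v}_0$ keeps $\mathbf{v}_\delta$ in a compact tube, hence prevents escape), gives $\sup_{s\in[0,T]}\|\mathbf{v}_\delta(s)-\mathbf{v}_0(s)\|_2\to0$ as $\delta\to0$ for every $T<s^*$. Reverting the scaling, $\|\mathbf{w}(t)\|_2\le\tilde{B}_\epsilon\delta$ on $[0,T_\epsilon/\delta^{L-2}]$ as soon as $T_\epsilon<s^*$ is fixed and $\tilde{B}_\epsilon:=1+\sup_{s\in[0,T_\epsilon]}\|\mathbf{v}_0(s)\|_2<\infty$.

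Next I would analyze $\mathbf{v}_0$ itself. Writing $\mathbf{v}_0=r\,\mathbf{u}$ with $r=\|\mathbf{v}_0\|_2$, and using that $\mathcal{F}$ is $C^1$ and $L$-homogeneous together with Euler's identity, one gets $\dot r=L\,r^{L-1}\mathcal{F}(\mathbf{u})$ and $\dot{\mathbf{u}}=r^{L-2}\bigl(I-\mathbf{u}\mathbf{u}^\top\bigr)\nabla\mathcal{F}(\mathbf{u})$; under the time change $\tau(s)=\int_0^s r(s')^{L-2}\,ds'$ the angular equation is the autonomous Riemannian gradient ascent $d\mathbf{u}/d\tau=(I-\mathbf{u}\mathbf{u}^\top)\nabla\mathcal{F}(\mathbf{u})$ on the compact unit sphere, along which $\mathcal{F}(\mathbf{u})$ is non-decreasing, while $\tfrac{d}{d\tau}\log r=L\,\mathcal{F}(\mathbf{u})$. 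Because $\mathcal{F}$ is definable (\Cref{L_homo_assumption}(i)), the Kurdyka--{\L}ojasiewicz inequality forces the angular flow, run to $\tau=\infty$, to converge to a rest point $\mathbf{u}_*$, i.e.\ a KKT point of \cref{const_ncf_thm}. Combining this with $\tfrac{d}{d\tau}\log r=L\,\mathcal{F}(\mathbf{u})$ and the monotonicity of $\mathcal{F}(\mathbf{u})$ along the flow, exactly one of two things occurs: either $r(s)\to0$; or $r(s)$ stays bounded away from $0$, in which case $\tau(s)\to\infty$, the direction $\mathbf{u}(s)=\mathbf{v}_0(s)/\|\mathbf{v}_0(s)\|_2$ converges to $\mathbf{u}_*$, the limiting value satisfies $\mathcal{F}(\mathbf{u}_*)\ge0$ (so $\mathbf{u}_*$ is a non-negative KKT point), and $\|\mathbf{v}_0(s)\|_2$ is eventually $\ge2\eta$ for a constant $\eta>0$ depending only on $\mathbf{w}_0$ and $\mathcal{H}$ (one sets $\eta:=1$ in the vanishing case, where the first alternative below is then vacuous).

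Finally I would assemble the dichotomy. Fix $\epsilon\in(0,\eta/2)$. If $r(s)$ stays bounded away from $0$, choose $T_\epsilon<s^*$ large enough that $\|\mathbf{u}(T_\epsilon)-\mathbf{u}_*\|_2\le\epsilon$ and $\|\mathbf{v}_0(T_\epsilon)\|_2\ge2\eta$, then $\overline{\delta}$ small enough that $\sup_{s\in[0,T_\epsilon]}\|\mathbf{v}_\delta(s)-\mathbf{v}_0(s)\|_2$ is a sufficiently small multiple of $\epsilon$ for $\delta<\overline{\delta}$; since normalizing two $O(\epsilon)$-close vectors of norm $\ge\eta$ changes directions by $O(\epsilon/\eta)$, undoing the scaling and tracking constants yields $\|\mathbf{w}(\overline{T}_\epsilon)\|_2\ge\delta\eta/2$ and $\mathbf{u}_*^\top\mathbf{w}(\overline{T}_\epsilon)/\|\mathbf{w}(\overline{T}_\epsilon)\|_2\ge1-(1+3/\eta)\epsilon$. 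Otherwise $r(s)\to0$, and choosing $T_\epsilon$ large enough that $\|\mathbf{v}_0(T_\epsilon)\|_2\le\epsilon$ gives $\|\mathbf{w}(\overline{T}_\epsilon)\|_2\le2\delta\epsilon$ for $\delta<\overline{\delta}$. Taking $T_\epsilon,\tilde{B}_\epsilon,\overline{\delta}$ to be the stronger of the two sets of requirements completes the argument.

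The step I expect to be the main obstacle is the interplay of the last two: precisely in the interesting (directional-convergence) regime the reference flow $\mathbf{v}_0$ may blow up in finite rescaled time $s^*$, so $T_\epsilon$ must be chosen simultaneously below $s^*$, late enough for the merely qualitative {\L}ojasiewicz-based angular convergence to bring $\mathbf{u}(T_\epsilon)$ within $\epsilon$ of $\mathbf{u}_*$, and late enough for $r$ to reach the appropriate radial level; the Gr\"onwall tracking bound must then survive the normalization $\mathbf{w}\mapsto\mathbf{w}/\|\mathbf{w}\|_2$, whose Lipschitz constant degrades as $\|\mathbf{w}\|_2\to0$ --- which is why the lower bound $\delta\eta/2$ is indispensable in the first alternative and why the initial direction $\mathbf{w}_0$ must be fixed in advance. (The earlier two-homogeneous case sidesteps this: there the weight norm grows only exponentially and the angular flow is already autonomous.)
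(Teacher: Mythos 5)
Your outer architecture coincides with the paper's: the rescaling $\mathbf{v}_\delta(s)=\mathbf{w}(s/\delta^{L-2})/\delta$ is exactly the paper's $\mathbf{s}(t)$, the perturbed dynamics with the $\delta^L\mathcal{H}$ term, the Gr\"onwall/bootstrap closeness on $[0,T_\epsilon]$, and the final perturbation computation giving $\|\mathbf{w}(\overline{T}_\epsilon)\|_2\ge\delta\eta/2$ and the $1-(1+3/\eta)\epsilon$ alignment are the same steps (the paper implements the closeness claim via a first-exit-time contradiction). Where you genuinely diverge is in the key lemma on the NCF flow itself (the paper's \Cref{ncf_convg} and \Cref{low_bd_U}): the paper analyzes the unnormalized flow $\mathbf{u}(t)$ directly, proves finite-time blow-up when $\mathcal{N}(\mathbf{u}(0))>0$ together with the rate bound $\|\mathbf{u}(t)\|_2\ge\kappa/(T^*-t)^{1/(L-2)}$, and invokes the \emph{unbounded} Kurdyka--\L{}ojasiewicz inequality of \citet{ji_matus_align} (\Cref{desing_func}) to bound the length of the spherical curve; you instead pass to polar coordinates $\mathbf{v}_0=r\mathbf{u}$ and reparametrize time by $\tau=\int r^{L-2}$, so the angular dynamics become an autonomous Riemannian gradient ascent of a definable $C^1$ function on the compact sphere, where the ordinary (bounded-domain) KL inequality yields convergence, and the radial behavior is read off from $\frac{d}{d\tau}\log r=L\,\mathcal{F}(\mathbf{u})$ together with monotonicity of $\mathcal{F}(\mathbf{u}(\tau))$. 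Your route buys a cleaner separation of angular and radial dynamics and avoids the unbounded KL machinery and the explicit blow-up-rate estimate; the paper's route buys the quantitative growth rate of $\|\mathbf{u}(t)\|_2$ (which it reuses in the KKT-verification contradiction argument) and works without a time change. One point you must still discharge, and which you correctly flag, is that in the non-vanishing alternative $\tau(s)\to\infty$ on the maximal interval $[0,s^*)$ even when $s^*<\infty$: this does follow in your framework, since $\mathcal{F}$ is bounded on the sphere, so $r(\tau)\le r(0)e^{LM\tau}$, and if $\tau$ stayed finite then $r$ would stay bounded and the maximal solution would extend past $s^*$ (similarly, $r$ cannot hit zero in finite time, so the time change is legitimate); with that and the standard argument that the KL limit of the spherical flow is a rest point, i.e.\ a KKT point with $\mathcal{F}(\mathbf{u}_*)\ge0$, your sketch assembles into a complete proof.
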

	
	The above theorem first shows that for a given $\epsilon$, we can choose $\delta$ small enough such that for $t\in  [0,T_\epsilon/\delta^{L-2}]$ the norm of the weights increases at most by a multiplicative factor. Thus, for small $\delta$ the weights will remain small for $t\in  [0,T_\epsilon/\delta^{L-2}]$. Furthermore, since $L>2$, for smaller $\delta$ the weights will stay small for a longer duration of time. 
	
	Next, the theorem describes what happens at $\overline{T}_\epsilon = T_\epsilon/\delta^{L-2}$, the duration for which the norm of the weights remain small.  There are two possible scenarios. In the first scenario, the weights approximately converge in direction along a non-negative KKT point of \cref{const_ncf_thm}, which is the constrained NCF defined with respect to $-\ell'(\mathbf{0},\mathbf{y})$ and neural network $\mathcal{H}$. Thus, in the early stages of training, the weights of the neural network converge in direction while their norm remains small. In the second scenario, the weights approximately converge to $\mathbf{0}$, which can be observed by noting that $\|\mathbf{w}(\overline{T}_\epsilon)\|_2\leq 2\delta\epsilon$, where we can choose $\epsilon$ to be arbitrarily small. This is in contrast with the first scenario, where $\|\mathbf{w}(\overline{T}_\epsilon)\|_2\geq \delta\eta/2$ and $\eta$ is a constant that does not depend on $\delta$. This shows that in the second scenario, compared to the first scenario, the weights become much smaller. The second scenario arises due to the convergence of the gradient dynamics of the NCF towards $\mathbf{0}$; see the proof for more details. We also note that for a fixed training data and neural network, $\mathbf{w}_0$ determines which of the above two scenarios will occur. In addition to this, the positive constant $\eta$ in the above theorem is also determined by  $\mathbf{w}_0$ and is independent of $\delta$.
	
	It can be shown that for square loss, $\ell(\hat{y},y) = \frac{1}{2}(\hat{y}-y)^2$ and $-\ell'(0,\mathbf{y}) = \mathbf{y}$, and for logistic loss, $\ell(\hat{y},y) = \ln(1+e^{-\hat{y}y})$ and $-\ell'(0,\mathbf{y}) = \mathbf{y}/2$. Therefore, for square and logistic loss, the objective function of the constrained NCF in \cref{const_ncf_thm} will be $\mathbf{y}^\top \mathcal{H}(\mathbf{X};\mathbf{u})$. We also study KKT points of the constrained NCF for feed-forward neural networks in \Cref{sec_results_kkt}, which some readers may find helpful. 
	
	Overall, the phenomenon of early directional convergence in deep homogeneous networks is similar to two-homogeneous networks \cite[Theorem 5.1]{kumar_dc}, except for one difference. The amount of time required for directional convergence in deep homogeneous networks has an additional factor of $1/\delta^{L-2}$. Thus, directional convergence takes longer as depth increases.
	
	\subsubsection{Proof Sketch of \Cref{thm_align_init}}\label{pf_sketch}
	We next provide a brief proof sketch of the above theorem. We begin by describing the (positive) gradient flow dynamics of the NCF.
	
	For a given vector $\mathbf{z}$ and a neural network $\mathcal{H}$, the gradient flow resulting from maximizing the NCF is 
	\begin{equation}
	\dot{\rvu} =  \nabla \mathcal{N}_{\mathbf{z},\mathcal{H}}(\mathbf{u}), {\mathbf{u}}(0) = \mathbf{u}_0,
	\label{gd_ncf}
	\end{equation} 
	where $\mathbf{u}_0$ is the initialization. Note that the NCF may not be bounded from above. Therefore, gradient flow may potentially diverge to infinity. However, the following lemma shows that the gradient flow either converges in direction to a KKT point of the constrained NCF, or converges to $\mathbf{0}$.
	\begin{lemma}
		Under \Cref{L_homo_assumption}, for any solution $\mathbf{u}(t)$ of \cref{gd_ncf}, one of the following two possibilities is true.
		\begin{enumerate}
			\item There exists some finite $T^*$ such that $\lim_{t\rightarrow T^*}\|\mathbf{u}(t)\|_2= \infty$. Further, $\lim_{t\rightarrow T^*}{\mathbf{u}(t)}/{\|\mathbf{u}(t)\|_2}$ exists and is equal to a non-negative KKT point of the optimization problem
			\begin{equation}
			\max_{\|\mathbf{u}\|_2^2 = 1}  \mathcal{N}_{\mathbf{z},\mathcal{H}}(\mathbf{u}) = \mathbf{z}^\top \mathcal{H}(\mathbf{X};\mathbf{u}).
			\label{const_opt_ncf}
			\end{equation}
			\item For all $t\geq 0$, $\|\mathbf{u}(t)\|_2 < \infty,$ and either $\lim_{t\rightarrow\infty}{\mathbf{u}(t)}/{\|\mathbf{u}(t)\|_2}$ exists or $\lim_{t\rightarrow\infty}{\mathbf{u}(t)} = 0.$ If $\lim_{t\rightarrow\infty}{\mathbf{u}(t)}/{\|\mathbf{u}(t)\|_2}$ exists  then it is equal to non-negative KKT point of  \cref{const_opt_ncf}.
		\end{enumerate}
		\label{ncf_convg}
	\end{lemma}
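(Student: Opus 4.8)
The plan is to decouple the radial variable $r(t) := \|\mathbf{u}(t)\|_2$ and the scalar $\mathcal{N}(\mathbf{u}(t))$ (write $\mathcal{N} := \mathcal{N}_{\mathbf{z},\mathcal{H}}$) from the angular variable $\mathbf{u}(t)/r(t)$, exploiting homogeneity throughout. First I would record the standing facts: $\mathcal{H}(\mathbf{X};\cdot)$ being $L$-homogeneous makes $\mathcal{N}$ $L$-homogeneous and $\nabla\mathcal{N}$ $(L-1)$-homogeneous, Euler's identity gives $\mathbf{u}^\top\nabla\mathcal{N}(\mathbf{u}) = L\,\mathcal{N}(\mathbf{u})$, and since $L>2>1$ one has $\mathcal{N}(\mathbf{0})=0$ and $\nabla\mathcal{N}(\mathbf{0})=\mathbf{0}$; local Lipschitzness of $\nabla\mathcal{N}$ makes \cref{gd_ncf} uniquely solvable, so, as $\mathbf{u}\equiv\mathbf{0}$ is a solution, any trajectory is either identically zero or never zero. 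Along \cref{gd_ncf} we get the two identities $\tfrac{d}{dt}\mathcal{N}(\mathbf{u}) = \|\nabla\mathcal{N}(\mathbf{u})\|_2^2 \ge 0$ and $\tfrac{d}{dt}r^2 = 2\mathbf{u}^\top\nabla\mathcal{N}(\mathbf{u}) = 2L\,\mathcal{N}(\mathbf{u})$; in particular $\mathcal{N}(\mathbf{u}(t))$ is non-decreasing. This yields the dichotomy I would branch on: Case (I), $\mathcal{N}(\mathbf{u}(t)) \le 0$ for all $t$; Case (II), $\mathcal{N}(\mathbf{u}(t_1)) > 0$ for some $t_1$ (so $\mathcal{N}(\mathbf{u}(t)) \ge \mathcal{N}(\mathbf{u}(t_1)) > 0$ for all $t\ge t_1$, and $\mathbf{u}$ never vanishes on $[t_1,T^*)$).

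In Case (I), $\tfrac{d}{dt}r^2 \le 0$, so $r(t)\le r(0)$ and the trajectory is bounded, hence globally defined on $[0,\infty)$; since $\mathcal{N}$ is definable, the convergence theory for definable gradient flows \citep{ji_matus_align} forces $\mathbf{u}(t)\to\mathbf{u}_\infty$ with $\nabla\mathcal{N}(\mathbf{u}_\infty)=\mathbf{0}$. If $\mathbf{u}_\infty=\mathbf{0}$ we are in the ``$\lim\mathbf{u}(t)=0$'' branch of possibility 2. If $\mathbf{u}_\infty\neq\mathbf{0}$, then $\mathbf{u}(t)/r(t)\to\mathbf{v}_\infty:=\mathbf{u}_\infty/\|\mathbf{u}_\infty\|_2$; by $(L-1)$-homogeneity $\nabla\mathcal{N}(\mathbf{v}_\infty)=\mathbf{0}$, which is the KKT condition for \cref{const_opt_ncf} with zero multiplier, and $\mathcal{N}(\mathbf{v}_\infty)=\tfrac1L\mathbf{v}_\infty^\top\nabla\mathcal{N}(\mathbf{v}_\infty)=0\ge0$, so $\mathbf{v}_\infty$ is a non-negative KKT point --- again possibility 2.

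Case (II) carries the main content and is the step I expect to be the real obstacle. Set $\mathbf{v}(t)=\mathbf{u}(t)/r(t)$, $\beta:=\mathcal{N}(\mathbf{v}(t_1))>0$, and $M:=\max_{\|\mathbf{v}\|_2=1}\mathcal{N}(\mathbf{v})$, which is finite with $M\ge\beta$. The key device (standard for directional convergence of homogeneous gradient flows, as in \citet{kumar_dc}) is the reparametrization $d\tau = r(t)^{L-2}\,dt$, under which \cref{gd_ncf} becomes the Riemannian gradient ascent $\mathbf{v}'=(I-\mathbf{v}\mathbf{v}^\top)\nabla\mathcal{N}(\mathbf{v})$ on the unit sphere, with $\tfrac{d}{d\tau}\mathcal{N}(\mathbf{v})=\|(I-\mathbf{v}\mathbf{v}^\top)\nabla\mathcal{N}(\mathbf{v})\|_2^2\ge0$, so $\mathcal{N}(\mathbf{v})$ stays $\ge\beta$ on $[t_1,T^*)$. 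Then $\beta\,r^L\le\mathcal{N}(\mathbf{u})\le M\,r^L$, and with $s:=r^2$ this reads $2L\beta\,s^{L/2}\le\dot s\le 2LM\,s^{L/2}$. Because $L/2>1$, the lower inequality forces $s(t)\to\infty$ at a finite $T^*$, and the two-sided inequality pins the blow-up rate, $s(t)^{1-L/2}\asymp T^*-t$, i.e. $r(t)^{L-2}\asymp(T^*-t)^{-1}$ as $t\uparrow T^*$; hence $\tau(t)=\int_0^t r^{L-2}\to\infty$, so the angular flow $\mathbf{v}(\tau)$ is defined for all $\tau\ge0$ on the compact sphere. Definability then gives $\mathbf{v}(\tau)\to\mathbf{v}_\infty$ with $(I-\mathbf{v}_\infty\mathbf{v}_\infty^\top)\nabla\mathcal{N}(\mathbf{v}_\infty)=\mathbf{0}$; Euler's identity converts this to $\nabla\mathcal{N}(\mathbf{v}_\infty)=L\,\mathcal{N}(\mathbf{v}_\infty)\,\mathbf{v}_\infty$, the KKT condition for \cref{const_opt_ncf}, and $\mathcal{N}(\mathbf{v}_\infty)\ge\beta>0$ makes it non-negative. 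Since $\mathbf{u}(t)/\|\mathbf{u}(t)\|_2=\mathbf{v}(\tau(t))\to\mathbf{v}_\infty$ as $t\uparrow T^*$, this is possibility 1. The delicate points are (a) that the reparametrization $\tau(\cdot)$ is an honest bijection of $[0,T^*)$ onto $[0,\infty)$ --- exactly the blow-up-rate estimate, where the hypothesis $L>2$ is essential (for $L=2$ the radius blows up only at infinite time, as in \citet{kumar_dc}) --- and (b) the correct invocation of the o-minimal/{\L}ojasiewicz convergence result for the spherical gradient flow and the edge cases $\mathbf{u}_0=\mathbf{0}$ and trajectories tending to $\mathbf{0}$, which are dispatched by the uniqueness remark and the monotonicity of $\mathcal{N}$ along the flow.
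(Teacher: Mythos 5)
Your proposal is correct, and the skeleton (the dichotomy on the sign of $\mathcal{N}$ along the flow, monotonicity of $\mathcal{N}(\mathbf{u}(t))$ and of $\|\mathbf{u}(t)\|_2^2$ via Euler's identity, finite-time blow-up when $\mathcal{N}$ becomes positive, and definability as the engine of directional convergence) matches the paper's. But the mechanism you use for directional convergence is genuinely different. The paper stays in the original time variable, shows $\tilde{\mathcal{N}}(\mathbf{u})=\mathcal{N}(\mathbf{u})/\|\mathbf{u}\|_2^L$ is increasing and bounded, and bounds the length of the curve $t\mapsto \mathbf{u}(t)/\|\mathbf{u}(t)\|_2$ directly via the \emph{unbounded} Kurdyka--Lojasiewicz inequality of \citet{ji_matus_align} (applied to $\overline{f}-\tilde{\mathcal{N}}$, with a rescaling $\hat{\mathbf{u}}=2\mathbf{u}/\eta$ in the bounded case to force $\|\cdot\|_2>1$); it then establishes the KKT property of the limit by a separate contradiction argument that needs the one-sided blow-up rate $\|\mathbf{u}(t)\|_2\geq \kappa/(T^*-t)^{1/(L-2)}$. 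You instead reparametrize time by $d\tau=r^{L-2}dt$, turning the angular dynamics into the autonomous spherical flow $\mathbf{v}'=(I-\mathbf{v}\mathbf{v}^\top)\nabla\mathcal{N}(\mathbf{v})$ (equivalently the gradient flow of $\tilde{\mathcal{N}}$, since the sphere is invariant), use the two-sided differential inequality $2L\beta s^{L/2}\le \dot s\le 2LMs^{L/2}$ to show both finite-time blow-up and that $\tau(t)\to\infty$, and then invoke the standard bounded-trajectory convergence theorem for definable gradient flows; criticality of the limit plus Euler's identity gives the KKT condition immediately, with no contradiction argument. What your route buys: the reduction to a compact sphere removes the need for the unbounded KL lemma and the $2/\eta$ rescaling, it handles the ``unbounded in finite time'' and ``bounded forever'' scenarios uniformly, and your two-sided rate recovers (and sharpens) the paper's $\kappa/(T^*-t)^{1/(L-2)}$ estimate. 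What it costs: the convergence statement you need is for the spherical (Riemannian) flow, which is not literally the lemma the paper cites, so you must either quote Kurdyka's theorem for bounded definable gradient trajectories or observe that $\mathbf{v}(\tau)$ is a bounded trajectory of the Euclidean gradient flow of the definable function $\tilde{\mathcal{N}}$ on $\mathbb{R}^k\setminus\{\mathbf{0}\}$ and apply the KL machinery there; you flag exactly this, and the gap is routine to fill. Your Case (I) is likewise slightly stronger than the paper's (full convergence of the bounded trajectory rather than only directional convergence), which is fine under the same definability citation.
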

	To prove the above lemma, we require the neural networks to be definable with respect to an o-minimal structure. Such a requirement allows us to use the unbounded version of Kurdyka-Lojasiewicz inequality proved in \cite{ji_matus_align} for establishing directional convergence.  A similar result was demonstrated in \cite{kumar_dc} for two-homogeneous networks. However, in that instance, the solution $\mathbf{u}(t)$ remains finite for all finite time (see \Cref{ncf_two}), whereas in our case, $\mathbf{u}(t)$ may become unbounded at some finite time. Also, for our proof, we had to derive the rate at which $\mathbf{u}(t)$ grows, showing that, if $\mathbf{u}(t)$ becomes unbounded, then $\|\mathbf{u}(t)\|_2\geq \kappa/(T^*-t)^{1/(L-2)}$, for some $\kappa>0$ and for all $t$ near $T^*$. 
	
	We next explain our proof technique for \Cref{thm_align_init} assuming square loss is used for training. The evolution of  $\mathbf{w}(t)$ is governed by the differential equation
	\begin{equation}
	\dot{\mathbf{w}} = \sum_{i=1}^n \left({y}_i -  \mathcal{H}(\mathbf{x}_i;\mathbf{w}) \right)\nabla \mathcal{H}(\mathbf{x}_i;\mathbf{w}) = \mathcal{J}(\mathbf{X};\mathbf{w})^\top \left(\mathbf{y} -  \mathcal{H}(\mathbf{X};\mathbf{w}) \right), \mathbf{w}(0) = \delta\mathbf{w}_0,
	\label{gf_upd_sq_loss}
	\end{equation}
	where (as mentioned) $\delta>0$, and recall $\mathcal{J}(\mathbf{X};\mathbf{w}):\mathbb{R}^{k}\rightarrow \mathbb{R}^{n}$ is the Jacobian of the function $\mathcal{H}(\mathbf{X};\mathbf{w})$ with respect to $\mathbf{w}$.  We define $\mathbf{s}(t) = \frac{1}{\delta}\mathbf{w}\left(\frac{t}{\delta^{L-2}}\right)$; then, $\mathbf{s}(0) = \mathbf{w}_0$. Further, by using the $L$-homogeneity of $\mathcal{H}\left(\mathbf{X};\mathbf{w}\right)$ and $(L-1)$-homogeneity of $\mathcal{J}(\mathbf{X};\mathbf{w})$ (\Cref{euler}), we can derive the evolution of $\mathbf{s}(t)$ as follows
	\begin{align}
	\frac{d \mathbf{s}}{dt} =  \frac{d}{dt}\left(\frac{1}{\delta}\mathbf{w}\left(\frac{t}{\delta^{L-2}}\right)\right) &=  \frac{1}{\delta^{L-1}}\dot{\mathbf{w}}\left(\frac{t}{\delta^{L-2}}\right)\nonumber\\
	&=  \frac{1}{\delta^{L-1}}\mathcal{J}\left(\mathbf{X};\mathbf{w}\left(\frac{t}{\delta^{L-2}}\right)\right)^\top\left(\mathbf{y} - \mathcal{H}\left(\mathbf{X};\mathbf{w}\left(\frac{t}{\delta^{L-2}}\right)\right)\right)\nonumber\\
	&=\mathcal{J}\left(\mathbf{X};\frac{1}{\delta}\mathbf{w}\left(\frac{t}{\delta^{L-2}}\right)\right)^\top\left(\mathbf{y} - \delta^L\mathcal{H}\left(\mathbf{X};\frac{1}{\delta}\mathbf{w}\left(\frac{t}{\delta^{L-2}}\right)\right)\right)\nonumber.
	\end{align} 
	Overall, this implies
	\begin{equation}
	\dot{\mathbf{s}} = \mathcal{J}\left(\mathbf{X};\mathbf{s}\right)^\top(\mathbf{y} - \delta^L \mathcal{H}\left(\mathbf{X};\mathbf{s}\right)), \mathbf{s}(0) = \mathbf{w}_0.
	\label{S_dyn}
	\end{equation}
	Now, we compare the dynamics for $\mathbf{s}(t)$ in \cref{S_dyn} with the dynamics of gradient flow of the NCF defined with respect to $\mathbf{y}$ and neural network $\mathcal{H}$,
	\begin{equation}
	\dot{\mathbf{u}} =    \nabla \mathcal{N}_{\mathbf{y},\mathcal{H}}(\mathbf{u}) =  \mathcal{J}(\mathbf{X};\mathbf{u})^\top \mathbf{y}, {\mathbf{u}}(0) = \mathbf{w}_0.
	\label{main_flow_sketch}
	\end{equation}
	Observe that if $\delta = 0$, then the two dynamics are exactly the same, and since $\mathbf{s}(0) = \mathbf{w}_0 = \mathbf{u}(0)$, we have that $\mathbf{s}(t)$ and $\mathbf{u}(t)$ will also be same. For $\delta>0$, there is an extra term $ \delta^L \mathcal{H}\left(\mathbf{X};\mathbf{s}\right)$ present in \cref{S_dyn}. For small $\delta$, we show that this extra term is small and while $\mathbf{s}(t)$ and $\mathbf{u}(t)$ are not exactly the same, the difference between them is small, i.e., $\|\mathbf{s}(t) - \mathbf{u}(t)\|_2$ is small. Moreover, we show that the difference can be made sufficiently small for a sufficiently long time by choosing $\delta$ small enough. We combine this fact with \Cref{ncf_convg} to prove \Cref{thm_align_init}. Specifically, from \Cref{ncf_convg}, we know that $\mathbf{u}(t)$ will either converge to $\mathbf{0}$ or converge in direction to a KKT point of the constrained NCF. Therefore, we first choose a time (say $T$) such that $\mathbf{u}(T)$ has approximately converged in direction to a KKT point of the constrained NCF or is in the neighborhood of $\mathbf{0}$. Then, we choose $\delta$ sufficiently small such that 
	$\|\mathbf{s}(t) - \mathbf{u}(t)\|_2$ is small for all $t\in [0,T]$. Since $\mathbf{s}(t) = \frac{1}{\delta}\mathbf{w}\left(\frac{t}{\delta^{L-2}}\right)$, we have that $\|\frac{1}{\delta}\mathbf{w}\left(\frac{T}{\delta^{L-2}}\right) - \mathbf{u}(T)\|_2$ is also small. Finally, since $\delta$ is a positive scalar, and $\mathbf{u}(T)$ approximately converges in direction to a KKT point of the constrained NCF or is approximately $\mathbf{0}$, it follows that $\mathbf{w}\left(\frac{T}{\delta^{L-2}}\right)$ also approximately converges in direction to a KKT point of the constrained NCF or is approximately $\mathbf{0}$. 
	
	As a quick aside, we note that there are two key aspects in the definition $\mathbf{s}(t) = \frac{1}{\delta}\mathbf{w}\left(\frac{t}{\delta^{L-2}}\right)$. First, we have divided by $\delta$. We know that $\delta$ controls the scale of initialization and is assumed to be small. Dividing by $\delta$ removes that scaling and makes $\mathbf{s}(t)$ a constant scale vector. More importantly, dividing by $\delta$ does not alter the direction, i.e., the direction of $\mathbf{s}(t)$ and $\mathbf{w}(t)$ are always the same. Thus, establishing directional convergence of $\mathbf{s}(t)$ will also imply directional convergence of $\mathbf{w}(t)$. Second, we rescale the time by $\delta^{L-2}$. To understand the importance of this time rescaling, let us look at the magnitude of the gradient of the loss at initialization in terms of $\delta$. Since $\delta$ is small and $\mathbf{w}_0$ has unit norm, $\|\mathcal{J}(\mathbf{X};\mathbf{w}(0))^\top \left(\mathbf{y} -  \mathcal{H}(\mathbf{X};\mathbf{w}(0)) \right)\|_2 \approx \|\mathcal{J}(\mathbf{X};\mathbf{w}(0))^\top \mathbf{y}\|_2 = O(\delta^{L-1})$, where in the last equality we used $(L-1)$-homogeneity of the Jacobian. We observe that for initialization of scale $\delta$, the gradient scales as $\delta^{L-1}$. Thus, in the early stages the gradient will have negligible effect on the weights, in terms of both magnitude and direction. Moreover, based on the relative scaling of the gradient and the initialization, the gradient will only start impacting the weights after $O(1/\delta^{L-2})$ time has elapsed. Hence, for smaller initialization, the gradient flow will take longer to make an impact on the weights. Thus, in a way, rescaling time by $\delta^{L-2}$ brings the time of $\mathbf{s}(t)$ to a ``constant scale''. We also highlight the importance of homogeneity. It allows the above two changes to interact nicely with the gradient flow and results in dynamics of $\mathbf{s}(t)$ that are essentially independent of $\delta$, provided $\delta$ is small.
	
	Finally, in comparison to \cite{kumar_dc}, which studies early directional convergence for two-homogeneous neural networks, the main conceptual innovation in this paper is the time rescaling by $\delta^{L-2}$. In fact, the proof in that work relies on showing that the difference between $\mathbf{w}(t)/\delta$ and $\mathbf{u}(t)$ remains small for a sufficiently long time. To see a motivation for this, note that we can get \cref{S_dyn} from \cref{gf_upd_sq_loss} for $L=2$ as well. However, for $L=2$, the time rescaling factor $\delta^{L-2}$ will become unity, yielding $\mathbf{s}(t) = \mathbf{w}(t)/\delta$. Now, since the gradient flow dynamics for the NCF of two-homogeneous neural networks also converges to a KKT point of the constrained NCF,  to establish early directional convergence one must show that the difference between $\mathbf{w}(t)/\delta$ and $\mathbf{u}(t)$ remains small for a sufficiently long time. Another notable difference with \cite{kumar_dc} is in the gradient flow dynamics of the NCF. As mentioned previously, from \Cref{ncf_convg}, we know that  for $L>2$, the gradient flow dynamics of the NCF can become unbounded at some finite time, whereas for $L=2$, the dynamics remains finite for finite time. This difference necessitates a more careful analysis of the early stages of training in order to ensure directional convergence for deep homogeneous neural networks.
	
	\paragraph{Challenges for non-differentiable neural networks:} \Cref{thm_align_init} holds for neural networks with locally Lipschitz gradient, and thus excludes ReLU neural networks. We briefly describe the challenges in extending \Cref{thm_align_init} to ReLU networks in \Cref{relu_ch}, which also includes an empirical evidence that early directional convergence also occurs in ReLU networks.
	
	\subsubsection{A Closer Look at the Weights}\label{closer_look}
	\begin{figure}
		\centering
		\begin{minipage}[b]{0.45\linewidth}
			\centering
			\includegraphics[width=1\linewidth]{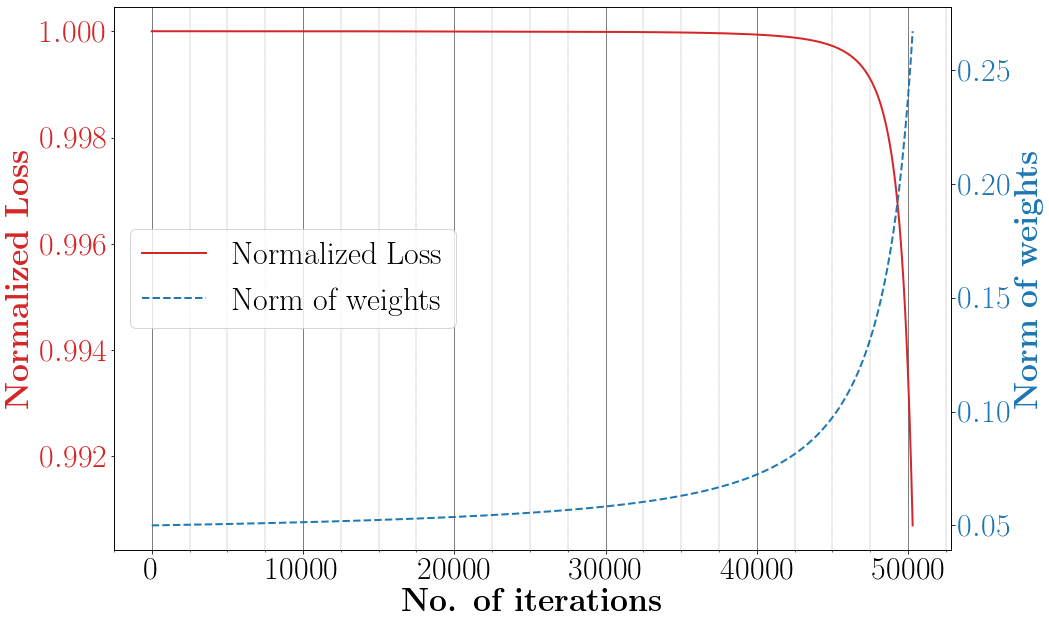}
			\subcaption{}
			\label{near_init_loss_evol}
		\end{minipage}\hfill
		\begin{minipage}[b]{0.45\linewidth}
			\centering
			\includegraphics[width=1\linewidth]{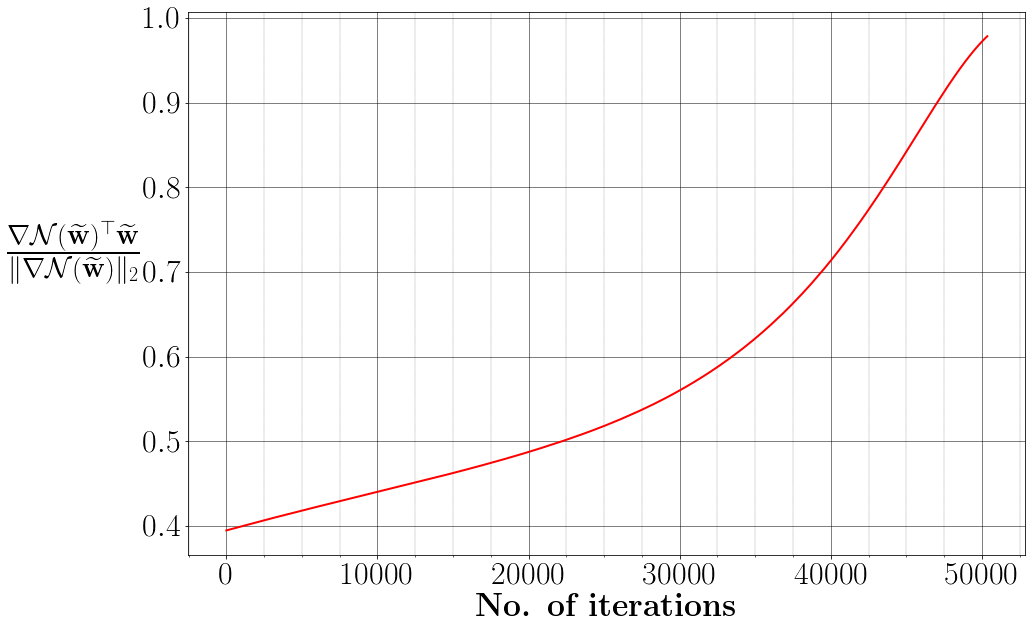}
			\subcaption{}
			\label{near_init_KKT}
		\end{minipage}
		\caption{  Early training dynamics of a squared-ReLU neural network. Panel $(a)$: the evolution of training loss (normalized with respect to loss at initialization) and the $\ell_2$-norm of all the weights with iterations.  Panel $(b)$: the evolution of $\nabla\mathcal{N}(\widetilde{\mathbf{w}}(t))^\top\widetilde{\mathbf{w}}(t)/\|\nabla\mathcal{N}(\widetilde{\mathbf{w}}(t))\|_2$ (a measure of directional convergence of $\mathbf{w}(t)$).  Note that the loss has barely changed, and the norm of weights has increased by a constant factor but remains small. Also, the weights approximately converge in direction to a KKT point of the constrained NCF. }
		\label{full_loss_traj}
	\end{figure}
	In this section, we use a toy example to highlight the emergence of additional structural properties in the weights during the early stage of training that are not explained by \Cref{thm_align_init}.
	
	We train the following $3-$homogeneous neural network $\mathcal{H}(\mathbf{x};\mathbf{v},\mathbf{U}) = \mathbf{v}^\top\max(\mathbf{U}\mathbf{x},\mathbf{0})^2$, where $\mathbf{v}\in\mathbb{R}^{20\times1}$ and $\mathbf{U}\in\mathbb{R}^{20\times10}$, by minimizing the square loss with respect to the output of a smaller neural network $\mathcal{H}^*(\mathbf{x};\mathbf{v}_*,\mathbf{U}_*) = \mathbf{v}_*^\top\max(\mathbf{U}_*\mathbf{x},\mathbf{0})^2$, where the entries of $\mathbf{v}_*\in\mathbb{R}^{2\times1}$, $\mathbf{U}_*\in\mathbb{R}^{2\times10}$ are drawn from the standard normal distribution. The training data has 100 points sampled uniformly from unit sphere in $\mathbb{R}^{10}$. Let $\mathbf{w}$ be a vector containing all the entries of $\mathbf{v}$ and $\mathbf{U}$, and $\widetilde{\mathbf{w}} = \mathbf{w}/\|\mathbf{w}\|_2.$ We train for 50360 iterations using gradient descent with step-size $2\cdot10^{-2}$, and the initialization $\mathbf{w}(0) = \delta\mathbf{w}_0$, where $\delta=0.05$ and $\mathbf{w}_0$ is a random unit norm vector. \Cref{near_init_loss_evol} depicts the evolution of training loss and $\ell_2-$norm of the weights.  Note that the loss almost remains the same, and the norm of the weights has increased but only by a constant factor and is still small. A unit norm vector is a KKT point of the constrained NCF if the vector and the gradient of the NCF at that vector are parallel (see \Cref{kkt_ncf}). Hence, to measure the directional convergence of $\mathbf{w}$ to the KKT point of the constrained NCF, in \Cref{near_init_KKT} we plot the evolution of the inner product between  $\widetilde{\mathbf{w}} (t)$ and $\nabla\mathcal{N}(\widetilde{\mathbf{w}}(t))/\|\nabla\mathcal{N}(\widetilde{\mathbf{w}}(t))\|_2$, where $\mathcal{N}(\cdot)$ denotes the NCF for this example. Clearly, in accordance with \Cref{thm_align_init}, the weights approximately converge in direction to a KKT point of the constrained NCF.
	
	We next take a closer look at the weights of the neural networks at iteration 50360, i.e., when the weights have approximately converged in direction to a KKT point of the constrained NCF. In \Cref{wt_init_fin}, we plot the normalized absolute value of the weights at initialization and at iteration 50360. At initialization the weights appear to be random, as expected, but at iteration 50360, they are more structured. The hidden weights ($\rmU$) appear to be of rank-one as only one row is non-zero, and the outer weight ($\rvv$) has a single non-zero entry. This suggests that weights converged towards a KKT point of the constrained NCF that have low-rank hidden weights. Similar behavior is observed with deeper networks and other activation functions as well (see \Cref{num_exp}), and has also been observed in some previous works \citep{dc_three_layer, atanasov_align}.
	
	Since the constrained NCF is a non-convex problem with potentially multiple KKT points, an important question arising from the above observation is why gradient descent converges towards low-rank KKT points. A more basic question, addressed in the next section, is whether low-rank KKT points always exist for the constrained NCF, and can they be characterized? Given the complex nature of the problem, their existence is not immediately clear, especially for deep neural networks. We next study the KKT points of the constrained NCF for feed-forward neural networks with Leaky ReLU and polynomial Leaky ReLU activation functions. For arbitrary training data, necessary and sufficient conditions are derived for a set of weights to be a KKT point of the NCF that have rank-one hidden weights.
	\begin{figure}
		\centering
		\begin{minipage}[b]{0.4\linewidth}
			\centering
			\includegraphics[width=1\linewidth]{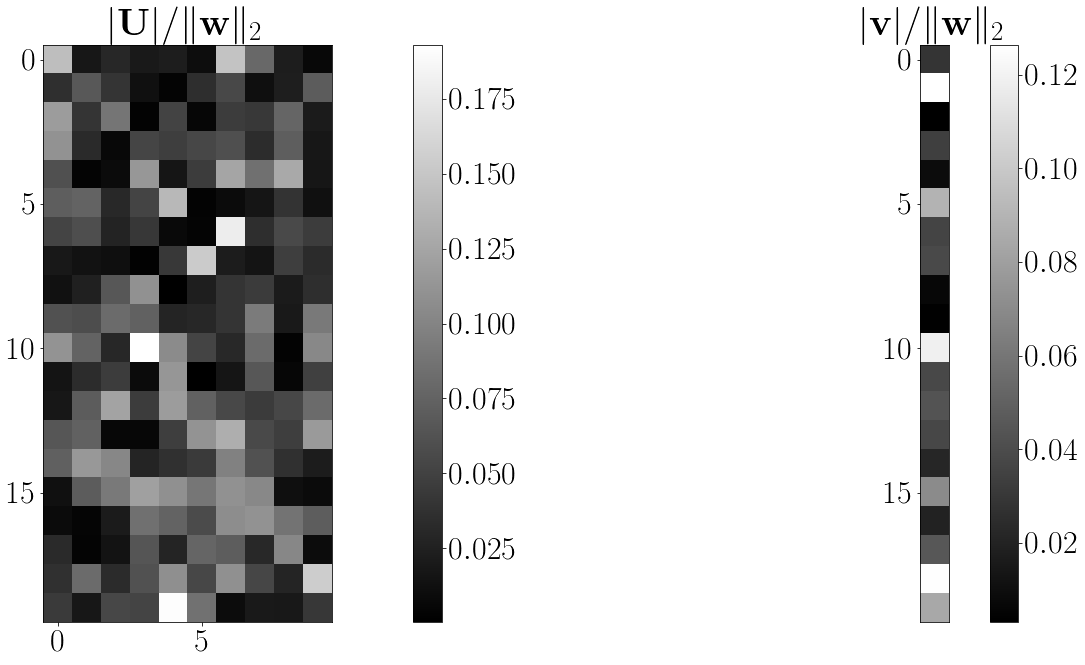}
			\subcaption{At initialization}
			\label{init_w1}
		\end{minipage}\hfill
		\begin{minipage}[b]{0.4\linewidth}
			\centering
			\includegraphics[width=1\linewidth]{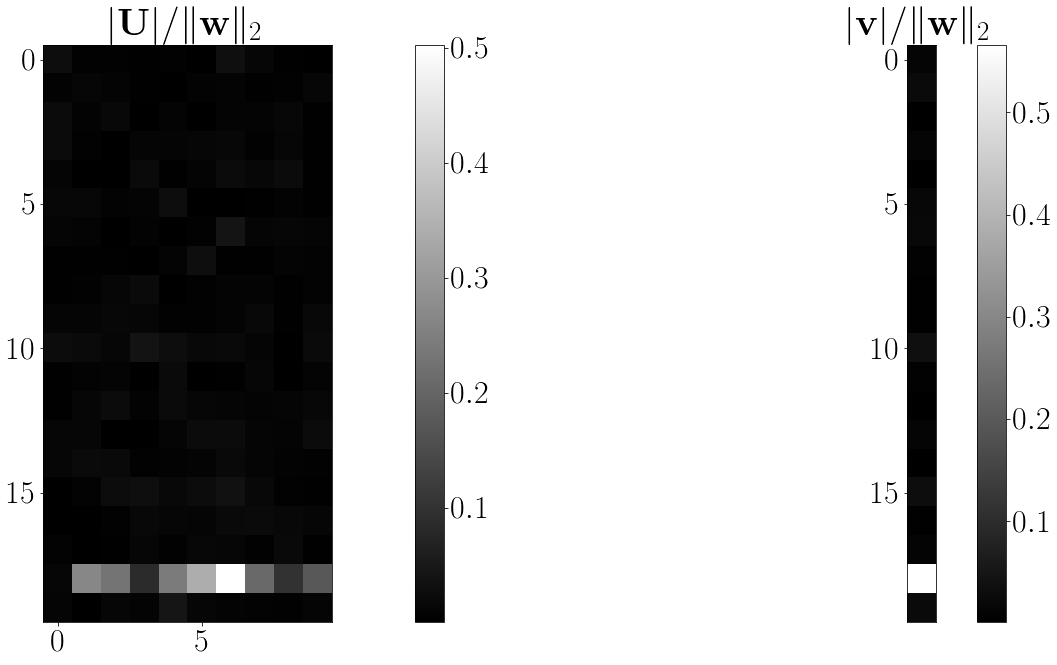}
			\subcaption{At iteration 50360}
			\label{final_w1}
		\end{minipage}
		\caption{ Normalized absolute value of the weights at different stages of training. }
		\label{wt_init_fin}
	\end{figure}
	\section{KKT Points of Constrained NCF}\label{sec_results_kkt}
	This section presents necessary and sufficient conditions for a set of weights to be a KKT point of the constrained NCF of feed-forward neural network that have rank-one hidden weights. For a given input $\rvx$, the output of an  $L$-layer feed-forward neural network $\mathcal{H}$ is defined as
	\begin{align}
	\mathcal{H}(\rvx;\rmW_1,\cdots,\rmW_L) = \rmW_L\sigma\left(\rmW_{L-1}\cdots\sigma\left(\rmW_1\rvx\right)\cdots\right)\in \sR,
	\label{f_nn}
	\end{align}
	where $\rmW_l\in\sR^{k_l\times {k_{l-1}}}$ are the trainable weights, and $k_0=d$ and $k_L = 1$.  The activation function $\sigma(\cdot)$ is applied coordinate-wise to vectors, and in this section is assumed to be of the form $\sigma(x) = \max(\alpha x,x)^p$, where $p\in \sN$ and $\alpha\in \sR$, which includes deep ReLU and polynomial ReLU neural networks. The corresponding constrained NCF is the following optimization problem: 
	\begin{align}
	\max_{\rmW_1,\cdots,\rmW_L} \mathcal{N}(\rmW_1,\cdots,\rmW_L) := \rvy^\top\gH(\rmX;\rmW_1,\cdots,\rmW_L), \text{ s.t. }  \|\rmW_1\|_F^2 + \cdots +\|\rmW_L\|_F^2 = 1.
	\label{ncf_gn_const}
	\end{align}
	Note that, we have considered the NCF  in the above equation with respect to $\rvy$ for convenience, it may be replaced with any other vector. 
	\subsection{Leaky ReLU}
	We first consider the activation function of the form $\sigma(x) = \max(x,\alpha x)$, for some $\alpha\in\sR$.
	
	\begin{theorem}\label{L_layer_relu_ncf}
		Let $\mathcal{H}$ be an $L-$layer feed-forward neural network as defined in \cref{f_nn}, where $L\geq 2$ and $\sigma(x) = \max(x,\alpha x)$ for some $\alpha\in \sR$. Let $\overline{\rmW}_{1:L}$ be defined as
		$$\overline{\rmW}_{1:L}:= (\overline{\rmW}_1,\cdots,\overline{\rmW}_{L-1},\overline{\rmW}_L)= (\rva_1\rvb_1^\top,\cdots,\rva_{L-1}\rvb_{L-1}^\top,\overline{\rvw}^\top),  \mbox{	where $\|\rva_l\|_2 = \|\rvb_l\|_2,$ for all $l\in [L-1].$}$$ 
		\begin{enumerate}[label=(\roman*)]
			\item Suppose $\alpha \neq 1$ and the entries of $\{\rva_l\}_{l=1}^{L-1},\{\rvb_l\}_{l=2}^{L-1}$ are non-negative, then $\overline{\rmW}_{1:L}$ is a non-zero KKT point of \cref{ncf_gn_const} if and only if the following holds:  $\|\rva_l\|_2^2 = 1/\sqrt{L}$, for all $l\in [L-1],$ $\|\overline{\rvw}\|_2^2 = 1/L$, $\rva_l = \rvb_{l+1}$, for all $l\in [L-2]$, and $q\rva_{L-1} = L^{1/4}\overline{\rvw}$, for some $q\in \{-1,1\}$. Further, $\rvb_1$ is a non-zero KKT point of
			\begin{align}
			\max_{\rvu}\sum_{i=1}^n y_i\sigma^{L-1}(\rvx_i^\top\rvu), \text{ such that } \|\rvu\|_2^2=1/\sqrt{L}.
			\label{small_const_ncf_lrl_necc}
			\end{align} 
			\item Next, suppose $\alpha = 1$, then $\overline{\rmW}_{1:L}$ is a non-zero KKT point of \cref{ncf_gn_const} if and only if the following holds: $\|\rva_l\|_2^2 = 1/\sqrt{L}$, for all $l\in [L-1],$ $\|\overline{\rvw}\|_2^2 = 1/L$, $q_l\rva_l = \rvb_{l+1},$ for all $l\in [L-2]$, and $q_{L-1}\rva_{L-1} = L^{1/4}\overline{\rvw}$, where $q_l \in \{-1,1\}$. Further, $\rvb_1$ is a non-zero KKT point of
			\begin{align}
			\max_{\rvu} \sum_{i=1}^ny_i\sigma^{L-1}(\rvx_i^\top\rvu), \text{ such that } \|\rvu\|_2^2=1/\sqrt{L}.
			\label{small_const_ncf_lin}
			\end{align}
		\end{enumerate}

	\end{theorem}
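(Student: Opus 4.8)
The plan is to convert the constrained problem into per-layer first-order conditions and then lean on the homogeneity structure of $\mathcal{H}$. Since \cref{ncf_gn_const} maximizes $\mathcal{N}$ over the unit sphere, at a KKT point $\overline{\rmW}_{1:L}$ there is a single multiplier $\lambda\in\sR$ and a selection of Clarke subgradients with $\partial_{\rmW_l}\mathcal{N}\ni\lambda\rmW_l$ for every $l\in[L]$, plus the norm constraint. Two homogeneity facts are used throughout: $\mathcal{H}(\rvx;\rmW_1,\dots,\rmW_L)$ is positively $1$-homogeneous in each single block $\rmW_l$ (scaling only $\rmW_l$ by $c>0$ scales $\rmW_l\rvh_{l-1}$ by $c$, which then passes through every later $\sigma$ by $1$-homogeneity of $\sigma$), and it is $L$-homogeneous jointly. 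From the per-block homogeneity and Euler's identity, $\langle\partial_{\rmW_l}\mathcal{N},\rmW_l\rangle=\mathcal{N}$ at the participating subgradient, so stationarity forces $\lambda\|\rmW_l\|_F^2=\mathcal{N}$ for all $l$; summing and using $\sum_l\|\rmW_l\|_F^2=1$ gives $\lambda=L\mathcal{N}$, hence $\mathcal{N}\,(L\|\rmW_l\|_F^2-1)=0$. Therefore whenever $\mathcal{N}\neq0$ (equivalently $\lambda\neq0$, which is exactly the ``non-zero KKT point'' hypothesis) every $\|\rmW_l\|_F^2=1/L$; since $\|\rmW_l\|_F^2=\|\rva_l\|_2^2\|\rvb_l\|_2^2=\|\rva_l\|_2^4$ for $l\le L-1$ and $\|\rmW_L\|_F^2=\|\overline{\rvw}\|_2^2$, this is precisely $\|\rva_l\|_2^2=1/\sqrt{L}$ and $\|\overline{\rvw}\|_2^2=1/L$.

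For the ``only if'' direction of the $\alpha\neq1$ bullet I would compute the forward and backward passes at the candidate point. Because $\sigma(cx)=c\,\sigma(x)$ for $c\ge0$ and the entries of $\rva_l$ are non-negative, the pre-activations collapse to rank one: $\rvz_l^{(i)}=\beta_l^{(i)}\rva_l$ and $\rvh_l^{(i)}=\sigma(\beta_l^{(i)})\rva_l=:\gamma_l^{(i)}\rva_l$, with $\beta_1^{(i)}=\rvb_1^\top\rvx_i$, $\beta_l^{(i)}=(\rvb_l^\top\rva_{l-1})\gamma_{l-1}^{(i)}$, $\gamma_l^{(i)}=\sigma(\beta_l^{(i)})$, and output $\gamma_{L-1}^{(i)}(\overline{\rvw}^\top\rva_{L-1})$. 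The same structure survives backpropagation: $\diag(\sigma'(\rvz_l^{(i)}))\rva_l=\sigma'(\beta_l^{(i)})\rva_l$ (again using $\rva_l\ge0$), so the backpropagated errors satisfy $\bm{\delta}_l^{(i)}=r_l^{(i)}\rva_l$ where the scalars $r_l^{(i)}$ obey a downward recursion in $\sigma'(\beta_l^{(i)})$. Hence $\partial_{\rmW_L}\mathcal{N}=\big(\sum_i y_i\gamma_{L-1}^{(i)}\big)\rva_{L-1}^\top$, $\partial_{\rmW_l}\mathcal{N}=\big(\sum_i r_l^{(i)}\gamma_{l-1}^{(i)}\big)\rva_l\rva_{l-1}^\top$ for $2\le l\le L-1$, and $\partial_{\rmW_1}\mathcal{N}=\rva_1\big(\sum_i r_1^{(i)}\rvx_i\big)^\top$. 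Processing layers from $l=L$ downward: stationarity at layer $L$ forces $\overline{\rvw}\parallel\rva_{L-1}$, which with the norm identities gives $q\rva_{L-1}=L^{1/4}\overline{\rvw}$ for some $q\in\{-1,1\}$; stationarity at layer $l$ (for $2\le l\le L-1$) forces $\rvb_l\parallel\rva_{l-1}$, and since $\|\rvb_l\|_2=\|\rva_{l-1}\|_2$ with both vectors entrywise non-negative and non-zero, $\rvb_l=\rva_{l-1}$, i.e.\ $\rva_l=\rvb_{l+1}$ for all $l\in[L-2]$ (the scalar coefficients cannot vanish, else $\rvb_l=\vzero$, contradicting $\|\rvb_l\|_2^2=1/\sqrt L$). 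Once $\rvb_{l+1}=\rva_l$ is known, $\rvb_l^\top\rva_{l-1}=\|\rva_{l-1}\|_2^2=1/\sqrt L$, so $\beta_l^{(i)}=\sigma^{l-1}(\rvx_i^\top\rvb_1)/(\sqrt L)^{l-1}$; by $0$-homogeneity of $\sigma'$ this yields $\prod_{l=1}^{L-1}\sigma'(\beta_l^{(i)})=(\sigma^{L-1})'(\rvx_i^\top\rvb_1)$, so $r_1^{(i)}=c_0\,y_i(\sigma^{L-1})'(\rvx_i^\top\rvb_1)$ with $c_0$ independent of $i$. The layer-$1$ stationarity $\sum_i r_1^{(i)}\rvx_i=\lambda\rvb_1$ then becomes $\sum_i y_i(\sigma^{L-1})'(\rvx_i^\top\rvb_1)\rvx_i\parallel\rvb_1$, which together with $\|\rvb_1\|_2^2=1/\sqrt L$ and $\mathcal{N}\neq0\iff\sum_i y_i\sigma^{L-1}(\rvx_i^\top\rvb_1)\neq0$ is exactly the statement that $\rvb_1$ is a non-zero KKT point of \cref{small_const_ncf_lrl_necc}.

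For the ``if'' direction I would reverse this computation: given $\rva_l=\rvb_{l+1}$, $q\rva_{L-1}=L^{1/4}\overline{\rvw}$, and $\rvb_1$ a non-zero KKT point of \cref{small_const_ncf_lrl_necc}, the norm bookkeeping forces $\|\rva_l\|_2^2=1/\sqrt L$, $\|\overline{\rvw}\|_2^2=1/L$, so $\overline{\rmW}_{1:L}$ is feasible, and substituting the collapsed forward/backward passes back in shows $\partial_{\rmW_l}\mathcal{N}\ni\lambda\rmW_l$ for the common $\lambda=L\mathcal{N}\neq0$, i.e.\ a non-zero KKT point. The $\alpha=1$ bullet is the specialization $\sigma=\mathrm{id}$: the network is the linear map $\rmW_L\cdots\rmW_1$, the coordinate-wise factoring needs no sign hypothesis (so the non-negativity of $\rva_l,\rvb_l$ is dropped), the step $\rvb_l\parallel\rva_{l-1}$ with equal norms now only gives $\rvb_l=\pm\rva_{l-1}$ (the independent signs $q_l\in\{-1,1\}$), and the reduced problem becomes $\max_{\|\rvu\|_2^2=1/\sqrt L}\rvu^\top\rmX\rvy$, matching \cref{small_const_ncf_lin}.

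The step I expect to be the main obstacle is the careful treatment of the Clarke subdifferential of $\sigma$ at its kink, i.e.\ when some $\beta_l^{(i)}=0$ (so $\rvz_l^{(i)}=\vzero$) or when $\rva_l$ has zero entries. There $\sigma'$ is an interval rather than a single value, so the identities $\diag(\sigma'(\rvz_l^{(i)}))\rva_l=\sigma'(\beta_l^{(i)})\rva_l$ and $\prod_l\sigma'(\beta_l^{(i)})=(\sigma^{L-1})'(\rvx_i^\top\rvb_1)$ must be interpreted as set-valued inclusions that still force the rank-one structure, and for the ``if'' direction one must exhibit a single consistent selection of subgradients over all layers and data points realizing $\partial_{\rmW_l}\mathcal{N}\ni\lambda\rmW_l$ simultaneously. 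This is the delicate part; the rest is linear algebra on top of the homogeneity identities set up above.
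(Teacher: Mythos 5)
Your skeleton largely tracks the paper's proof: per-layer stationarity, collapse of the forward pass via positive homogeneity and non-negativity of the $\rva_l$'s, parallelism of $\rvb_{l+1}$ with $\rva_l$ (and of $\overline{\rvw}$ with $\rva_{L-1}$) read off from the rank-one right factor of the layer-wise subgradient, and reduction of the layer-one condition to \cref{small_const_ncf_lrl_necc}. Your norm derivation (per-block $1$-homogeneity plus Euler gives $\lambda\|\rmW_l\|_F^2=\mathcal{N}$, hence $\|\rmW_l\|_F^2=1/L$ when $\mathcal{N}\neq0$) is a legitimate alternative to the paper's \Cref{balance_ncf}; like that lemma it only covers non-zero KKT points, and it does not give the entrywise balance $|\rva_l|=|\rvb_{l+1}|$, but you compensate by extracting parallelism from stationarity at every layer, which works. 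Also note that your identity for the backpropagated error ("$\delta_l^{(i)}=r_l^{(i)}\rva_l$") presupposes alignment at layer $l+1$, so it must be organized as the top-down induction you hint at; the conclusions you actually use at each layer only need the right factor $\rva_{l-1}^\top$, so this is repairable.

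The genuine gap is exactly the point you defer as "the main obstacle": flagging it is not closing it, and the route you sketch (set-valued backprop products, $\prod_l\sigma'(\beta_l^{(i)})=(\sigma^{L-1})'(\rvx_i^\top\rvb_1)$) runs into the fact that Clarke's chain rule (\Cref{chain_rule_non_diff}, as used in \Cref{gd_nn}) is only a one-sided inclusion: $\partial_{\rmW_l}\mathcal{N}$ is contained in the convex hull of such products, not equal to it. That inclusion points the wrong way in both implications. For "only if", knowing that $-\lambda\overline{\rmW}_1$ lies in the chain-rule superset does not certify that the resulting row-one vector is an element of $\partial_{\rvu}\bigl(\sum_i y_i\sigma^{L-1}(\rvx_i^\top\rvu)\bigr)$ at $\rvb_1$, since at a kink a product of per-layer selections need not be a Clarke subgradient of the composition. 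For "if", exhibiting a consistent selection only places $\lambda\overline{\rmW}_l$ in the superset, not in $\partial_{\rmW_l}\mathcal{N}$ itself. The paper explicitly notes the chain rule is not usable here and instead argues directly: freeze all weights except one row $\rvw$ of $\rmW_l$ and show, by a case analysis on the sign of $\rvb_1^\top\rvx_i$ together with positive homogeneity and $\rva_l\ge0$, that in a neighborhood of the KKT point the NCF equals a positive constant times $\sum_i y_i\sigma^{L-1}(\rvw^\top\rvx_i)$ plus a constant (and is locally constant in rows with $\rva_{lj}=0$); local equality of functions gives exact equality of Clarke subdifferentials, which is what both directions need. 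Your proposal does not contain this idea, so the nonsmooth case ($\alpha\neq1$, and the kinks of $\sigma$) is not actually handled; the $\alpha=1$ bullet, being smooth, is fine and is a correct variant of the paper's argument.
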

	
	In the above theorem, $\{\rva_i,\rvb_i\}_{i=1}^{L-1}$ are vectors used to build the rank-one hidden weights. In both cases, we first provide the norm of these vectors. Next, for $\alpha \neq 1$ and assuming the entries of $\{\rva_l\}_{l=1}^{L-1},\{\rvb_l\}_{l=2}^{L-1}$ are non-negative, we present additional necessary and sufficient conditions, which can be divided in two parts. First is the \emph{alignment}: $\rva_l = \rvb_{l+1}$, for all $l\in [L-2]$, and $q\rva_{L-1} = L^{1/4}\overline{\rvw}$, for some $q\in \{-1,1\}$, which implies that the column-space of $\overline{\rmW}_l$  and row-space of $\overline{\rmW}_{l+1}$ must be aligned. The second part requires $\rvb_1$ to be a KKT point of \cref{small_const_ncf_lrl_necc}, a smaller optimization problem, where note that $\sigma^{L-1}(\cdot)$ denotes the function resulting from composing $\sigma(\cdot)$ with itself  $(L-1)$ times. The results for $\alpha  = 1$ are mostly similar, but do not require any non-negativity assumption.
	
	\subsection{Polynomial Leaky ReLU}
	We next provide necessary and sufficient conditions for activation function of the form $\sigma(x) = \max(x,\alpha x)^p$, for some $\alpha \in \sR$, $p\geq 2$ and $p\in\sN$. In the following theorem, for a given $p$, we define $\hat{p} \coloneqq p^{L-1}+p^{L-2}+\cdots + 1.$
	\begin{theorem} \label{L_layer_high_ncf}
		Let $\mathcal{H}$ be an $L-$layer feed-forward neural network as defined in \cref{f_nn}, where $L\geq 2$ and $\sigma(x) = \max(x,\alpha x)^p$ for some $\alpha\in \sR$, $p\geq 2$ and $p \in\sN$. Let $\overline{\rmW}_{1:L}$ be defined as
		$$\overline{\rmW}_{1:L}:= (\overline{\rmW}_1,\cdots,\overline{\rmW}_{L-1},\overline{\rmW}_L)= (\rva_1\rvb_1^\top,\cdots,\rva_{L-1}\rvb_{L-1}^\top,\overline{\rvw}^\top),  \mbox{	where $\|\rva_l\|_2 = \|\rvb_l\|_2,$ for all $l\in [L-1].$}$$ 
		
		\begin{enumerate}[label=(\roman*)]
			\item Suppose $\alpha \neq 1$ and the entries of $\{\rva_l\}_{l=1}^{L-1},\{\rvb_l\}_{l=2}^{L-1}$ are non-negative, then $\overline{\rmW}_{1:L}$ is a non-zero KKT point of \cref{ncf_gn_const} if and only if the following holds: $\|\rva_l\|_2^4 = p^{L-l}/\hat{p}$, for all $l\in [L-1],$ $\|\overline{\rvw}\|_2^2 = 1/\hat{p}$, $\rvb_l = \rva_{l-1}/p^{1/4}$,  for $2\leq l\leq L-1$, and $\overline{\rvw} = q\rva_{L-1}/(p\hat{p})^{1/4}$, for some $q\in\{-1,1\}$.  Further, for $1\leq l\leq L-1$, each non-zero entry of $\rva_l$ has identical values, and $\rvb_1$ is a non-zero KKT point of
			\begin{align}
			\max_{\rvu}\sum_{i=1}^ny_i\sigma^{L-1}(\rvx_i^\top\rvu), \text{ such that } \|\rvu\|_2^2=\sqrt{p^{L-1}/\hat{p}}.
			\label{small_const_ncf_rl_suff}
			\end{align}
			
			\item Suppose $\alpha = 1$, then $\overline{\rmW}_{1:L}$  is a non-zero KKT point of \cref{ncf_gn_const} if and only if the following holds:  $\|\rva_l\|_2^4 = p^{L-l}/\hat{p}$, for all $l\in [L-1],$ $\|\overline{\rvw}\|_2^2 = 1/\hat{p}$, $\rvb_l = q_l\rva_{l-1}/p^{1/4}, \overline{\rvw} = q_L\rva_{L-1}/(p\hat{p})^{1/4}$, if $p$ is odd, and $\rvb_l = q_l|\rva_{l-1}|/p^{1/4}, \overline{\rvw} = q_L|\rva_{L-1}|/(p\hat{p})^{1/4}$, if $p$ is even, where $q_l\in \{-1,1\}$, for all $2\leq l\leq L$. Further, for $1\leq l\leq L-1$, each non-zero entry of $\rva_l$ has identical absolute values, and $\rvb_1$ is a non-zero KKT point of
			\begin{align}
			\max_{\rvu} \sum_{i=1}^ny_i\sigma^{L-1}(\rvx_i^\top\rvu), \text{ such that } \|\rvu\|_2^2=\sqrt{p^{L-1}/\hat{p}}.
			\label{small_const_ncf_lin_p}
			\end{align}
		\end{enumerate}
	\end{theorem}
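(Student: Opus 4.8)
The plan is to exploit that at a rank-one point $\overline{\rmW}_{1:L}=(\rva_1\rvb_1^\top,\dots,\rva_{L-1}\rvb_{L-1}^\top,\overline{\rvw}^\top)$ both the forward and the backward pass of the network collapse onto fixed directions, so that the stationarity equations $\nabla_{\rmW_l}\mathcal{N}=\lambda\rmW_l$, $l\in[L]$, of \cref{ncf_gn_const} become vector identities one can match coordinate-by-coordinate. First I would reduce the forward pass: using $\sigma(cx)=c^p\sigma(x)$ for $c\ge 0$ (and for all real $c$ when $\alpha=1$, since then $\sigma(x)=x^p$) together with the hypothesis that the entries of $\{\rva_l\}_{l=1}^{L-1}$ and $\{\rvb_l\}_{l=2}^{L-1}$ are non-negative, an induction over layers shows that for every input $\rvx_i$ the pre-activation at layer $l$ is a scalar multiple of $\rva_l$ and the post-activation equals $\gamma_l\,\sigma^l(\rvb_1^\top\rvx_i)\,\rva_l^p$, where each $\gamma_l$ is an explicit non-negative product of the scalars $\beta_j:=\rvb_j^\top\rva_{j-1}^p\ge 0$. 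Consequently $\gH(\rvx_i;\overline{\rmW}_{1:L})=\Gamma\,\sigma^{L-1}(\rvb_1^\top\rvx_i)$ for a single scalar $\Gamma=\gamma_{L-1}\,\overline{\rvw}^\top\rva_{L-1}^p$, so $\mathcal{N}(\overline{\rmW}_{1:L})=\Gamma\sum_i y_i\sigma^{L-1}(\rvx_i^\top\rvb_1)$, which already exposes the link to the reduced problem \cref{small_const_ncf_rl_suff}.

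The norm claim comes from partial homogeneity: scaling $\rmW_l\mapsto c_l\rmW_l$ multiplies the output by $c_L c_{L-1}^{p}\cdots c_1^{p^{L-1}}$, so Euler's identity (cf.\ \Cref{euler}) gives $\langle\nabla_{\rmW_l}\mathcal{N},\rmW_l\rangle=p^{L-l}\mathcal{N}$; at a KKT point this equals $\lambda\|\rmW_l\|_F^2$, so summing over $l$ and using $\sum_l\|\rmW_l\|_F^2=1$ gives $\lambda=\hat{p}\,\mathcal{N}$ (in particular $\lambda\neq0$ at a non-zero KKT point) and $\|\rmW_l\|_F^2=p^{L-l}/\hat{p}$; since $\|\rva_l\rvb_l^\top\|_F^2=\|\rva_l\|_2^2\|\rvb_l\|_2^2=\|\rva_l\|_2^4$ under $\|\rva_l\|_2=\|\rvb_l\|_2$, the first assertion follows. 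For the characterization I would compute $\nabla_{\rmW_l}\mathcal{N}=\sum_i y_i\,\rvg_l^{(i)}(\rvh_{l-1}^{(i)})^\top$ by backpropagation, with $\rvg_l^{(i)}=\diag(\sigma'(\rvz_l^{(i)}))\rmW_{l+1}^\top\rvg_{l+1}^{(i)}$. By the forward reduction $\rvh_{l-1}^{(i)}$ is a scalar multiple of $\rva_{l-1}^p$ (of $\rvx_i$ when $l=1$), so $\nabla_{\rmW_l}\mathcal{N}$ is rank one with row direction $\rva_{l-1}^p$; and since $\sigma'(cz)=c^{p-1}\sigma'(z)$ for $c\ge 0$ and $\rmW_{l+1}^\top=\rvb_{l+1}\rva_{l+1}^\top$, unwinding the backward recursion shows $\rvg_l^{(i)}$ is a scalar multiple of $\rva_l^{p-1}\odot\rvb_{l+1}$ (of $\rva_{L-1}^{p-1}\odot\overline{\rvw}$ at $l=L-1$). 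Imposing $\nabla_{\rmW_l}\mathcal{N}=\lambda\rmW_l$ then forces, on the support of $\rva_l$, that $(\rvb_{l+1})_j\propto(\rva_l)_j^p$ and $(\rva_l)_j^{p-1}(\rvb_{l+1})_j\propto(\rva_l)_j$ (the latter from the column direction of the equation at layer $l$, resp.\ layer $L-1$), whence $(\rva_l)_j^{2p-2}$ is constant on the support --- each non-zero entry of $\rva_l$ has the same absolute value --- and $\rvb_{l+1}\parallel\rva_l$ (resp.\ $\overline{\rvw}\parallel\rva_{L-1}$). The norm values then pin down $\rvb_l=\rva_{l-1}/p^{1/4}$ for $2\le l\le L-1$ and $\overline{\rvw}=q\rva_{L-1}/(p\hat{p})^{1/4}$, the $+$ sign on $\rvb_l$ being forced by non-negativity while $q\in\{-1,1\}$ is free because $\overline{\rvw}$ is unconstrained.

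With all the alignments in hand, $\rva_l^{p-1}\odot\rvb_{l+1}\propto\rva_l$ (because $\rva_l$ has identical non-zero entries), so $\rvg_1^{(i)}$ is a scalar multiple of $\rva_1^p$ and the remaining stationarity equation reads $\nabla_{\rmW_1}\mathcal{N}=\rva_1\rvv^\top=\lambda\rva_1\rvb_1^\top$, where $\rvv=(\mathrm{const})\sum_i y_i\big(\prod_{k=0}^{L-2}\sigma'(\sigma^k(\rvx_i^\top\rvb_1))\big)\rvx_i=(\mathrm{const})\,\nabla_\rvu\!\big[\sum_i y_i\sigma^{L-1}(\rvx_i^\top\rvu)\big]\big|_{\rvu=\rvb_1}$, the product of derivatives equaling $(\sigma^{L-1})'$ by the chain rule. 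Hence this equation holds iff $\nabla_\rvu\!\big[\sum_i y_i\sigma^{L-1}(\rvx_i^\top\rvu)\big]\parallel\rvb_1$, i.e.\ iff $\rvb_1$ is a non-zero KKT point of \cref{small_const_ncf_rl_suff}, the constraint radius $\|\rvu\|_2^2=\|\rvb_1\|_2^2=\|\rva_1\|_2^2=\sqrt{p^{L-1}/\hat{p}}$ coming from the norm computation. This proves the ``only if''; the ``if'' direction runs the same computation backwards --- assume the alignment, identical-entry, and reduced-KKT conditions, verify all $L$ stationarity equations, and check feasibility, which telescopes ($\|\rva_l\|_2^4=\|\rva_{l-1}\|_2^4/p$ and $\sum_l p^{L-l}/\hat{p}=1$). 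The case $\alpha=1$ is handled identically but without any non-negativity, as $\sigma(x)=x^p$ is $p$-homogeneous for all real scalars: for $p$ odd signs propagate, giving $\rvb_l=q_l\rva_{l-1}/p^{1/4}$, while for $p$ even $c^p=|c|^p$ so only absolute values survive, giving $\rvb_l=q_l|\rva_{l-1}|/p^{1/4}$ and $\overline{\rvw}=q_L|\rva_{L-1}|/(p\hat{p})^{1/4}$, with reduced problem \cref{small_const_ncf_lin_p}.

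I expect the main obstacle to be establishing rigorously that at a rank-one KKT point the matrices $\nabla_{\rmW_l}\mathcal{N}$ really are rank one and that matching their row/column directions against $\rmW_l$ forces the identical-entries structure: this needs careful bookkeeping of the auxiliary scalars $\beta_j,\gamma_l$ through both passes, the sign case analysis ($\sigma'(cz)=c^{p-1}\sigma'(z)$ versus $|c|^{p-1}\sigma'(z)$; signs of $\beta_j$ and $\gamma_l$; $\alpha\neq1$ versus $\alpha=1$; $p$ odd versus even), and a check that the relevant scalars ($\beta_j$, $\Gamma$, the various proportionality constants) are non-zero at a non-zero KKT point, which is what makes the proportionality deductions legitimate.
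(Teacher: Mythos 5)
Your proposal is correct and shares the paper's overall skeleton --- collapse the forward and backward passes at a rank-one point, match the rank-one gradients $\nabla_{\rmW_l}\mathcal{N}$ against $\lambda\overline{\rmW}_l$, reduce the layer-one equation to the smaller problem \cref{small_const_ncf_rl_suff}, and verify the construction in reverse --- but it assembles the structural facts by a different mechanism. The paper first proves the entrywise balance lemma (\Cref{balance_ncf}): multiplying the stationarity equations by $\overline{\rmW}_l^\top$ and using $\sigma'(x)x=p\,\sigma(x)$ gives $\diag(\overline{\rmW}_l\overline{\rmW}_l^\top)=p\,\diag(\overline{\rmW}_{l+1}^\top\overline{\rmW}_{l+1})$, hence $|\rva_l|=p^{1/4}|\rvb_{l+1}|$ together with the layer norms; the identical-entry claim then follows by combining this entrywise balance with only the row-space condition $\rvb_{l+1}\parallel\rva_l^{p}$. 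You instead obtain the norms from the layerwise Euler identities $\langle\nabla_{\rmW_l}\mathcal{N},\overline{\rmW}_l\rangle=p^{L-l}\mathcal{N}$ summed over $l$ (the same identity the paper invokes only in its ``if'' direction), and you obtain identical entries and alignment by matching \emph{both} the column direction at layer $l$ ($\rva_l^{p-1}\odot\rvb_{l+1}\parallel\rva_l$) and the row direction at layer $l+1$ ($\rvb_{l+1}\parallel\rva_l^{p}$), giving $\rva_l^{2p-1}\parallel\rva_l$; this is legitimate here because $p\ge 2$ makes $\sigma$ continuously differentiable, so each $\nabla_{\rmW_l}\mathcal{N}$ really is the single rank-one matrix you describe. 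What each buys: the paper's balance lemma is reusable verbatim for the non-smooth $p=1$ case of \Cref{L_layer_relu_ncf}, where Clarke subdifferentials would make your backward-pass bookkeeping delicate, while your route dispenses with the $\diag$ manipulation and uses nothing beyond the rank-one structure of the stationarity equations. Two points you should make explicit when writing it up: (i) in the ``if'' direction, proving $\nabla_{\rmW_l}\mathcal{N}\propto\overline{\rmW}_l$ for each $l$ is not yet the KKT condition --- the proportionality constants must coincide across layers; your Euler identity combined with $\|\overline{\rmW}_l\|_F^2=p^{L-l}/\hat p$ forces every constant to equal $\hat p\,\mathcal{N}$, which is exactly how the paper closes this step, and it requires checking $\mathcal{N}\neq 0$, i.e.\ that the scalars $\beta_j$ and $\Gamma$ are non-zero under the alignment conditions and that $\rvb_1$ is a \emph{non-zero} KKT point of the reduced problem; (ii) your Euler-based norm computation, like the paper's lemma, needs the KKT point to be non-zero so that $\lambda\neq 0$, and this same $\lambda\neq 0$ is what justifies reading off the row and column directions (and the non-vanishing of the aggregated scalar prefactors) from the rank-one equations.
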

	
	Overall, the results are mostly analogous to \Cref{L_layer_relu_ncf}, with one key difference:  the non-zero entries of $\{\rva_l\}_{l=1}^{L-1}$  must have identical (absolute) values. This difference is also evident in \Cref{3_U}, which depicts KKT points of 3-layer neural network with ReLU and squared-ReLU activation. For ReLU, in contrast with squared-ReLU, multiple rows and columns of the weights are non-zero. Additionally, the KKT point exhibits sparsity, with ReLU activation having multiple rows and columns with zero norm, while squared-ReLU activation has only one non-zero row and column. We observe this difference consistently in our experiments.
	\begin{figure}
		\centering
		\begin{minipage}[b]{0.7\linewidth}
			\centering
			\includegraphics[width=0.7\linewidth]{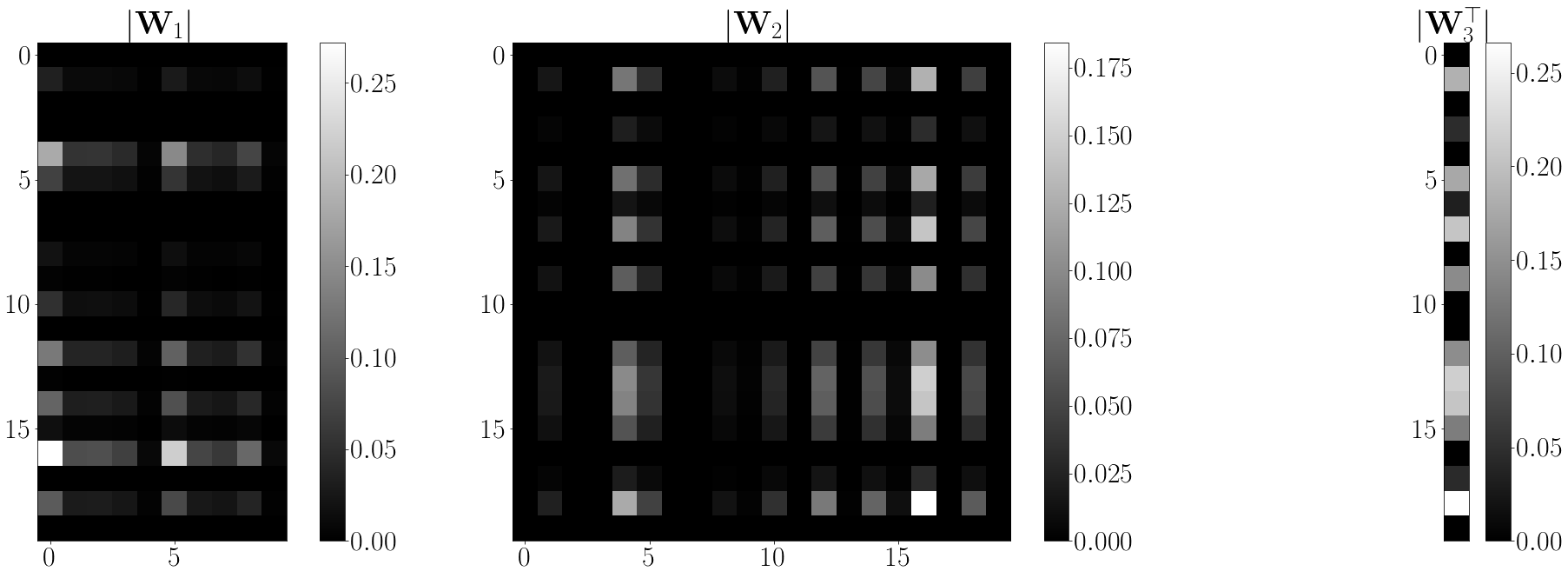}
			\subcaption{ReLU activation}
			\label{3_U_rl}
		\end{minipage}\ \ \  \ \ \ \ \
		\begin{minipage}[b]{0.7\linewidth}
			\centering
			\includegraphics[width=0.7\linewidth]{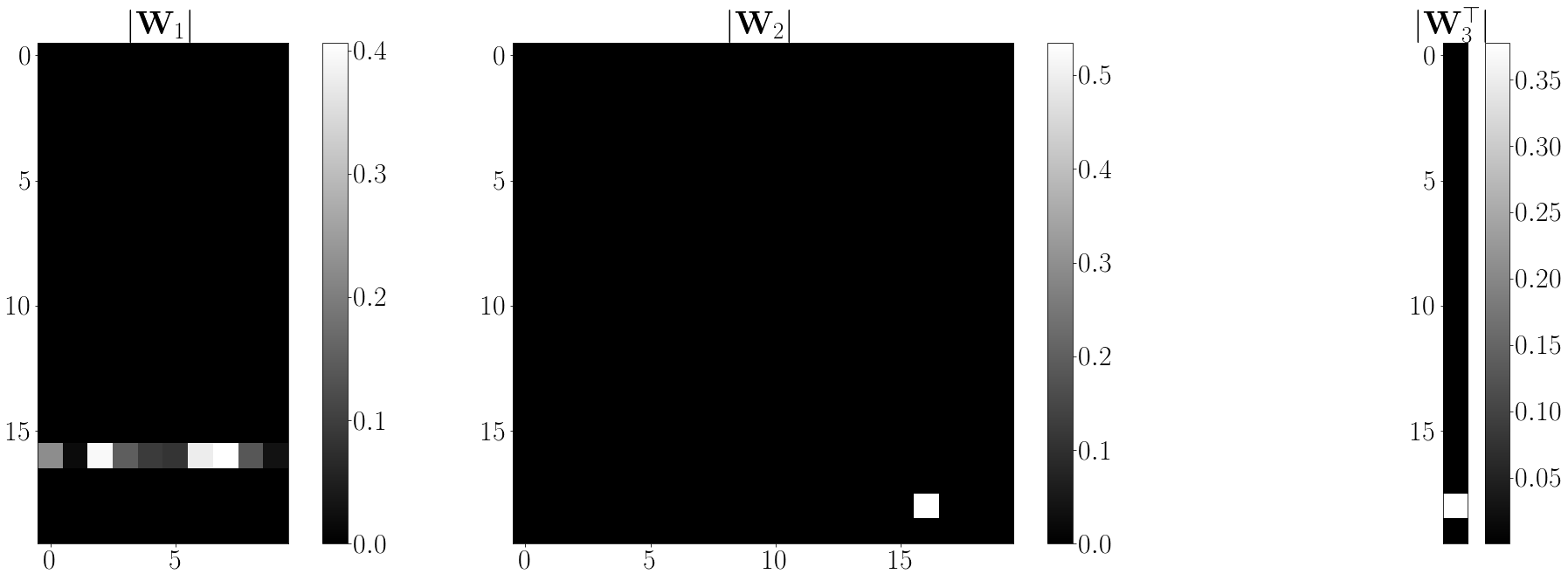}
			\subcaption{Squared-ReLU activation}
			\label{3_U_sq}
		\end{minipage}
		\caption{Absolute value of a KKT point $\left(\overline{\rmW}_1,\overline{\rmW}_2,\overline{\rmW}_3\right)$ of \cref{ncf_gn_const} for {ReLU activation $(\max(x,0))$} and squared-ReLU activation $(\max(x,0)^2)$, where $L=3$, $n=100$ and $\rvx_i\sim_{\text{i.i.d}}\mathcal{N}(0,\rmI_{10}), y_i\sim_{\text{i.i.d}}\mathcal{N}(0,1)$. For ReLU, the non-zero rows/columns are scalar multiple of each other, implying the hidden weights have rank one. For squared-ReLU, it is clear that the hidden weights have rank one.}
		\label{3_U}
	\end{figure}

	The results of \Cref{L_layer_relu_ncf} and \ref{L_layer_high_ncf} contain sufficient conditions which imply that KKT points of the constrained NCF with rank-one hidden weights always exists for arbitrary training data or depth of the neural network. The sufficient conditions also provide an approach to obtain a KKT point  of the constrained NCF using KKT points of a smaller optimization problem.  Additionally, they also contain matching necessary conditions, where, for $\alpha\neq 1$, we had to further assume that $\{\rva_l\}_{l=1}^{L-1},\{\rvb_l\}_{l=2}^{L-1}$ have  non-negative entries. This implies that if, for $\alpha\neq 1$, there is a KKT point with rank-one hidden weights such that $\{\rva_l\}_{l=1}^{L-1},\{\rvb_l\}_{l=2}^{L-1}$ have non-negative entries, then it must be of the form described in the above theorems. Interestingly, in our experiments with random training data (\Cref{num_exp}), we found that the KKT points found via gradient-based method generally satisfy this non-negativity assumption, along with having rank-one hidden weights.
	\begin{remark}
		In the above theorems, we focus on non-zero KKT points, because zero KKT points may not have a rich structure. For instance, using \Cref{kkt_ncf}, it can be shown that if the network's output is always zero, the corresponding weights are zero KKT points. For ReLU activation, the output is zero if the input is non-positive. Thus, for an $L$-layer ReLU network, the output will be zero if any matrix in $(\rmW_2, \dots, \rmW_{L-1})$ has non-positive entries, making those weights zero KKT points.
	\end{remark}
	
	\subsection{Numerical Experiments}\label{num_exp}
	
	In the above theorems, we have made two main assumptions on the KKT points of the NCF $(\overline{\rmW}_1,\cdots,\overline{\rmW}_L)$:
	\begin{enumerate}
		\item \textbf{Rank-one hidden weights}. $\overline{\rmW}_l= \rva_l\rvb_l^\top, $ where $\|\rva_l\|_2 = \|\rvb_l\|_2,$ for all $l\in [L-1].$
		\item \textbf{Non-negative weights.} If $\sigma(x) = \max(x,\alpha x)^p$ and $\alpha\neq1$, then $\{\rva_l\}_{l=1}^{L-1},\{\rvb_l\}_{l=2}^{L-1}$ are non-negative.
	\end{enumerate}
	To validate these assumptions, we conduct numerical experiments for two and three layer neural network with activation function $\sigma(x) = \max(\alpha x, x)^p$, where $p\in\{1,2\}$ and $\alpha \in \{0,0.1,1\}$.  For our experiments, we generate a random set of training data, and then obtain a KKT point of the corresponding constrained NCF by maximizing the NCF via gradient ascent with random initialization;\footnote{Since gradient ascent can become unbounded, we use projected gradient ascent, i.e., after every gradient ascent step we normalize the weights to have unit norm. However, it also turns out that for homogeneous objectives, projected gradient ascent with fixed step-size is equivalent to gradient ascent with adaptive step-size, up to a scaling factor (see \Cref{gd_pgd_lemma}).} more details are provided in the \Cref{details_experiment}. The results presented below are for 30 such random instances. Also, as discussed earlier, the gradient flow of the training loss converges to the same KKT point as the gradient flow of the NCF. Thus, using gradient ascent to find the KKT point of the constrained NCF also provides insights  into the KKT point towards which gradient flow of the training loss converges in the early stages of training.

	\begin{table}[t]
		\centering
		\begin{minipage}{0.48\textwidth}
			\centering
			\begin{tabular}{ |p{1cm}||p{1.6cm}|p{1.6cm}|p{1.6cm}|  }
				\hline
				& \hfil $\alpha=0 $ &  \hfil$\alpha=0.1 $ & \hfil$\alpha=1$\\
				\hline
				\hfil $p=1$    & \hfil$5.93\cdot 10^{-9} $   &\hfil$1.79\cdot 10^{-8}$& \hfil$2.07\cdot 10^{-8}$\\
				\hline
				\hfil $p=2 $ &\hfil$1.76\cdot 10^{-10}$&\hfil$1.67\cdot 10^{-10}$& \hfil$6.86\cdot 10^{-11}$\\
				\hline
			\end{tabular}
			\caption*{Three-layer}
			\label{2layer_r1}
		\end{minipage}\hfill
		\begin{minipage}{0.48\textwidth}
			\centering
			\begin{tabular}{ |p{1cm}||p{1.6cm}|p{1.6cm}|p{1.6cm}|  }
				\hline
				& \hfil $\alpha=0 $ &  \hfil$\alpha=0.1 $ & \hfil$\alpha=1$\\
				\hline
				\hfil $p=1$    & \hfil$2.97\cdot 10^{-12}$   &\hfil$1.22\cdot 10^{-11}$& \hfil$1.9\cdot 10^{-10}$\\
				\hline
				\hfil $p=2 $ &\hfil$1.48\cdot 10^{-11}$&\hfil$1.12\cdot 10^{-11}$& \hfil$7.93\cdot 10^{-11}$\\
				\hline
			\end{tabular}
			\caption*{Two-layer}
			\label{3layer_r1}
		\end{minipage}
		\caption{The left table contains the maximum value of $\kappa(\overline{\rmW}_1, \overline{\rmW}_2) $ across 30 different random instances for a three-layer neural network with activation function $\max(x,\alpha x)^p$, where $(\overline{\rmW}_1,\overline{\rmW}_2,\overline{\rmW}_3)$ is the KKT point of the constrained NCF obtained via gradient ascent. The right table is similar but for two-layer neural network.  }
		\label{r1_ass}
	\end{table}
	\begin{table}[t]
		\centering
		\begin{minipage}{0.48\textwidth}
			\centering
			\begin{tabular}{ |p{1.7cm}||p{2.2cm}|p{2.2cm}|  }
				\hline
				& \hfil $\alpha=0 $ &  \hfil$\alpha=0.1 $ \\
				\hline
				\hfil $p=1$    & \hfil$4.44\cdot 10^{-16} $   &\hfil$3.28\cdot 10^{-16}$\\
				\hline
				\hfil $p=2 $ &\hfil$5.16\cdot 10^{-6}$&\hfil$5\cdot 10^{-6}$\\
				\hline
			\end{tabular}
			\caption*{Three-layer}
			\label{2layer_pos}
		\end{minipage}\hfill
		\begin{minipage}{0.48\textwidth}
			\centering
			\begin{tabular}{ |p{1.7cm}||p{2.2cm}|p{2.2cm}|  }
				\hline
				& \hfil $\alpha=0 $ &  \hfil$\alpha=0.1 $ \\
				\hline
				\hfil $p=1$    & \hfil$1.82\cdot 10^{-7} $   &\hfil$2.77\cdot 10^{-7}$\\
				\hline
				\hfil $p=2 $ &\hfil$3.95\cdot 10^{-7}$&\hfil$3.58\cdot 10^{-7}$\\
				\hline
			\end{tabular}
			\caption*{Two-layer}
			\label{3layer_pos}
		\end{minipage}
		\caption{ Suppose $(\overline{\rmW}_1,\cdots,\overline{\rmW}_L)$ is the KKT point of the constrained NCF obtained via gradient ascent for an $L-$layer neural network, and let $\rva_l\rvb_l^\top$ be the rank-one approximation of $\rmW_l$ such that $\|\rva_l\|_2 = \|\rvb_l\|_2$, for all $l\in [L-1]$. The left (right)  table contains the maximum value of $\rho(\rva_1, \rva_2\rvb_2^\top, \cdots\rva_{L-1}\rvb_{L-1}^\top ) $ across 30 different random instances for a three-layer (two-layer) neural network with activation function $\max(x,\alpha x)^p$.  Note that if $\rva_l\rvb_l^\top$ is non-negative, then we can choose $\rva_l$ and $\rvb_l$ such that they are also non-negative. Hence, showing $\rva_1, \rva_2\rvb_2^\top, \cdots\rva_{L-1}\rvb_{L-1}^\top$ are non-negative is equivalent to showing $\{\rva_l\}_{l=1}^{L-1},\{\rvb_l\}_{l=2}^{L-1}$ are non-negative.    }
		\label{nn_ass}
	\end{table}
	\FloatBarrier
	\textbf{Rank-one assumption: }For any set of matrices $\{\rmZ_1,\cdots,\rmZ_p\}$, we use
	\begin{align*}
	\kappa(\rmZ_1,\cdots,\rmZ_p) =: \max_{1\leq i\leq p}\left(1 - \frac{\|\rmZ_i\|_2}{\|\rmZ_i\|_F}\right)
	\end{align*}
	to measure how close are each of $\{\rmZ_1,\cdots,\rmZ_p\}$ to being rank-one. If $\kappa(\rmZ_1,\cdots,\rmZ_p) = 0$, then each of $\{\rmZ_1,\cdots,\rmZ_p\}$ are rank one, and thus, for smaller  $\kappa(\rmZ_1,\cdots,\rmZ_p)$, they are approximately rank one. Also, $\kappa(\cdot)$ is related to the notion of numerical rank \citep{rudelson_sr}, a stable relaxation of the rank. Now, from the values in \Cref{r1_ass},  we observe that for both two-layer and three-layer neural network, and with different activation functions, each of the hidden weights in the KKT point obtained via gradient ascent are approximately of rank one in \emph{all the instances considered} (since we are reporting the maximum value of $\kappa(\cdot)$ across all the random instances.)   
	
	\textbf{Non-negativity assumption: }For any set $\{\rmZ_1,\cdots,\rmZ_p\}$ containing matrices and/or vectors, we use
	\begin{align*}
	\rho(\rmZ_1,\cdots,\rmZ_p) =: \max_{1\leq i\leq p}\left( \frac{\|\max(0,-\rmZ_i)\|_F}{\|\rmZ_i\|_F}\right)
	\end{align*}
	to measure how close are each of $\{\rmZ_1,\cdots,\rmZ_p\}$ to being non-negative. Note that, if $\rmZ_i$ is a vector, then $\|\rmZ_i\|_F$ denotes its $\ell_2$-norm. If $\rho(\rmZ_1,\cdots,\rmZ_p) = 0$, then each of $\{\rmZ_1,\cdots,\rmZ_p\}$ are non-negative, and thus, for smaller  $\rho(\rmZ_1,\cdots,\rmZ_p)$, they are approximately non-negative. From the values in \Cref{nn_ass},  we observe that for both two-layer and three-layer neural network, and with different activation functions, the appropriate set of weights in the KKT point obtained via gradient ascent  are approximately non-negative in \emph{all the instances considered}. Note that, we did not consider linear or monomial activation $(\sigma(x) = x^p)$ since in those cases we did not make any non-negativity assumption.\\\\
	
	We conclude this section by again noting that under the rank-one and non-negativity assumption considered here, the sufficient conditions of \Cref{L_layer_relu_ncf} and \ref{L_layer_high_ncf} are also necessary conditions, and thus, the KKT points must be of the form described in those results. Since the rank-one and non-negativity assumption is satisfied for all the instances considered, it implies the KKT points are of form described in \Cref{L_layer_relu_ncf} and \ref{L_layer_high_ncf} for all the instances considered. Also, note that the constrained NCF is a non-convex problem and could have KKT points that do not satisfy these assumptions, but gradient ascent seems to avoid them. Understanding this phenomenon is an interesting future direction, which would require analyzing the dynamics of gradient ascent.

	\subsection{Proof Overview}\label{pf_overview}
	We now provide a brief overview of the proof technique for the above theorems; the details can be found in \Cref{proof_kkt_ncf}. We start with the following lemma, which is important for our proof.
	\begin{lemma}
		Let $\gH$ be an $L$-layer feed-forward neural network whose output is as defined in \cref{f_nn}, where $L\geq 2$. Suppose $(\overline{\rmW}_{1},\cdots,\overline{\rmW}_{L})$ is a non-zero KKT points of the constrained NCF in \cref{ncf_gn_const}. If $\sigma(x) = \max(\alpha x,x)^p$, for some $p\in \sN$ and $\alpha\in\sR$, then
		\begin{align}
		\text{diag}\left(\overline{\rmW}_{l}\overline{\rmW}_l^\top\right) = p\cdot\text{diag}\left(\overline{\rmW}_{l+1}^\top\overline{\rmW}_{l+1}\right), \text{ for all } l\in[L-1],
		\label{bd_kkt}
		\end{align}  
		and, if $\sigma(x) = x$, then 
		\begin{align*}
		\overline{\rmW}_{l}\overline{\rmW}_l^\top = \overline{\rmW}_{l+1}^\top\overline{\rmW}_{l+1}, \text{ for all } l\in[L-1].
		\end{align*}
		Furthermore, if $ (\overline{\rmW}_1,\cdots,\overline{\rmW}_{L-1},\overline{\rmW}_L)= (\rva_1\rvb_1^\top,\cdots,\rva_{L-1}\rvb_{L-1}^\top,\overline{\rvw}^\top),$ where $\|\rva_l\|_2 = \|\rvb_l\|_2$, for all $l\in [L-1]$, then $|\rva_l| = p^{1/4}|\rvb_{l+1}|$, for all $l\in [L-2]$, $|\rva_{L-1}| = (p\hat{p})^{1/4}|\overline{\rvw}|$, $\|\rva_l\|_2^4 = p^{L-l}/\hat{p},$  for all $l\in[L-1],$ and $\|\overline{\rvw}\|_2^2 = 1/\hat{p}$, where $\hat{p} = p^{L-1}+p^{L-2}+\cdots+1.$
		\label{balance_ncf}
	\end{lemma}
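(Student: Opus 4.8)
The argument hinges on the Lagrange stationarity condition at the KKT point: there is a multiplier $\lambda$ with $\nabla_{\rmW_l}\mathcal{N}(\overline{\rmW}_{1:L}) = 2\lambda\,\overline{\rmW}_l$ for every $l\in[L]$ (read as $2\lambda\overline{\rmW}_l\in\partial_{\rmW_l}\mathcal{N}$ via the Clarke subdifferential when $p=1$, $\alpha\neq 1$). The first step is to show $\lambda\neq 0$ and to pin down the layer norms. Since $\sigma$ is $p$-homogeneous under positive scalings, composing the layers shows that, with the other blocks held fixed, $\mathcal{N}$ is $p^{L-l}$-homogeneous in $\rmW_l$; Euler's identity then gives $\langle\nabla_{\rmW_l}\mathcal{N},\overline{\rmW}_l\rangle = p^{L-l}\,\mathcal{N}(\overline{\rmW}_{1:L})$, so stationarity yields $2\lambda\|\overline{\rmW}_l\|_F^2 = p^{L-l}\,\mathcal{N}(\overline{\rmW}_{1:L})$. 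Summing over $l$ and using $\sum_l\|\overline{\rmW}_l\|_F^2 = 1$ gives $2\lambda = \hat p\,\mathcal{N}(\overline{\rmW}_{1:L})$; because the KKT point is non-zero, $\mathcal{N}(\overline{\rmW}_{1:L})\neq 0$, hence $\lambda\neq 0$ and $\|\overline{\rmW}_l\|_F^2 = p^{L-l}/\hat p$.

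The diagonal balancing in \cref{bd_kkt} reflects the neuron-wise rescaling invariance of $\mathcal{H}$ — scaling row $j$ of $\rmW_l$ by $c>0$ and column $j$ of $\rmW_{l+1}$ by $c^{-p}$ leaves the output unchanged — so I would extract it by combining this invariance with stationarity, most concretely through the explicit back-propagation form of the gradient. Writing $\rvz_{l,i}=\overline{\rmW}_l\rvh_{l-1,i}$, $\rvh_{l,i}=\sigma(\rvz_{l,i})$ (with $\rvh_{0,i}=\rvx_i$), and $\rvg_{l,i}=\partial\mathcal{H}(\rvx_i)/\partial\rvz_{l,i}$, one has $\nabla_{\rmW_l}\mathcal{N}=\sum_i y_i\,\rvg_{l,i}\rvh_{l-1,i}^\top$ together with the recursion $\rvg_{l,i}=\diag(\sigma'(\rvz_{l,i}))\,\overline{\rmW}_{l+1}^\top\rvg_{l+1,i}$. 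Reading off the $(j,j)$ entry of $(\nabla_{\rmW_l}\mathcal{N})\overline{\rmW}_l^\top = 2\lambda\,\overline{\rmW}_l\overline{\rmW}_l^\top$ and using $\overline{\rmW}_l\rvh_{l-1,i}=\rvz_{l,i}$ gives $2\lambda(\overline{\rmW}_l\overline{\rmW}_l^\top)_{jj}=\sum_i y_i\,g_{l,i,j}z_{l,i,j}$. Reading off the $(j,j)$ entry of $\overline{\rmW}_{l+1}^\top(\nabla_{\rmW_{l+1}}\mathcal{N}) = 2\lambda\,\overline{\rmW}_{l+1}^\top\overline{\rmW}_{l+1}$, substituting the recursion for $\rvg_{l,i}$, and using the scalar Euler identity $z\sigma'(z)=p\sigma(z)$ coordinate-wise (both sides vanish at $z=0$, so kinks are harmless) gives $2\lambda\,p\,(\overline{\rmW}_{l+1}^\top\overline{\rmW}_{l+1})_{jj}=\sum_i y_i\,g_{l,i,j}z_{l,i,j}$ as well. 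Equating and cancelling $2\lambda\neq 0$ yields \cref{bd_kkt}. For $\sigma(x)=x$ the same two computations retain the full vectors $\rvg$ (no $\diag(\sigma')$ truncation), so one gets $2\lambda\,\overline{\rmW}_l\overline{\rmW}_l^\top = \sum_i y_i\,\rvg_{l,i}\rvh_{l,i}^\top = 2\lambda\,\overline{\rmW}_{l+1}^\top\overline{\rmW}_{l+1}$, i.e. the stronger matrix identity.

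The remaining ``furthermore'' claims follow by substituting the rank-one ansatz into \cref{bd_kkt}. When $\overline{\rmW}_l=\rva_l\rvb_l^\top$ with $\|\rva_l\|_2=\|\rvb_l\|_2$, the diagonals are $\diag(\overline{\rmW}_l\overline{\rmW}_l^\top)=\|\rva_l\|_2^2\,\rva_l^2$ and, for $l\le L-2$, $\diag(\overline{\rmW}_{l+1}^\top\overline{\rmW}_{l+1})=\|\rva_{l+1}\|_2^2\,\rvb_{l+1}^2$ (entrywise squares), while $\diag(\overline{\rmW}_L^\top\overline{\rmW}_L)=\overline{\rvw}^2$, so \cref{bd_kkt} becomes $\|\rva_l\|_2^2 a_{l,j}^2 = p\,\|\rva_{l+1}\|_2^2 b_{l+1,j}^2$ for $l\le L-2$ and $\|\rva_{L-1}\|_2^2 a_{L-1,j}^2 = p\,\overline w_j^2$. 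Summing over $j$ yields the recursion $\|\rva_l\|_2^4 = p\,\|\rva_{l+1}\|_2^4$ for $l\le L-2$ and $\|\rva_{L-1}\|_2^4 = p\,\|\overline{\rvw}\|_2^2$, whose solution is $\|\rva_l\|_2^4 = p^{L-l}\|\overline{\rvw}\|_2^2$; imposing $\sum_{l=1}^{L-1}\|\overline{\rmW}_l\|_F^2 + \|\overline{\rmW}_L\|_F^2 = \sum_{l=1}^{L-1}\|\rva_l\|_2^4 + \|\overline{\rvw}\|_2^2 = 1$ then forces $\|\overline{\rvw}\|_2^2 = 1/\hat p$ and $\|\rva_l\|_2^4 = p^{L-l}/\hat p$. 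With these norms in hand, taking square roots of the per-coordinate identities gives $|\rva_l| = p^{1/4}|\rvb_{l+1}|$ for $l\le L-2$ and $|\rva_{L-1}| = (p\hat p)^{1/4}|\overline{\rvw}|$, completing the lemma.

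I expect the main obstacle to be the nonsmooth case $p=1$, $\alpha\neq 1$ (Leaky/standard ReLU), where $\mathcal{N}$ is only locally Lipschitz: one must phrase stationarity through the Clarke subdifferential, exhibit a single consistent subgradient selection that realizes the back-propagation identities above, and verify that both homogeneity/Euler identities ($\langle\nabla_{\rmW_l}\mathcal{N},\overline{\rmW}_l\rangle = p^{L-l}\mathcal{N}$ and $z\sigma'(z)=p\sigma(z)$) carry over to that selection — the chain rule for Clarke subdifferentials being only an inclusion in general. The layered, homogeneous structure is what makes this go through; everything else is routine linear algebra.
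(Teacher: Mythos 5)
Your proposal is correct and follows essentially the same route as the paper: per-layer KKT stationarity combined with the back-propagation form of the gradient, multiplying the layer-$l$ condition by $\overline{\rmW}_l^\top$ on the right and the layer-$(l+1)$ condition by $\overline{\rmW}_{l+1}^\top$ on the left, and invoking the scalar identity $z\sigma'(z)=p\sigma(z)$ (valid for every Clarke subgradient, including at $0$) to get the diagonal balance, then substituting the rank-one ansatz and summing to pin down the norms. The only cosmetic difference is that you obtain $\lambda\neq 0$ and $\|\overline{\rmW}_l\|_F^2=p^{L-l}/\hat p$ via per-layer Euler identities plus the sphere constraint, whereas the paper reads the same norms off the trace of the balance relation; the nonsmooth subgradient-selection subtlety you flag is handled at the same level of rigor in the paper.
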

	
	The above lemma implies $|\rva_l|$ is parallel to $|\rvb_{l+1}|$. This is the first step towards establishing the alignment conditions stated in the above theorems. Now, if $\rva_l$ and $\rvb_{l+1}$ are non-negative, which is true in certain cases, we get $\rva_l$ is parallel to $\rvb_{l+1}.$ We similarly use additional conditions to get alignment results for other cases. 
	
	We next describe how to show $\rvb_1$ is a KKT point of the smaller optimization problem. For this, we focus on deep ReLU networks and assume that the alignment conditions are true, the proof of other activation functions follow a similar approach. From the KKT conditions, we know $\overline{\rmW}_1$ must satisfy
	\begin{align}
	\mathbf{0}\in & \lambda\overline{\rmW}_{1} + \partial_{\rmW_1} \left(\rvy^\top \gH(\rmX; \overline{\rmW}_1, \cdots,\overline{\rmW}_L)\right) = \lambda\rva_1\rvb_1^\top + \partial_{\rmW_1} \left(\rvy^\top \gH(\rmX; \overline{\rmW}_1, \cdots,\overline{\rmW}_L)\right) .
	\label{s_wl_kkt_mn}
	\end{align}
	Since $\{\rva_l\}_{l=1}^{L-1}, \{\rvb_l\}_{l=2}^{L-1}$ have non-negative entries, ReLU is positively homogeneous, $\|\rva_{l}\|_2^2 = 1/\sqrt{L}$, and $\rva_l = \rvb_{l+1}$, for all $l\in [L-2]$, $q\rva_{L-1} = L^{1/4}\overline{\rvw}$, we can write
	\begin{align*}
	\mathcal{H}(\rvx;{\rmW}_1,\overline{\rmW}_2\cdots,\overline{\rmW}_{L}) &= \overline{\rmW}_L\sigma\left(\overline{\rmW}_{L-1}\cdots\overline{\rmW}_{2}\sigma\left(\rmW_1\rvx\right)\cdots\right)\\
	&= q\rva_{L-1}^\top\sigma(\rva_{L-1}\rva_{L-2}^\top\sigma(\rva_{L-2}\rva_{L-3}^\top\cdots \sigma(\rva_2\rva_1^\top\sigma(\rmW_1\rvx))\cdots)/L^{1/4}\\
	&= q\|\rva_{L-1}\|_2^2 \|\rva_{L-2}\|_2^2 \cdots \|\rva_{2}\|_2^2 \sigma^{L-2}(\rva_1^\top\sigma(\rmW_1\rvx))/L^{1/4}\\
	&= q\left({1}/{\sqrt{L}}\right)^{L-2}\sigma^{L-2}(\rva_1^\top\sigma(\rmW_1\rvx))/L^{1/4}.
	\end{align*}
	We use the above equality to simplify $\partial_{\rmW_1} \left(\rvy^\top \gH(\rmX; \overline{\rmW}_1, \cdots,\overline{\rmW}_L)\right)$ and use it in \cref{s_wl_kkt_mn} to prove $\rvb_1$ is a KKT point of the smaller optimization problem. Also, note that since ReLU activation is non-differentiable, we used the Clarke sub-differential in the KKT  conditions. A challenging aspect of using Clarke sub-differentials is that the chain rule does not hold with equality, instead it yields a set and the true sub-differential belongs to that set (see \Cref{chain_rule_non_diff}). Consequently, it is not feasible to simplify $\partial_{\rmW_1} \left(\rvy^\top \gH(\rmX; \overline{\rmW}_1, \cdots,\overline{\rmW}_L)\right)$ using the above equality by simply applying the chain rule. We follow a more direct approach to simplify the Clarke sub-differential, which uses positive homogeneity of the activation function and non-negativity of $\rva_1$.
	\section{Conclusion and Future Directions}\label{limitations}
	In this paper, we studied the gradient flow dynamics resulting from training deep homogeneous neural networks with small initializations, and established the approximate directional convergence of the weights in the early stages of training. However, these results are not applicable for ReLU networks, so one important future direction would be to rigorously establish similar directional convergence for ReLU networks. 
	
	We also derived necessary and sufficient conditions for the KKT points of the constrained NCF that have rank-one hidden weights, for feed-forward neural networks with Leaky ReLU and polynomial Leaky ReLU activation functions. The constrained NCF is a non-convex problem and all KKT points may not be low-rank; however, our experiments suggest that gradient ascent tends to converge towards low-rank KKT points. Understanding this phenomenon is also an important area for future exploration, which will also help us understand why gradient descent on training loss converges towards low-rank KKT points of the NCF in the early stages of training. We hope our characterization of rank-one KKT points will aid in this exploration.
\subsubsection*{Acknowledgments}
The authors graciously acknowledge gift funding from InterDigital, which partially supported this work.

	\newpage
	\bibliography{main}
	\bibliographystyle{tmlr}
	
	\appendix
	\section{Key lemmata}
	In this section, we state some key lemmata that are important for our results.
	\subsection{Kurdyka-Lojasiewicz inequality for definable functions}
	The Kurdyka-Lojasiewicz Inequality is useful for demonstrating convergence of bounded gradient flow trajectories by establishing the existence of a desingularizing function, which is formally defined as follows.
	\begin{definition}
		A function $\Psi:[0,\nu)\rightarrow \mathbb{R}$ is called a desingularizing function when $\Psi$ is continuous on $[0,\nu)$ with $\Psi(0) = 0$, and it is continuously differentiable on $(0,\nu)$ with $\Psi' > 0.$ 
	\end{definition}
	
	In establishing the directional convergence of gradient flow trajectories of the neural correlation function, where the trajectories may not be bounded, we will utilize the following unbounded version of the Kurdyka-Lojasiewicz inequality.
	
	\begin{lemma}\cite[Lemma 3.6]{ji_matus_align}
		Let $f$ be a locally Lipschitz function, definable under an o-minimal structure, and having an open domain $\mathbf{D}\subset\{\mathbf{x}|\|\mathbf{x}\|_2>1\}$. For any $c,\eta>0$, there exists a $\nu>0$ and a definable desingularizing function $\Psi$ on $[0,\nu)$ such that 
		\begin{equation*}
		\Psi'(f(\mathbf{x}))\|\mathbf{x}\|_2\|{\nabla} f(\mathbf{x})\|_2\geq 1, \text{ if $f(\mathbf{x})\in (0,\nu) $ and } \|{\nabla}_{\perp} f(\mathbf{x})\|_2 \geq c\|\mathbf{x}\|_2^\eta\|{\nabla}_{r} f(\mathbf{x})\|_2,
		\end{equation*}
		where ${\nabla}_{r} f(\mathbf{x}) = \left\langle{\nabla} f(\mathbf{x}), \frac{\mathbf{x}}{\|\mathbf{x}\|_2}\right\rangle \frac{\mathbf{x}}{\|\mathbf{x}\|_2}$ and ${\nabla}_{\perp} f(\mathbf{x}) = {\nabla} f(\mathbf{x}) - {\nabla}_{r} f(\mathbf{x})$.
		\label{desing_func}
	\end{lemma}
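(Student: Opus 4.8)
This lemma is quoted from \citet{ji_matus_align}, so rather than a fresh argument I sketch the strategy it rests on. The plan is to convert the statement over the unbounded region $\mathbf{D}\subset\{\|\mathbf{x}\|_2>1\}$ into an \emph{ordinary} Kurdyka--{\L}ojasiewicz statement over a bounded region, by a definable change of variables that compactifies the radial direction. Fix an integer $m\geq 1/\eta$ and consider $\Phi(\rho,\vtheta):=\vtheta/\rho^{m}$, a definable (rational) bijection of $(0,1)\times\mathbb{S}$ onto $\{\|\mathbf{x}\|_2>1\}$, where $\mathbb{S}$ denotes the unit sphere of the ambient space; it sends the ``sphere at infinity'' to the bounded locus $\{0\}\times\mathbb{S}$, since $\rho=\|\mathbf{x}\|_2^{-1/m}\to 0$ as $\|\mathbf{x}\|_2\to\infty$. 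Restricting to $\Phi^{-1}(\mathbf{D})$ and setting $g:=f\circ\Phi$, we obtain a locally Lipschitz definable function on a bounded definable set whose closure touches $\{0\}\times\mathbb{S}$.

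The crux is how the gradient transforms. Writing $\mathbf{x}=\Phi(\rho,\vtheta)$ with $\|\mathbf{x}\|_2=\rho^{-m}$ and $\mathbf{x}/\|\mathbf{x}\|_2=\vtheta$, one computes $\partial_\rho g=-m\rho^{-m-1}\langle\nabla f(\mathbf{x}),\vtheta\rangle$ and the (spherical) tangential gradient $\nabla_{\vtheta}g=\rho^{-m}\nabla_\perp f(\mathbf{x})$, so that
\[
\|\nabla g(\rho,\vtheta)\|_2^2=\rho^{-2m}\Big(m^2\rho^{-2}\,\|\nabla_r f(\mathbf{x})\|_2^2+\|\nabla_\perp f(\mathbf{x})\|_2^2\Big).
\]
Here the hypothesis enters decisively: since $\|\mathbf{x}\|_2>1$ and $m\eta\geq 1$, the assumption $\|\nabla_\perp f(\mathbf{x})\|_2\geq c\|\mathbf{x}\|_2^{\eta}\|\nabla_r f(\mathbf{x})\|_2$ forces $\|\nabla_\perp f(\mathbf{x})\|_2\geq (c/\rho)\,\|\nabla_r f(\mathbf{x})\|_2$, hence $\rho^{-2}\|\nabla_r f(\mathbf{x})\|_2^2\leq c^{-2}\|\nabla_\perp f(\mathbf{x})\|_2^2$, and therefore $\|\nabla g(\rho,\vtheta)\|_2\leq C\,\|\mathbf{x}\|_2\,\|\nabla f(\mathbf{x})\|_2$ with $C=\sqrt{1+m^2/c^2}$ depending only on $c,\eta$. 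In other words, precisely on the region cut out by the tangential-dominance condition, the weighted quantity $\|\mathbf{x}\|_2\|\nabla f(\mathbf{x})\|_2$ dominates, up to the constant $C$, the ordinary gradient norm of the bounded-domain function $g$.

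It then suffices to invoke the classical Kurdyka--{\L}ojasiewicz inequality for definable functions for $g$ near the compact locus $\{0\}\times\mathbb{S}$: this produces a definable desingularizing function $\widetilde\Psi$ and some $\nu>0$ with $\widetilde\Psi'(g)\,\|\nabla g\|_2\geq 1$ wherever $g\in(0,\nu)$. Using $g(\rho,\vtheta)=f(\mathbf{x})$ together with the gradient comparison, $\widetilde\Psi'(f(\mathbf{x}))\,C\,\|\mathbf{x}\|_2\|\nabla f(\mathbf{x})\|_2\geq\widetilde\Psi'(g)\|\nabla g\|_2\geq 1$, so $\Psi:=C\widetilde\Psi$ (still desingularizing) does the job for $\mathbf{x}$ of large norm; the remaining points of $\mathbf{D}$ lie in a bounded set and are handled directly by the ordinary definable KL inequality applied to $f$ (where $\|\mathbf{x}\|_2\geq 1$ already makes the weight harmless), and the finitely many desingularizing functions are combined into a single one.

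I expect the main difficulty to be this third step: the classical inequality must be applied \emph{uniformly} over the entire boundary sphere $\{0\}\times\mathbb{S}$, not merely near a single point, which is where o-minimal cell decomposition is genuinely needed; and throughout one has to keep every constructed object definable and — since $f$ is only locally Lipschitz — carry out all the gradient manipulations with Clarke subdifferentials, for which the chain rule holds only as an inclusion rather than an equality.
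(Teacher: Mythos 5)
This lemma is not proved in the paper at all: it is imported verbatim from the cited reference \citep{ji_matus_align}, so there is no in-paper argument to compare your proposal against, and reproducing its proof is not actually required for the paper's results. Taken on its own terms, your sketch is a sound reconstruction of how such an unbounded Kurdyka--Lojasiewicz statement is obtained: the radial compactification $\mathbf{x}=\vtheta/\rho^{m}$ with $m\geq 1/\eta$ is definable, your gradient computation is correct, and the hypothesis $\|\nabla_{\perp}f\|_2\geq c\|\mathbf{x}\|_2^{\eta}\|\nabla_r f\|_2$ enters exactly where it should, giving $\|\nabla g\|_2\leq \sqrt{1+m^2/c^2}\,\|\mathbf{x}\|_2\|\nabla f(\mathbf{x})\|_2$ on the relevant region, after which the uniform KL inequality for definable functions on a bounded domain (Bolte--Daniilidis--Lewis--Shiota) yields the desingularizing function and the rescaling $\Psi=C\widetilde\Psi$ closes the argument; note also that your final split into ``large norm'' and ``bounded'' parts is redundant, since the pullback of all of $\{\|\mathbf{x}\|_2>1\}$ is already the bounded set $(0,1)\times\mathbb{S}$. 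Two of the difficulties you flag are milder than you suggest: the uniformity over the whole boundary sphere is precisely what the uniform definable KL theorem delivers (no separate cell-decomposition argument is needed beyond invoking it), and the Clarke chain rule holds with \emph{equality} when composing with a $C^1$ diffeomorphism such as $\Phi$ away from $\rho=0$, so the nonsmooth bookkeeping is routine. The genuinely fussy point you do not mention is that the spherical-coordinate computation uses a product metric on $(0,1)\times\mathbb{S}$ rather than Euclidean coordinates, so a fully rigorous version must pass to finitely many charts (changing $C$ by a bounded factor); with that caveat, the outline is consistent with the strategy of the cited proof.
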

	
	\subsection{Euler's Theorem for homogeneous functions}
	The next lemma states two important properties of homogeneous functions.
	\begin{lemma}(\cite[Theorem B.2]{Lyu_ib}, \cite[Lemma C.1]{ji_matus_align})
		Let $F:\mathbb{R}^k\rightarrow\mathbb{R}$ be locally Lipschitz, differentiable, and $L-$positively homogeneous for some $L >0$. Then,
		\begin{enumerate}
			\item For any $\mathbf{w}\in\mathbb{R}^k$ and $c\geq0$,
			\begin{equation*}
			\nabla F(c\mathbf{w}) = c^{L-1}\nabla F(\mathbf{w}). 
			\end{equation*}
			\item For any $\mathbf{w}\in\mathbb{R}^k$,
			\begin{equation*}
			\mathbf{w}^\top \nabla F(\mathbf{w}) = L F(\mathbf{w}).  
			\end{equation*}
		\end{enumerate}
		\label{euler_gen}
	\end{lemma}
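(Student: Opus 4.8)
The plan is to obtain both identities by restricting $F$ to rays $t\mapsto t\mathbf{w}$ through the origin and combining the defining relation $F(c\mathbf{w}) = c^L F(\mathbf{w})$, $c\ge 0$, with the ordinary chain rule, which is available precisely because $F$ is assumed differentiable. I would begin with the second identity, the shorter of the two. Fixing $\mathbf{w}\in\mathbb{R}^k$, set $\phi(t) := F(t\mathbf{w})$ for $t\ge 0$. Homogeneity gives $\phi(t) = t^L F(\mathbf{w})$, so $\phi'(t) = L t^{L-1}F(\mathbf{w})$; the chain rule independently gives $\phi'(t) = \mathbf{w}^\top\nabla F(t\mathbf{w})$. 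Evaluating both expressions at $t=1$ yields $\mathbf{w}^\top\nabla F(\mathbf{w}) = LF(\mathbf{w})$. (The case $\mathbf{w}=\mathbf{0}$ is trivial, since $F(\mathbf{0}) = 0^LF(\mathbf{0}) = 0$.)

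For the first identity I would fix $\mathbf{w}$ and a scalar $c>0$ and consider $g(\mathbf{v}) := F(c\mathbf{v})$: the chain rule gives $\nabla g(\mathbf{v}) = c\,\nabla F(c\mathbf{v})$, while homogeneity gives $g(\mathbf{v}) = c^LF(\mathbf{v})$ and hence $\nabla g(\mathbf{v}) = c^L\nabla F(\mathbf{v})$; equating the two expressions at $\mathbf{v}=\mathbf{w}$ and dividing by $c$ gives $\nabla F(c\mathbf{w}) = c^{L-1}\nabla F(\mathbf{w})$. The endpoint $c=0$ I would treat separately: since $L>1$ in every setting of interest, for any direction $\mathbf{v}$ the one-sided directional derivative of $F$ at the origin is $\lim_{t\downarrow 0} t^{-1}\bigl(F(t\mathbf{v})-F(\mathbf{0})\bigr) = \lim_{t\downarrow 0} t^{L-1}F(\mathbf{v}) = 0$, and since $F$ is differentiable at $\mathbf{0}$ this forces $\nabla F(\mathbf{0}) = \mathbf{0} = 0^{L-1}\nabla F(\mathbf{w})$.

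This is the classical Euler relation for differentiable positively homogeneous functions, so I do not expect a substantive obstacle. The only places requiring a little care are the appeal to the chain rule — legitimate precisely because differentiability (not merely local Lipschitzness, which alone would give differentiability only almost everywhere) is part of the hypothesis — and the $c=0$ endpoint of the first identity, which is dispatched by the short directional-derivative computation above.
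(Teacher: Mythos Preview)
Your argument is correct and is exactly the classical proof of Euler's identity for differentiable positively homogeneous functions. The paper itself does not supply a proof of this lemma; it simply cites \cite[Theorem B.2]{Lyu_ib} and \cite[Lemma C.1]{ji_matus_align} and uses the result as a black box, so there is no in-paper argument to compare against. Your ray-restriction approach for part 2 and the scaling-of-arguments approach for part 1 are precisely the standard derivations one finds in those references.

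One small remark on your handling of $c=0$: the lemma as stated allows any $L>0$, not just $L>1$, so the line ``since $L>1$ in every setting of interest'' is an appeal to context rather than to the hypothesis. This is harmless here (the paper works exclusively with $L>2$), and in fact for $0<L\le 1$ the combination of differentiability at the origin with positive homogeneity forces $F$ to be either identically zero or linear, after which the identity at $c=0$ is immediate. But if you want the proof to match the stated generality, a sentence dispatching those edge cases would make it self-contained.
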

	\subsection{Gronwall's inequality}
	In our proofs, we will frequently use Gronwall's inequality.
	\begin{lemma}
		Let $\alpha, \beta, u$ be real-valued functions defined on an interval $[a,b]$, where $\beta$ and $u$ are continuous and $\min(\alpha,0)$ is integrable on every closed and bounded sub-interval of $[a,b]$.
		\begin{itemize}
			\item If $\beta$ is non-negative and if $u$ satisfies the integral inequality
			\begin{align*}
			u(t) \leq \alpha(t) + \int_{a}^t \beta(s)u(s) ds, \forall t \in [a,b],
			\end{align*}
			then
			\begin{align*}
			u(t) \leq \alpha(t) + \int_{0}^t \alpha(s)\beta(s)\exp\left(\int_{s}^t\beta(r) dr\right) ds, \forall t \in [a,b].
			\end{align*}
			\item If, in addition, the function $\alpha$ is non-decreasing, then
			\begin{align*}
			u(t) \leq \alpha(t)\exp\left(\int_a^t \beta(s) ds\right), \forall t \in [a,b].
			\end{align*}
		\end{itemize}
		\label{gronwall}
	\end{lemma}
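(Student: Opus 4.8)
The plan is to run the classical integrating‑factor argument. First I would introduce the auxiliary function $v(t) := \int_a^t \beta(s) u(s)\, ds$ for $t\in[a,b]$. Since $\beta$ and $u$ are continuous, $\beta u$ is continuous, so $v$ is well defined with $v(a)=0$, and by the fundamental theorem of calculus $v$ is continuously differentiable with $\dot{v}(t) = \beta(t) u(t)$. Feeding the hypothesis $u(t) \le \alpha(t) + v(t)$ into this, and using $\beta(t)\ge 0$ to preserve the inequality direction, gives the differential inequality $\dot{v}(t) - \beta(t) v(t) \le \beta(t)\alpha(t)$.

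Next I would multiply through by the strictly positive, continuous integrating factor $\mu(t) := \exp\!\big(-\int_a^t \beta(r)\, dr\big)$, which turns the left‑hand side into an exact derivative: $\frac{d}{dt}\big(\mu(t) v(t)\big) = \mu(t)\big(\dot{v}(t) - \beta(t) v(t)\big) \le \mu(t)\beta(t)\alpha(t)$. Integrating from $a$ to $t$ and using $\mu(a)v(a)=0$ yields $\mu(t) v(t) \le \int_a^t \mu(s)\beta(s)\alpha(s)\, ds$. Dividing by $\mu(t)$ and using $\mu(s)/\mu(t) = \exp\!\big(\int_s^t \beta(r)\, dr\big)$ gives $v(t) \le \int_a^t \alpha(s)\beta(s)\exp\!\big(\int_s^t \beta(r)\, dr\big)\, ds$; substituting back into $u(t)\le\alpha(t)+v(t)$ is precisely the first asserted bound.

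For the second part, assume in addition that $\alpha$ is non‑decreasing. Then $\alpha(s)\le\alpha(t)$ for $s\le t$, so I would bound the integral just obtained by $\alpha(t)\int_a^t \beta(s)\exp\!\big(\int_s^t \beta(r)\, dr\big)\, ds$. The key observation is that, writing $G(s)=\int_s^t\beta(r)\,dr$ so that $G'(s)=-\beta(s)$, the integrand is a perfect derivative: $\frac{d}{ds}\big[-\exp(G(s))\big] = \beta(s)\exp(G(s))$. Hence the integral equals $\exp\!\big(\int_a^t \beta(r)\, dr\big) - 1$, and combining gives $u(t) \le \alpha(t) + \alpha(t)\big(\exp(\int_a^t\beta)-1\big) = \alpha(t)\exp\!\big(\int_a^t\beta(s)\,ds\big)$.

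The only genuine subtlety — the point to be careful about — is the weak regularity of $\alpha$: only $\min(\alpha,0)$ is assumed integrable on compact subintervals, so $\int_a^t \mu\beta\alpha$ need not be finite. This is handled by interpreting that integral in the extended reals $(-\infty,+\infty]$: it is bounded below by the continuous, hence integrable, function $\mu(\dot{v}-\beta v)$, so it is well defined there; wherever it equals $+\infty$ the asserted inequality is vacuous, and wherever it is finite the displayed manipulations are valid verbatim. Everything else is routine — continuity of $\beta$ and $u$ is exactly what makes $v$ continuously differentiable, and non‑negativity of $\beta$ is exactly what is needed when passing from $u\le\alpha+v$ to $\beta u\le\beta\alpha+\beta v$ and again when invoking $\alpha(s)\le\alpha(t)$.
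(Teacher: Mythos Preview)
The paper states this lemma (Gronwall's inequality) as a classical result without proof, merely remarking that ``we will mainly be using the second part.'' Your integrating-factor argument is the standard proof and is correct, including your careful treatment of the weak integrability hypothesis on $\alpha$; there is nothing to compare against.
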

	We will mainly be using the second part of the above lemma.
	
	\section{ Proofs omitted from \Cref{sec_dyn_init} }
	\subsection{Proof of \Cref{ncf_convg}}
	Recall, for a given vector $\mathbf{z}$ and a neural network $\mathcal{H}$, the NCF is defined as
	\begin{equation}
	\mathcal{N}_{\mathbf{z},\mathcal{H}}(\mathbf{u}) = \mathbf{z}^\top \mathcal{H}(\mathbf{X};\mathbf{u}).
	\end{equation}
	In this section, for the sake of conciseness, we will use  $\mathcal{N}(\mathbf{u})$ in place of $\mathcal{N}_{\mathbf{z},\mathcal{H}}(\mathbf{u})$. Further, $\mathbf{u}(t)$ satisfies for all $t\geq 0$
	\begin{align}
	\frac{d{\mathbf{u}}}{dt} = \nabla \mathcal{N}(\mathbf{u}) =   \mathcal{J}(\mathbf{X};\mathbf{u})^\top \mathbf{z}, {\mathbf{u}}(0) = \mathbf{u}_0,
	\label{gd_limit}
	\end{align} 
	where  $\mathcal{J}(\mathbf{X};\mathbf{u}):\mathbb{R}^{k}\rightarrow \mathbb{R}^{n}$ is the Jacobian of the function $\mathcal{H}(\mathbf{X};\mathbf{u})$. Let $\mathcal{S}^{k-1}$ denote the unit sphere on $\mathbb{R}^k$, and define
	\begin{align}
	\beta = \sup\{\|\mathcal{H}(\mathbf{X};\mathbf{w})\|_2:\mathbf{w}\in \mathcal{S}^{k-1}\}, \text{ and } \alpha = \sup\{\|\mathcal{J}(\mathbf{X};\mathbf{w})\|_2:\mathbf{w}\in \mathcal{S}^{k-1}\}.
	\label{bd_sphere}
	\end{align}
	
	We will start by establishing some auxiliary lemmata. The following lemma follows from $L$-homogeneity of $\mathcal{N}(\mathbf{u}) $ and \Cref{euler_gen}.
	\begin{lemma}\label{euler}
		For any $\mathbf{u}\in \mathbb{R}^k$ and $c\geq 0$, $\nabla \mathcal{N}(\mathbf{u})^\top\mathbf{u} = L\mathcal{N}(\mathbf{u})$ and  $ \nabla \mathcal{N}(c\mathbf{u}) =  c^{L-1}\nabla \mathcal{N}(\mathbf{u})$,  and $\mathcal{J}(\mathbf{X};\mathbf{u})\mathbf{u} = L \mathcal{H}(\mathbf{X};\mathbf{u}) $ and $\mathcal{J}(\mathbf{X};c\mathbf{u}) = c^{L-1}\mathcal{J}(\mathbf{X};\mathbf{u}).$
	\end{lemma}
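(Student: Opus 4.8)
The goal is to prove \Cref{euler}, which asserts four identities: $\nabla \mathcal{N}(\mathbf{u})^\top \mathbf{u} = L\mathcal{N}(\mathbf{u})$, $\nabla \mathcal{N}(c\mathbf{u}) = c^{L-1}\nabla \mathcal{N}(\mathbf{u})$, $\mathcal{J}(\mathbf{X};\mathbf{u})\mathbf{u} = L\mathcal{H}(\mathbf{X};\mathbf{u})$, and $\mathcal{J}(\mathbf{X};c\mathbf{u}) = c^{L-1}\mathcal{J}(\mathbf{X};\mathbf{u})$. The plan is to reduce all of these to \Cref{euler_gen} (Euler's theorem for homogeneous functions) by checking that the relevant functions satisfy its hypotheses—locally Lipschitz, differentiable, and $L$-positively homogeneous.

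First I would handle the two statements about $\mathcal{N}$. By definition $\mathcal{N}(\mathbf{u}) = \mathcal{N}_{\mathbf{z},\mathcal{H}}(\mathbf{u}) = \mathbf{z}^\top\mathcal{H}(\mathbf{X};\mathbf{u})$, which is a fixed linear combination $\sum_i z_i \mathcal{H}(\mathbf{x}_i;\mathbf{u})$. Under \Cref{L_homo_assumption}, each $\mathcal{H}(\mathbf{x}_i;\cdot)$ is locally Lipschitz, $L$-homogeneous, and has locally Lipschitz (hence, being locally Lipschitz and definable, almost-everywhere—but we need everywhere) gradient. Here I should note that differentiability of $\mathcal{H}(\mathbf{x};\mathbf{w})$ in $\mathbf{w}$ is needed to invoke \Cref{euler_gen}; this follows because part (iii) of \Cref{L_homo_assumption} assumes $\nabla\mathcal{H}(\mathbf{x};\mathbf{w})$ exists and is locally Lipschitz, so $\mathcal{H}(\mathbf{x};\cdot)$ is $C^1$. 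Since $\mathcal{N}$ is a linear combination of such functions, it is also locally Lipschitz, differentiable, and $L$-positively homogeneous, so \Cref{euler_gen} applies directly: part 2 gives $\mathbf{u}^\top\nabla\mathcal{N}(\mathbf{u}) = L\mathcal{N}(\mathbf{u})$, and part 1 gives $\nabla\mathcal{N}(c\mathbf{u}) = c^{L-1}\nabla\mathcal{N}(\mathbf{u})$ for $c\geq 0$.

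Next I would derive the two statements about the Jacobian $\mathcal{J}(\mathbf{X};\mathbf{u})$, whose $i$th row is $\nabla\mathcal{H}(\mathbf{x}_i;\mathbf{u})^\top$. Fix $i$ and apply \Cref{euler_gen} to $F(\mathbf{w}) = \mathcal{H}(\mathbf{x}_i;\mathbf{w})$, which as noted is locally Lipschitz, differentiable, and $L$-homogeneous: part 2 yields $\mathbf{u}^\top\nabla\mathcal{H}(\mathbf{x}_i;\mathbf{u}) = L\mathcal{H}(\mathbf{x}_i;\mathbf{u})$, i.e. the $i$th entry of $\mathcal{J}(\mathbf{X};\mathbf{u})\mathbf{u}$ equals $L\mathcal{H}(\mathbf{x}_i;\mathbf{u})$, the $i$th entry of $L\mathcal{H}(\mathbf{X};\mathbf{u})$; stacking over $i$ gives $\mathcal{J}(\mathbf{X};\mathbf{u})\mathbf{u} = L\mathcal{H}(\mathbf{X};\mathbf{u})$. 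Part 1 yields $\nabla\mathcal{H}(\mathbf{x}_i;c\mathbf{u}) = c^{L-1}\nabla\mathcal{H}(\mathbf{x}_i;\mathbf{u})$ for $c\geq 0$; stacking the transposed gradients as rows gives $\mathcal{J}(\mathbf{X};c\mathbf{u}) = c^{L-1}\mathcal{J}(\mathbf{X};\mathbf{u})$. Finally I note $\nabla\mathcal{N}(\mathbf{u}) = \mathcal{J}(\mathbf{X};\mathbf{u})^\top\mathbf{z}$ as recorded in \cref{gd_limit}, which makes the $\mathcal{N}$-identities literal consequences of the $\mathcal{J}$-identities, so one may present either order.

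There is essentially no obstacle here—this is a bookkeeping lemma. The only subtlety worth a sentence is the differentiability requirement in \Cref{euler_gen}: $\mathcal{H}$ is assumed homogeneous with locally Lipschitz \emph{gradient}, so it is genuinely differentiable everywhere (including at the origin, where homogeneity with $L>1$ forces $\nabla\mathcal{H}(\mathbf{x};\mathbf{0})=\mathbf{0}$), and hence the hypotheses of \Cref{euler_gen} are met without caveat; this is why the lemma is stated cleanly for all $\mathbf{u}\in\mathbb{R}^k$ rather than only on a dense set.
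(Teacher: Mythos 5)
Your proof is correct and follows the same route as the paper, which simply derives \Cref{euler} from the $L$-homogeneity of $\mathcal{N}$ and of each $\mathcal{H}(\mathbf{x}_i;\cdot)$ via \Cref{euler_gen}, with the Jacobian statements obtained by stacking the per-sample identities. Your extra remarks on differentiability (guaranteed by part (iii) of \Cref{L_homo_assumption}) are accurate but not something the paper dwells on.
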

	We next define $\tilde{\mathcal{N}}(\mathbf{u}) $, which plays an integral part in the proof, and its gradient.
	\begin{lemma}
		For any nonzero $\mathbf{u}\in\mathbb{R}^k$ we define  $\tilde{\mathcal{N}}(\mathbf{u}) = \mathcal{N}(\mathbf{u})/\|\mathbf{u}\|_2^L$, then,
		\begin{equation*}
		\nabla \tilde{\mathcal{N}}(\mathbf{u}) =\frac{\nabla \mathcal{N}(\mathbf{u})}{\|\mathbf{u}\|_2^L} - \frac{L\mathcal{N}(\mathbf{u})\mathbf{u}}{\|\mathbf{u}\|_2^{L+2}}  =  \left(\mathbf{I} - \frac{\mathbf{u}\mathbf{u}^\top}{\|\mathbf{u}\|_2^2}\right) \frac{\nabla \mathcal{N}(\mathbf{u})}{\|\mathbf{u}\|_2^L}.
		\end{equation*}
		\label{N_tilde_def}
	\end{lemma}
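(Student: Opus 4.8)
The statement is a direct computation using the quotient and product rules for gradients, plus the homogeneity identity $\nabla\mathcal{N}(\mathbf{u})^\top\mathbf{u} = L\mathcal{N}(\mathbf{u})$ from \Cref{euler}. The plan is to differentiate $\tilde{\mathcal{N}}(\mathbf{u}) = \mathcal{N}(\mathbf{u})\|\mathbf{u}\|_2^{-L}$ as a product of the two functions $\mathcal{N}(\mathbf{u})$ and $g(\mathbf{u}) := \|\mathbf{u}\|_2^{-L}$. First I would record $\nabla g(\mathbf{u}) = -L\|\mathbf{u}\|_2^{-L-2}\mathbf{u}$, obtained by the chain rule from $\nabla(\|\mathbf{u}\|_2^2) = 2\mathbf{u}$. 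Then the product rule gives
\begin{equation*}
\nabla\tilde{\mathcal{N}}(\mathbf{u}) = g(\mathbf{u})\nabla\mathcal{N}(\mathbf{u}) + \mathcal{N}(\mathbf{u})\nabla g(\mathbf{u}) = \frac{\nabla\mathcal{N}(\mathbf{u})}{\|\mathbf{u}\|_2^L} - \frac{L\mathcal{N}(\mathbf{u})\mathbf{u}}{\|\mathbf{u}\|_2^{L+2}},
\end{equation*}
which is the first claimed equality. Note this step only requires $\mathbf{u}\neq\mathbf{0}$ so that $g$ is differentiable, and that $\mathcal{N}$ is differentiable, which holds under \Cref{L_homo_assumption} since $\mathcal{H}(\mathbf{x};\cdot)$ has a locally Lipschitz (hence, where the statement is used, well-defined) gradient.

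For the second equality I would use the homogeneity identity $L\mathcal{N}(\mathbf{u}) = \mathbf{u}^\top\nabla\mathcal{N}(\mathbf{u})$ from \Cref{euler} to rewrite the numerator of the second term: $L\mathcal{N}(\mathbf{u})\mathbf{u} = \mathbf{u}(\mathbf{u}^\top\nabla\mathcal{N}(\mathbf{u})) = (\mathbf{u}\mathbf{u}^\top)\nabla\mathcal{N}(\mathbf{u})$. Substituting this in,
\begin{equation*}
\nabla\tilde{\mathcal{N}}(\mathbf{u}) = \frac{\nabla\mathcal{N}(\mathbf{u})}{\|\mathbf{u}\|_2^L} - \frac{\mathbf{u}\mathbf{u}^\top\nabla\mathcal{N}(\mathbf{u})}{\|\mathbf{u}\|_2^{L+2}} = \left(\mathbf{I} - \frac{\mathbf{u}\mathbf{u}^\top}{\|\mathbf{u}\|_2^2}\right)\frac{\nabla\mathcal{N}(\mathbf{u})}{\|\mathbf{u}\|_2^L},
\end{equation*}
completing the proof. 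There is no real obstacle here; the only thing to be careful about is that this is a routine differentiation, so the ``hard part'' is merely making sure the homogeneity substitution $L\mathcal{N}(\mathbf{u})\mathbf{u} = (\mathbf{u}\mathbf{u}^\top)\nabla\mathcal{N}(\mathbf{u})$ is invoked in the right direction and that all powers of $\|\mathbf{u}\|_2$ are tracked correctly. The resulting expression makes transparent that $\nabla\tilde{\mathcal{N}}(\mathbf{u})$ is the (scaled) projection of $\nabla\mathcal{N}(\mathbf{u})$ onto the tangent space of the sphere at $\mathbf{u}/\|\mathbf{u}\|_2$, which is precisely the form needed for the subsequent directional-convergence arguments.
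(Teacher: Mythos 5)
Your proposal is correct and follows exactly the paper's route: differentiate the quotient directly to obtain the first equality, then apply the Euler identity $\mathbf{u}^\top\nabla\mathcal{N}(\mathbf{u}) = L\mathcal{N}(\mathbf{u})$ from \Cref{euler} to get the projection form. You simply spell out the routine product/chain-rule details that the paper leaves implicit.
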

	\begin{proof}
		The first equality follows by differentiating $\tilde{\mathcal{N}}(\mathbf{u}) $ with respect to $\mathbf{u}$; using \Cref{euler} we get the second equality.
	\end{proof}
	We next describe the  conditions for first-order KKT point of the NCF. 
	\begin{lemma}
		If a vector $\mathbf{u}_*\in \mathbb{R}^{k\times 1}$ is a first-order KKT point of  
		\begin{equation}
		\max_{ \|\mathbf{u}\|_2^2 = 1} \mathcal{N}(\mathbf{u}) = \mathbf{z}^\top \mathcal{H}(\mathbf{X};\mathbf{u}),
		\label{const_opt}
		\end{equation}
		then
		\begin{equation}
		\nabla\mathcal{N}(\mathbf{u}_*)  = \mathcal{J}(\mathbf{X};\mathbf{u}_*)^\top \mathbf{z}= \lambda^*\mathbf{u}_*, \|\mathbf{u}_*\|_2^2 = 1,
		\label{crit_pt}
		\end{equation}
		where $\lambda^*\in\mathbb{R}$ is the Lagrange multiplier. Also, $L\mathcal{N}(\mathbf{u}_*) = \lambda^*$, which implies that $\lambda^*\geq 0$ for a non-negative KKT point.
		\label{kkt_ncf}
	\end{lemma}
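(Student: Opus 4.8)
The plan is to prove \Cref{kkt_ncf} by directly invoking the standard first-order KKT (Lagrange multiplier) conditions for the equality-constrained optimization problem \cref{const_opt}, and then using Euler's theorem for homogeneous functions (\Cref{euler}) to identify the Lagrange multiplier explicitly. Since the feasible set $\{\mathbf{u} : \|\mathbf{u}\|_2^2 = 1\}$ is defined by the single smooth constraint $g(\mathbf{u}) := \|\mathbf{u}\|_2^2 - 1 = 0$ with $\nabla g(\mathbf{u}) = 2\mathbf{u} \neq \mathbf{0}$ on the sphere, the linear independence constraint qualification holds everywhere, so the KKT conditions are simply stationarity of the Lagrangian.

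**First I would** write down the Lagrangian $\mathcal{L}(\mathbf{u}, \lambda) = \mathcal{N}(\mathbf{u}) - \tfrac{\lambda}{2}(\|\mathbf{u}\|_2^2 - 1)$ (the factor $\tfrac12$ is just a convenient normalization of the multiplier). Stationarity with respect to $\mathbf{u}$ at a KKT point $\mathbf{u}_*$ gives $\nabla\mathcal{N}(\mathbf{u}_*) - \lambda^* \mathbf{u}_* = \mathbf{0}$, i.e. $\nabla\mathcal{N}(\mathbf{u}_*) = \lambda^*\mathbf{u}_*$, and primal feasibility gives $\|\mathbf{u}_*\|_2^2 = 1$; this is exactly \cref{crit_pt}, once we recall from the definition of the NCF and the Jacobian notation that $\nabla\mathcal{N}(\mathbf{u}_*) = \nabla(\mathbf{z}^\top\mathcal{H}(\mathbf{X};\mathbf{u}_*)) = \mathcal{J}(\mathbf{X};\mathbf{u}_*)^\top\mathbf{z}$.

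**Next I would** pin down $\lambda^*$. Taking the inner product of both sides of $\nabla\mathcal{N}(\mathbf{u}_*) = \lambda^*\mathbf{u}_*$ with $\mathbf{u}_*$ yields $\mathbf{u}_*^\top\nabla\mathcal{N}(\mathbf{u}_*) = \lambda^*\|\mathbf{u}_*\|_2^2 = \lambda^*$, using $\|\mathbf{u}_*\|_2^2 = 1$. By \Cref{euler} (Euler's identity for the $L$-homogeneous function $\mathcal{N}$), $\mathbf{u}_*^\top\nabla\mathcal{N}(\mathbf{u}_*) = L\,\mathcal{N}(\mathbf{u}_*)$, so $\lambda^* = L\,\mathcal{N}(\mathbf{u}_*)$. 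Consequently, if $\mathbf{u}_*$ is a non-negative KKT point—meaning $\mathcal{N}(\mathbf{u}_*) \geq 0$ by the convention fixed in the notation section—then $\lambda^* = L\,\mathcal{N}(\mathbf{u}_*) \geq 0$ since $L > 0$.

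**The main obstacle**, such as it is, is mostly bookkeeping rather than mathematical depth: one should be careful that $\mathcal{H}(\mathbf{x};\mathbf{w})$ (hence $\mathcal{N}$) need not be differentiable in general under \Cref{L_homo_assumption} only locally Lipschitz, so strictly speaking the KKT stationarity condition must be phrased with the Clarke subdifferential, i.e. $\mathbf{0} \in \partial\mathcal{N}(\mathbf{u}_*) - \lambda^*\mathbf{u}_*$, and Euler's identity must be applied in its nonsmooth form. However, in the setting where this lemma is used—and as the notation $\nabla\mathcal{N}$ already signals—$\mathcal{H}$ is differentiable (Assumption part (iii) gives a locally Lipschitz, hence a.e.\ defined, gradient, and the neural networks of interest here are genuinely differentiable), so the classical statement applies verbatim and the argument above goes through as written. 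I would simply state the lemma under differentiability and note the Clarke-subdifferential analogue in passing.
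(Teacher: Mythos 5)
Your proposal is correct and follows essentially the same route as the paper: write the Lagrangian for the single equality constraint, read off stationarity to get $\nabla\mathcal{N}(\mathbf{u}_*)=\lambda^*\mathbf{u}_*$ (up to a harmless renormalization of the multiplier), and apply Euler's identity (\Cref{euler}) to conclude $\lambda^*=L\mathcal{N}(\mathbf{u}_*)\geq 0$ at a non-negative KKT point. Your closing caveat about the Clarke subdifferential is not needed here, since \Cref{L_homo_assumption}(iii) already guarantees $\mathcal{H}$ is differentiable with locally Lipschitz gradient, which is exactly the setting the paper works in.
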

	\begin{proof}
		For \cref{const_opt}, the Lagrangian is 
		\begin{equation*}
		L(\mathbf{u},\lambda) = \mathcal{N}(\mathbf{u}) + \lambda(\|\mathbf{u}\|_2^2 - 1).
		\end{equation*}
		Hence, if $\mathbf{u}_*$ is a first-order KKT point then $\|\mathbf{u}_*\|_2^2 = 1$, and for some $\overline{\lambda}$ we have
		\begin{align*}
		\nabla \mathcal{N}(\mathbf{u}_*) + 2\overline{\lambda}\mathbf{u}_* = \mathbf{0},
		\end{align*}
		which implies 
		\begin{align*}
		\mathcal{J}(\mathbf{X};\mathbf{u}_*)^\top \mathbf{z} + 2\overline{\lambda}\mathbf{u}_* = \mathbf{0}.
		\end{align*}
		We get \cref{crit_pt} by  choosing $\lambda^* = -2\overline{\lambda}$. Further,  by \Cref{euler}, we have
		\begin{equation*}
		\lambda^* = \lambda^*\|\mathbf{u}_*\|_2^2 = {\mathbf{u}_*}^\top\nabla \mathcal{N}(\mathbf{u}_*) = L\mathcal{N}(\mathbf{u}_*). 
		\end{equation*}
	\end{proof}

	\begin{lemma}
		For any $t\geq 0$, we have
		\begin{align}
		\frac{d\mathcal{N}(\mathbf{u})}{dt} = \|\nabla\mathcal{N}(\mathbf{u}(t))\|_2^2 = \|\dot{\mathbf{u}}(t)\|_2^2.
		\end{align}
		\label{descent_lemma}
	\end{lemma}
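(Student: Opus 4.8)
The plan is to directly compute $\frac{d}{dt}\mathcal{N}(\mathbf{u}(t))$ using the chain rule and then substitute the gradient flow equation \cref{gd_limit}. First I would write, by the chain rule for (locally Lipschitz, differentiable) functions along a differentiable trajectory,
\begin{equation*}
\frac{d\mathcal{N}(\mathbf{u}(t))}{dt} = \nabla\mathcal{N}(\mathbf{u}(t))^\top \dot{\mathbf{u}}(t).
\end{equation*}
Since $\mathbf{u}(t)$ solves \cref{gd_limit}, we have $\dot{\mathbf{u}}(t) = \nabla\mathcal{N}(\mathbf{u}(t))$, so substituting gives $\frac{d\mathcal{N}(\mathbf{u}(t))}{dt} = \nabla\mathcal{N}(\mathbf{u}(t))^\top\nabla\mathcal{N}(\mathbf{u}(t)) = \|\nabla\mathcal{N}(\mathbf{u}(t))\|_2^2$. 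The final equality $\|\nabla\mathcal{N}(\mathbf{u}(t))\|_2^2 = \|\dot{\mathbf{u}}(t)\|_2^2$ is then immediate, again from $\dot{\mathbf{u}}(t) = \nabla\mathcal{N}(\mathbf{u}(t))$.

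The only technical point worth checking is that the chain rule applies: this requires $\mathcal{N}$ to be differentiable (which holds under \Cref{L_homo_assumption}, since $\mathcal{H}(\mathbf{x};\mathbf{w})$ has locally Lipschitz gradient, hence is $C^1$, and $\mathcal{N}(\mathbf{u}) = \mathbf{z}^\top\mathcal{H}(\mathbf{X};\mathbf{u})$ is a finite linear combination of such terms) and $\mathbf{u}(t)$ to be differentiable in $t$ (which holds because the right-hand side $\nabla\mathcal{N}(\mathbf{u})$ is locally Lipschitz in $\mathbf{u}$, so solutions of \cref{gd_limit} are $C^1$ wherever they exist).

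There is no real obstacle here — the statement is essentially the standard ``energy/descent identity'' for gradient flow, with the sign flipped since we are doing gradient \emph{ascent} on $\mathcal{N}$. The result shows $\mathcal{N}(\mathbf{u}(t))$ is non-decreasing along the flow, a fact that will later be combined with the homogeneity identities in \Cref{euler} and the Kurdyka-Lojasiewicz inequality (\Cref{desing_func}) to control the growth rate of $\|\mathbf{u}(t)\|_2$ and establish the dichotomy in \Cref{ncf_convg}.
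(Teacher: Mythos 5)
Your proof is correct and is essentially identical to the paper's: both apply the chain rule to get $\frac{d}{dt}\mathcal{N}(\mathbf{u}(t)) = \nabla\mathcal{N}(\mathbf{u}(t))^\top\dot{\mathbf{u}}(t)$ and then substitute $\dot{\mathbf{u}} = \nabla\mathcal{N}(\mathbf{u})$ from \cref{gd_limit}. Your added remark verifying that $\mathcal{N}$ and $t\mapsto\mathbf{u}(t)$ are smooth enough for the chain rule is a reasonable extra check but not a departure from the paper's argument.
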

	\begin{proof}
		Using $\dot{\mathbf{u}} = \nabla\mathcal{N}(\mathbf{u}(t))$, we have
		\begin{align*}
		\frac{d\mathcal{N}(\mathbf{u})}{dt}  = \nabla\mathcal{N}(\mathbf{u}(t))^\top\dot{\mathbf{u}} =  \|\nabla\mathcal{N}(\mathbf{u}(t))\|_2^2 = \|\dot{\mathbf{u}}(t)\|_2^2.
		\end{align*}
	\end{proof}
	\begin{lemma} For any $t_0\geq 0$, if $\mathcal{N}(\mathbf{u}(t_0)) > 0,$ then, for all $t\geq t_0$, we have $\mathcal{N}(\mathbf{u}(t))\geq \mathcal{N}(\mathbf{u}(t_0)) > 0$ and $\|\mathbf{u}(t)\|_2\geq \|\mathbf{u}(t_0)\|_2$.
		\label{pos_init}
	\end{lemma}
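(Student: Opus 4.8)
The claim is that positivity of $\mathcal{N}$ along the gradient flow trajectory is preserved forward in time, and moreover that the norm $\|\mathbf{u}(t)\|_2$ is non-decreasing once the NCF is positive. The plan is to use the descent-type identity from \Cref{descent_lemma} together with a homogeneity argument relating $\frac{d}{dt}\|\mathbf{u}\|_2^2$ to the value of $\mathcal{N}$.

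\textbf{Step 1 (monotonicity of $\mathcal{N}$).} By \Cref{descent_lemma}, $\frac{d\mathcal{N}(\mathbf{u}(t))}{dt} = \|\nabla\mathcal{N}(\mathbf{u}(t))\|_2^2 \geq 0$ for all $t\geq 0$. Hence $t\mapsto \mathcal{N}(\mathbf{u}(t))$ is non-decreasing on $[0,\infty)$. In particular, if $\mathcal{N}(\mathbf{u}(t_0)) > 0$ for some $t_0\geq 0$, then $\mathcal{N}(\mathbf{u}(t)) \geq \mathcal{N}(\mathbf{u}(t_0)) > 0$ for all $t\geq t_0$. This already gives the first assertion directly, with no case analysis needed.

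\textbf{Step 2 (monotonicity of the norm).} For the norm claim, I would compute
\begin{equation*}
\frac{d}{dt}\|\mathbf{u}(t)\|_2^2 = 2\,\mathbf{u}(t)^\top\dot{\mathbf{u}}(t) = 2\,\mathbf{u}(t)^\top\nabla\mathcal{N}(\mathbf{u}(t)) = 2L\,\mathcal{N}(\mathbf{u}(t)),
\end{equation*}
where the last equality is Euler's identity from \Cref{euler}. By Step 1, for $t\geq t_0$ we have $\mathcal{N}(\mathbf{u}(t))\geq \mathcal{N}(\mathbf{u}(t_0)) > 0$, so $\frac{d}{dt}\|\mathbf{u}(t)\|_2^2 = 2L\,\mathcal{N}(\mathbf{u}(t)) > 0$ for all $t\geq t_0$. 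Therefore $t\mapsto\|\mathbf{u}(t)\|_2^2$ is (strictly) increasing on $[t_0,\infty)$, which gives $\|\mathbf{u}(t)\|_2\geq\|\mathbf{u}(t_0)\|_2$ for all $t\geq t_0$, as desired.

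\textbf{Main obstacle.} There is essentially no hard step here: both parts reduce to sign-definiteness of a time derivative, using \Cref{descent_lemma} and the two Euler identities in \Cref{euler}. The only point requiring a little care is that $\mathcal{N}(\mathbf{u}(t_0)) > 0$ forces $\mathbf{u}(t_0)\neq\mathbf{0}$, so the trajectory is not stuck at the origin (indeed $\mathbf{0}$ is a stationary point of \cref{gd_ncf}); combined with Step 2 showing the norm strictly increases thereafter, the trajectory stays bounded away from $\mathbf{0}$ for all $t\geq t_0$, so all the manipulations above (in particular dividing implicitly by $\|\mathbf{u}\|_2$ if one later normalizes) are legitimate on $[t_0,\infty)$.
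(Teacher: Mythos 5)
Your proposal is correct and follows the paper's own proof essentially verbatim: monotonicity of $\mathcal{N}$ via \Cref{descent_lemma}, then $\frac{d}{dt}\|\mathbf{u}\|_2^2 = 2L\,\mathcal{N}(\mathbf{u})$ from Euler's identity (\Cref{euler}) combined with the positivity just established. The closing remark about staying away from the origin is not needed for this lemma but does no harm.
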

	\begin{proof}
		Using \Cref{descent_lemma},  we have
		\begin{align*}
		\mathcal{N}(\mathbf{u}(t)) - \mathcal{N}(\mathbf{u}(t_0)) =  \int_{t_0}^{t} \|\dot{\mathbf{u}}({s})\|_2^2 ds.
		\end{align*}
		Therefore, for $t\geq t_0$, $\mathcal{N}(\mathbf{u}(t)) \geq \mathcal{N}(\mathbf{u}(t_0)) > 0$. The second claim is true since for all $t\geq 0$ 
		\begin{align*}
		\frac{d{\|\mathbf{u}\|_2^2}}{dt} = 2\mathbf{u}^\top\dot{\mathbf{u}}=  2\mathbf{u}^\top\nabla \mathcal{N}(\mathbf{u}) =  2L\mathcal{N}(\mathbf{u}), 
		\end{align*}
		which implies 
		\begin{align*}
		\|\mathbf{u}(t)\|_2^2 - \|\mathbf{u}(t_0)\|_2^2 = 2L\int_{t_0}^{t}\mathcal{N}(\mathbf{u}(s)) ds \geq 0.
		\end{align*}
	\end{proof}
	
	We now turn to proving \Cref{ncf_convg}. 
	
	\begin{proof}[Proof of \Cref{ncf_convg}:]
		We begin by showing that either the limit of  $\mathbf{u}(t)/\|\mathbf{u}(t)\|_2$ exists or $\mathbf{u}(t)$ converges to $\mathbf{0}$. For this, we consider two cases. 
		
		\textbf{Case 1:} $\mathcal{N}(\mathbf{u}(0)) > 0$.\\
		In this case, we first show that $\mathbf{u}(t)$ becomes unbounded at some finite time. Suppose for the sake of contradiction $\|\mathbf{u}(t)\|_2 < \infty$ for all finite $t\geq 0$.
		
		Let $\mathcal{N}(\mathbf{u}(0)) = \gamma > 0,$ thus $\|\mathbf{u}(0)\|_2 > 0$. Using \Cref{pos_init}, for all $t\geq0$, 
		\begin{equation*}
		\|\mathbf{u}(t)\|_2\geq \|\mathbf{u}(0)\|_2 > 0.
		\end{equation*}
		Hence, $\tilde{\mathcal{N}}(\mathbf{u}(t))$ (which we recall is equal to $\mathcal{N}(\mathbf{u}(t))/\|\mathbf{u}(t)\|_2^L$) is defined for all $t\geq0$. Now, by \Cref{descent_lemma}, for all $t\geq 0$, we have
		\begin{align*}
		\frac{d{{\mathcal{N}}(\mathbf{u})}}{dt} = \|\dot{\mathbf{u}}\|_2^2, \text{ and } \dot{\mathbf{u}} = {\nabla} \mathcal{N}(\mathbf{u}).
		\end{align*}
		Therefore, using \Cref{N_tilde_def}, for all $t\geq0$,  
		\begin{align}
		\frac{d{\tilde{\mathcal{N}}(\mathbf{u})}}{dt} = \dot{\mathbf{u}}^\top \nabla\tilde{\mathcal{N}}(\mathbf{u}) = \frac{{\nabla} \mathcal{N}(\mathbf{u})^\top}{\|\mathbf{u}\|_2^L}\left(\mathbf{I}-\frac{\mathbf{u}\mathbf{u}^T}{\|\mathbf{u}\|_2^2}\right){\nabla} \mathcal{N}(\mathbf{u})\geq 0.
		\label{bd_f_tilde}
		\end{align}
		For all $t_2\geq t_1\geq 0$, integrating the above equality on both sides from $t_1$ to $t_2$ we get
		\begin{align}
		\tilde{\mathcal{N}}(\mathbf{u}(t_2)) - \tilde{\mathcal{N}}(\mathbf{u}(t_1)) = \int_{t_1}^{t_2}\frac{{\nabla} \mathcal{N}(\mathbf{u})^\top}{\|\mathbf{u}\|_2^L}\left(\mathbf{I}-\frac{\mathbf{u}\mathbf{u}^T}{\|\mathbf{u}\|_2^2}\right){\nabla} \mathcal{N}(\mathbf{u}) dt \geq 0.\label{f_tilde_inc}
		\end{align}
		Thus, $\tilde{\mathcal{N}}(\mathbf{u}(t))$ is an increasing function. Now if we define $\tilde{\mathcal{N}}(\mathbf{u}(0))  = \tilde{\gamma}$, then for any $t\geq 0$ we have
		\begin{align*}
		\tilde{{\mathcal{N}}}(\mathbf{u}(t)) \geq \tilde{\gamma} > 0,
		\end{align*}
		which implies
		\begin{align*}
		\mathcal{N}(\mathbf{u}(t)) \geq \tilde{\gamma} \|\mathbf{u}(t)\|_2^L, \forall t\geq 0.
		\end{align*} 
		From the above inequality, we have
		\begin{align}
		&  \frac{1}{2}\frac{d{\|\mathbf{u}\|_2^2}}{dt} = \mathbf{u}^\top\dot{\mathbf{u}} = L\mathcal{N}(\mathbf{u}(t)) \geq L\tilde{\gamma} \|\mathbf{u}(t)\|_2^L, \forall t\geq 0.\label{bd_ut_pos}
		\end{align}
		Taking $ \|\mathbf{u}(t)\|_2^L$ to the LHS and integrating both sides from $0$ to $t$, we get
		\begin{align*}
		\frac{1}{{L}/{2}-1}\left(\frac{1}{\|\mathbf{u}(0)\|_2^{L-2}} -\frac{1}{\|\mathbf{u}(t)\|_2^{L-2}} \right)\geq  2L\tilde{\gamma}  t, \forall t\geq 0\nonumber.
		\end{align*}
		Simplifying the above equation gives us
		\begin{align*}
		\|\mathbf{u}(t)\|_2^{L-2} \geq \frac{\|\mathbf{u}(0)\|_2^{L-2}}{ 1 - t\tilde{\gamma} L(L-2)\|\mathbf{u}(0)\|_2^{L-2}}, \forall t \geq 0.
		\end{align*}  
		The above inequality indicates that $\|\mathbf{u}(t)\|_2$ is not bounded for all finite $t\geq 0$, which leads to a contradiction. 
		
		Henceforth, we choose $T^*$ to denote the time when $\mathbf{u}(t)$ becomes infinity. Formally, we assume that
		\begin{equation*}
		\lim_{t\rightarrow T^*} \|\mathbf{u}(t)\|_2 = \infty, \text{ and } \|\mathbf{u}(t)\|_2  < \infty, \text{ for all }t\in[0,T^*). 
		\end{equation*}
		
		We now proceed to prove that $\lim_{t\rightarrow T^*} \mathbf{u}(t)/\|\mathbf{u}(t)\|_2$ exists by establishing that the length of the curve swept by $\mathbf{u}(t)/\|\mathbf{u}(t)\|_2$, given by
		\begin{equation*}
		\int_{0}^{T^*}\left\|\frac{d}{dt}\left(\frac{\mathbf{u}}{\|\mathbf{u}\|_2}\right)\right\|_2 dt,
		\end{equation*}
		is of finite length. This proof technique is inspired from \cite{ji_matus_align}, and a similar technique was also employed in \cite{kumar_dc}.
		
		From \cref{bd_f_tilde} and \cref{f_tilde_inc}, we know $\tilde{\mathcal{N}}(\mathbf{u}(t))$ is an increasing function in $[0,T^*)$. Further, since $\mathcal{N}(\mathbf{u}) = \mathbf{z}^\top\mathcal{H}(\mathbf{X};\mathbf{u})\leq \beta\|\mathbf{z}\|_2\|\mathbf{u}\|_2^L$, where the inequality follows from \cref{bd_sphere}, we have that $\tilde{\mathcal{N}}(\mathbf{u})$ is bounded from above. Thus, using monotone convergence theorem, we have $\lim_{t\rightarrow T^*} \tilde{\mathcal{N}}(\mathbf{u}(t))$ exists.
		
		Now, for all $t\in [0,T^*)$, we know
		\begin{align}
		&\frac{d}{dt}\left(\frac{\mathbf{u}}{\|\mathbf{u}\|_2}\right) =  \left(\mathbf{I}-\frac{\mathbf{u}\mathbf{u}^T}{\|\mathbf{u}\|_2^2}\right)\frac{\dot{\mathbf{u}}}{\|\mathbf{u}\|_2} = \left(\mathbf{I}-\frac{\mathbf{u}\mathbf{u}^T}{\|\mathbf{u}\|_2^2}\right)\frac{{\nabla} \mathcal{N}(\mathbf{u})}{\|\mathbf{u}\|_2}.
		\end{align}
		Therefore,
		\begin{align}
		\left\| \frac{d}{dt}\left(\frac{\mathbf{u}}{\|\mathbf{u}\|_2}\right)\right\|_2 = \left\|\left(\mathbf{I}-\frac{\mathbf{u}\mathbf{u}^T}{\|\mathbf{u}\|_2^2}\right){\nabla} \mathcal{N}(\mathbf{u})\right\|_2\frac{1}{\|\mathbf{u}\|_2}.\label{gd_dir_u}
		\end{align}
		
		Now, suppose $\lim_{t\rightarrow T^*} \tilde{\mathcal{N}}(\mathbf{u}(t))  = \overline{f}$. If $\tilde{\mathcal{N}}(\mathbf{u}(t))$ converges to $\overline{f}$ before $T^*$, i.e., $\tilde{\mathcal{N}}(\mathbf{u}(T)) = \overline{f}$ for some $T< T^*$. Then, for all $t\in [T,T^*)$,
		\begin{equation*}
		\frac{d{\tilde{\mathcal{N}}(\mathbf{u})}}{dt} = 0, 
		\end{equation*}
		which implies 
		\begin{equation*}
		\left\|\left(\mathbf{I}-\frac{\mathbf{u}(t)\mathbf{u}(t)^T}{\|\mathbf{u}(t)\|_2^2}\right){\nabla} \mathcal{N}(\mathbf{u}(t))\right\|_2 = 0, \forall t\in [T,T^*).
		\end{equation*}
		Hence, from \cref{gd_dir_u}, we have 
		\begin{equation*}
		\frac{d}{dt}\left(\frac{\mathbf{u}}{\|\mathbf{u}\|_2}\right) = \mathbf{0},  \forall t\in [T,T^*), 
		\end{equation*}
		which implies $\lim_{t\rightarrow T^*}\frac{\mathbf{u}(t)}{\|\mathbf{u}(t)\|_2}$ exists and is equal to $\frac{\mathbf{u}(T)}{\|\mathbf{u}(T)\|_2}$.
		
		Thus, we may assume $\overline{f} - \tilde{\mathcal{N}}(\mathbf{u}(t))>0$, for all $t\in [0,T^*)$. We define $g(\mathbf{u}) = \overline{f} - \tilde{\mathcal{N}}(\mathbf{u})$. Then, since
		\begin{equation*}
		\|{\nabla}_r \tilde{\mathcal{N}}(\mathbf{u})\|_2 = 0,
		\end{equation*}
		we have that
		\begin{equation*}
		\|{\nabla}_{\perp} g(\mathbf{u})\|_2 \geq 0 = \|\mathbf{u}\|_2\|\overline{\nabla}_r g(\mathbf{u})\|_2.
		\end{equation*}
		Hence, from \Cref{desing_func}, there exists a $\nu>0$ and a desingularizing function $\Psi(.)$ defined on $[0,\nu)$ such that if $\|\mathbf{u}\|_2>1$ and $g(\mathbf{u}) < \nu$, then
		\begin{align}
		1 \leq\Psi'(g(\mathbf{u}))\|\mathbf{u}\|_2\|{\nabla}g(\mathbf{u})\|_2 = \Psi'(\overline{f} - \tilde{\mathcal{N}}(\mathbf{u}))\|\mathbf{u}\|_2\|{\nabla}\tilde{\mathcal{N}}(\mathbf{u})\|_2.
		\label{bd_Psi}
		\end{align}
		Since $\lim_{t\rightarrow T^*} \tilde{\mathcal{N}}(\mathbf{u}(t)) = \overline{f}$, and  $\lim_{t\rightarrow T^*} \|\mathbf{u}(t)\|_2 = \infty$, we may choose $T$ large enough such that $\|\mathbf{u}(t)\|_2> 1$, and $g(\mathbf{u}(t)) < \nu$, for all $t\in[T,T^*)$. Hence, for all $t\in[T,T^*)$, we have
		\begin{align*}
		\frac{d{\tilde{\mathcal{N}}(\mathbf{u})}}{dt} &= \frac{{\nabla} \mathcal{N}(\mathbf{u})^\top}{\|\mathbf{u}\|_2^L}\left(\mathbf{I}-\frac{\mathbf{u}\mathbf{u}^T}{\|\mathbf{u}\|_2^2}\right){\nabla} \mathcal{N}(\mathbf{u})\\
		&= \left\|\left(\mathbf{I}-\frac{\mathbf{u}\mathbf{u}^T}{\|\mathbf{u}\|_2^2}\right)\frac{{\nabla} \mathcal{N}(\mathbf{u})}{\|\mathbf{u}\|_2^{L-1}}\right\|_2\left\| \frac{d}{dt}\left(\frac{\mathbf{u}}{\|\mathbf{u}\|_2}\right)\right\|_2 \\
		&= \|\mathbf{u}\|_2\left\|{\nabla} \tilde{\mathcal{N}}(\mathbf{u})\right\|_2\left\| \frac{d}{dt}\left(\frac{\mathbf{u}}{\|\mathbf{u}\|_2}\right)\right\|_2 \geq \frac{1}{\Psi'(\overline{f} - \tilde{\mathcal{N}}(\mathbf{u}))} \left\| \frac{d}{dt}\left(\frac{\mathbf{u}}{\|\mathbf{u}\|_2}\right)\right\|_2.
		\end{align*}
		In the above chain of equalities and inequalities, we used \Cref{N_tilde_def} in the third equality, and the inequality follows from \cref{bd_Psi}. Now, since $\overline{f} - \tilde{\mathcal{N}}(\mathbf{u}(t)) \in [0,\nu),  $ for all $t\in [T,T^*)$, and $\Psi' > 0$ on $[0,\nu)$, we can take $\Psi'(\overline{f} - \tilde{\mathcal{N}}(\mathbf{u}(t)))$ to the left hand side to get
		\begin{align*}
		\left\| \frac{d}{dt}\left(\frac{\mathbf{u}}{\|\mathbf{u}\|_2}\right)\right\|_2 &\leq -\frac{d\Psi(\overline{f} - \tilde{\mathcal{N}}(\mathbf{u}))}{dt}, \forall t\in [T,T^*).
		\end{align*}
		Next, integrating the above inequality on both the sides from $T$ to any $t_1\in[T,T^*)$, we have
		\begin{align*}
		\int_{T}^{t_1}\left\| \frac{d}{dt}\left(\frac{\mathbf{u}}{\|\mathbf{u}\|_2}\right)\right\|_2 dt  \leq \Psi(\overline{f} - \tilde{\mathcal{N}}(\mathbf{u}(T)) - \Psi(\overline{f} - \tilde{f}(\mathbf{u}(t_1)) \leq \Psi(\overline{f} - \tilde{\mathcal{N}}(\mathbf{u}(T)))  < \infty.
		\end{align*}
		From the above inequality, we further have
		\begin{align*}
		\int_{0}^{T^*}\left\| \frac{d}{dt}\left(\frac{\mathbf{u}}{\|\mathbf{u}\|_2}\right)\right\|_2 dt &=
		\int_{0}^{T}\left\| \frac{d}{dt}\left(\frac{\mathbf{u}}{\|\mathbf{u}\|_2}\right)\right\|_2 dt + \int_{T}^{T^*}\left\| \frac{d}{dt}\left(\frac{\mathbf{u}}{\|\mathbf{u}\|_2}\right)\right\|_2 dt\\
		& \leq 	\int_{0}^{T}\left\| \frac{d}{dt}\left(\frac{\mathbf{u}}{\|\mathbf{u}\|_2}\right)\right\|_2 dt + \Psi(\overline{f} - \tilde{\mathcal{N}}(\mathbf{u}(T)) < \infty,
		\end{align*}
		which completes the proof. We next show that there exists $\kappa>0$ such that
		\begin{align}
		\|\mathbf{u}(t)\|_2\geq \frac{\kappa}{(T^*-t)^{1/(L-2)}}, \forall t\in [0,T^*),
		\label{kp_ineq}
		\end{align}
		which will be useful later. Define $\mathbf{h}(t)$ such that
		\begin{align*}
		\mathbf{u}(t) = \frac{\mathbf{h}(t)}{(T^*-t)^{1/(L-2)}}.
		\end{align*}
		Our goal is to show $\|\mathbf{h}(t)\|_2\geq \kappa$, for all $t\in [0,T^*)$ and for some $\kappa>0$. Now, since $\|\mathbf{u}(t)\|_2>0$, for all $t\in [0,T^*)$, therefore, $\|\mathbf{h}(t)\|_2>0$, for all $t\in [0,T^*)$ . We next show $\lim_{t\rightarrow T^*}\|\mathbf{h}(t)\|_2 > 0.$ Since
		\begin{align*}
		\frac{1}{2}\frac{d\|\mathbf{u}\|_2^2}{dt} = L\mathcal{N}(\mathbf{u}) = L\mathcal{N}\left(\frac{\mathbf{u}}{\|\mathbf{u}\|_2}\right)\|\mathbf{u}\|_2^L ,
		\end{align*}
		we have
		\begin{align*}
		\frac{1}{\|\mathbf{u}\|_2^{L-1}}\frac{d\|\mathbf{u}\|_2}{dt} = L\mathcal{N}\left(\frac{\mathbf{u}}{\|\mathbf{u}\|_2}\right).
		\end{align*}
		Integrating both sides from $0$ to $t\in (0,T^*)$, we get
		\begin{align*}
		\frac{1}{L-2}\left(\frac{1}{\|\mathbf{u}(0)\|_2^{L-2}} - \frac{1}{\|\mathbf{u}(t)\|_2^{L-2}}\right) = \int_{0}^tL\mathcal{N}\left(\frac{\mathbf{u}(s)}{\|\mathbf{u}(s)\|_2}\right) ds.
		\end{align*}
		Re-arranging the above equation gives us
		\begin{align}
		\frac{1}{\|\mathbf{u}(0)\|_2^{L-2}} - \int_{0}^tL(L-2)\mathcal{N}\left(\frac{\mathbf{u}(s)}{\|\mathbf{u}(s)\|_2}\right)ds = \frac{1}{\|\mathbf{u}(t)\|_2^{L-2}}.
		\label{nm_eq_L}
		\end{align}
		Substituting $\mathbf{u}(t)  = \mathbf{h}(t)/(T^*-t)^{1/(L-2)}$, we get
		\begin{align*}
		\frac{1/\|\mathbf{u}(0)\|_2^{L-2} - \int_{0}^tL(L-2)\mathcal{N}(\mathbf{u} (s)/\|\mathbf{u} (s)\|_2) ds}{(T^*-t)} = \frac{1}{\|\mathbf{h}(t)\|_2^{L-2}}.
		\end{align*}
		In LHS of the above equality, at $t=T^*$, the denominator is obviously $0$, and the numerator is also $0$ since, using \cref{nm_eq_L}, 
		\begin{align*}
		&\frac{1}{\|\mathbf{u}(0)\|_2^{L-2}} - \int_{0}^{T^*}L(L-2)\mathcal{N}\left(\frac{\mathbf{u}(s)}{\|\mathbf{u}(s)\|_2}\right)ds\\
		&= \lim_{t\rightarrow T^*}\frac{1}{\|\mathbf{u}(0)\|_2^{L-2}} - \int_{0}^tL(L-2)\mathcal{N}\left(\frac{\mathbf{u}(s)}{\|\mathbf{u}(s)\|_2}\right) ds = \lim_{t\rightarrow T^*}\frac{1}{\|\mathbf{u}(t)\|_2^{L-2}} = 0.
		\end{align*}
		Therefore, using L'Hopital's rule,
		\begin{align*}
		\frac{1}{\|\mathbf{h} (T^*)\|_2^{L-2}} = \frac{\lim_{t\rightarrow T^*} -L(L-2)\mathcal{N}(\mathbf{u} (t)/\|\mathbf{u}(t)\|_2)}{-1} =  \lim_{t\rightarrow T^*} L(L-2)\mathcal{N}(\mathbf{u}(t)/\|\mathbf{u} (t)\|_2)>0.
		\end{align*}
		Hence, $\|\mathbf{h}(t)\|_2>0$, for all $t\in [0,T^*]$, which implies there exists some $\kappa>0$ such that $\|\mathbf{h}(t)\|_2\geq\kappa$, for all $t\in [0,T^*)$ and proving \cref{kp_ineq}.
		
		\textbf{Case 2:} $\mathcal{N}(\mathbf{u}(0)) \leq 0$.\\
		In this case, we may further assume that $\mathcal{N}(\mathbf{u}(t)) \leq 0,$ for all $t\geq 0$, since if for some $\overline{t}$, $\mathcal{N}(\mathbf{u}(\overline{t})) > 0,$ then we can choose $\overline{t}$ be the new starting time and use the proof for Case 1 to show  that $\mathbf{u}(t)$ becomes unbounded at some finite time and also converges in direction. Therefore, we assume $\mathcal{N}(\mathbf{u}(t)) \leq 0,$ for all $t\geq 0$. 
		
		Now, since
		\begin{align}
		\frac{1}{2}\frac{d{\|\mathbf{u}\|_2^2}}{dt} = \mathbf{u}^\top\dot{\mathbf{u}} = L\mathcal{N}(\mathbf{u}(t)) \leq 0, \forall t\geq 0,\nonumber
		\end{align}
		we have that $\|\mathbf{u}(t)\|_2$ decreases with time, and  $\lim_{t\rightarrow\infty} \|\mathbf{u}(t)\|_2$ exists. Now, if $\lim_{t\rightarrow\infty} \|\mathbf{u}(t)\|_2 = 0$, then we have $\lim_{t\rightarrow\infty} \mathbf{u}(t) = \mathbf{0}$ and our proof is complete.
		
		Else, suppose $\lim_{t\rightarrow\infty} \|\mathbf{u}(t)\|_2 = \eta > 0$. Then, we have $\|\mathbf{u}(t)\|_2\geq \eta$, for all $t\geq 0$, since $\|\mathbf{u}(t)\|_2$ decreases with time. We next show that $\lim_{t\rightarrow\infty} \mathcal{N}(\mathbf{u}(t)) = 0$. From \Cref{descent_lemma}, we know
		\begin{equation*}
		\frac{d{\mathcal{N}(\mathbf{u})}}{dt} = \|\dot{\mathbf{u}}\|_2^2, \forall t\geq 0.
		\end{equation*}
		Thus, $\mathcal{N}(\mathbf{u}(t)) $ increases with time. Since we have assumed $\mathcal{N}(\mathbf{u}(t)) \leq 0$ for all $t\geq 0$, by monotone convergence we have that $\lim_{t\rightarrow\infty} \mathcal{N}(\mathbf{u}(t)) $ exists. Now, to show $\lim_{t\rightarrow\infty} \mathcal{N}(\mathbf{u}(t)) = 0$, we assume  for the sake of contradiction that $\lim_{t\rightarrow\infty} \mathcal{N}(\mathbf{u}(t)) = -\gamma < 0$. Then, for all $t\geq 0$, we have $\mathcal{N}(\mathbf{u}(t)) \geq -\gamma$, since $\mathcal{N}(\mathbf{u}(t)) $ is an increasing function of time. Thus,
		\begin{align}
		\frac{1}{2}\frac{d{\|\mathbf{u}\|_2^2}}{dt} = \mathbf{u}^\top\dot{\mathbf{u}} = L\mathcal{N}(\mathbf{u}(t)) \leq -L\gamma, \forall t\geq 0.\nonumber
		\end{align}
		From the above equation we get that $\|\mathbf{u}(t)\|_2$ would become smaller than $\eta$ after some finite amount of time has elapsed. Since $\|\mathbf{u}(t)\|_2\geq \eta$, for all $t\geq 0$, this leads to a contradiction. Therefore, $\lim_{t\rightarrow\infty} \mathcal{N}(\mathbf{u}(t)) = 0$.
		This also implies $\lim_{t\rightarrow\infty} \tilde{\mathcal{N}}(\mathbf{u}(t)) = 0$.
		
		We next prove that $\lim_{t\rightarrow\infty} \mathbf{u}(t)/\|\mathbf{u}(t)\|_2 $ exists. We first define $\hat{\mathbf{u}}(t) = 2\mathbf{u}(t)/\eta$. Now, if $\hat{\mathbf{u}}(t)$ converges in direction, then ${\mathbf{u}}(t)$ also converges in direction. To prove $\hat{\mathbf{u}}(t)$ converges in direction we can follow the same approach as in Case 1, specifically from \cref{gd_dir_u} onward. This transformation is essential because, recall, to use \Cref{desing_func} in Case 1 we had to choose $T$ large enough such that $\|\mathbf{u}(t)\|_2>1$, for all $t\geq T$. Here, $\|\mathbf{u}(t)\|_2$  may never be greater than $1$, but  $\|\hat{\mathbf{u}}(t)\|_2\geq 2$, for all $t\geq 0$.

		So far we have established that either $\frac{\mathbf{u}(t)}{\|\mathbf{u}(t)\|_2}$ converges, or $\mathbf{u}(t)$ converges to $\mathbf{0}$. We next show that if $\frac{\mathbf{u}(t)}{\|\mathbf{u}(t)\|_2}$ converges to $\mathbf{u}_*$, then $\mathbf{u}_*$ must be a non-negative KKT point of the constrained NCF. 
		
		From the proof so far we already know that $\mathcal{N}(\mathbf{u}_*) = \tilde{\mathcal{N}}(\mathbf{u}_*)\geq 0.$ Therefore, from \Cref{kkt_ncf}, we only need to show
		\begin{equation}
		L\mathcal{N}(\mathbf{u}_*)\mathbf{u}_* = \nabla \mathcal{N}(\mathbf{u}_*),
		\label{crit_eq}
		\end{equation}
		For the sake of contradiction assume that there exists some $\gamma>0$ such that
		\begin{align}
		\left\|\nabla \mathcal{N}(\mathbf{u}_*) - L\mathcal{N}(\mathbf{u}_*)\mathbf{u}_*\right\|_2 \geq \gamma.
		\label{gap_kkt}
		\end{align}
		
		For any $\epsilon>0$, we define the set $\mathbf{u}_\epsilon = \{\mathbf{u}: \|\mathbf{u}-\mathbf{u}_*\|_2 \leq \epsilon\}$. Since $\mathcal{N}(\mathbf{u}) $ has locally Lipschitz gradient, given $\gamma$, we can choose sufficiently small $\epsilon\in (0,1)$ such that for all $\mathbf{u}\in \mathbf{u}_\epsilon$,  we have
		\begin{align}
		\|{\nabla} \mathcal{N}(\mathbf{u}) - \nabla \mathcal{N}(\mathbf{u}_*) \|_2 \leq \gamma/4.
		\label{gap_grad}
		\end{align}
		Since $\frac{\mathbf{u}(t)}{\|\mathbf{u}(t)\|_2}$ converges to $\mathbf{u}_*$ and $\mathcal{N}(\mathbf{u})$ is continuous, we choose $T$ large enough such that for all $t\geq T$,
		\begin{align}
		\left\|\frac{\mathbf{u}(t)}{\|\mathbf{u}(t)\|_2} - \mathbf{u}_*\right\|_2 \leq \epsilon, \left\|\frac{L\mathbf{u}(t)}{\|\mathbf{u}(t)\|_2}\mathcal{N}\left(\frac{\mathbf{u}(t)}{\|\mathbf{u}(t)\|_2}\right) - L\mathbf{u}_*\mathcal{N}(\mathbf{u}_*)\right\| \leq \gamma/4.  
		\label{gap_NCF}
		\end{align}
		From the two cases discussed above we know that if $\frac{\mathbf{u}(t)}{\|\mathbf{u}(t)\|_2}$ converges, then are two scenarios: $\|\mathbf{u}(t)\|_2$ either goes to infinity or $\|\mathbf{u}(t)\|_2$ remains finite, bounded away from $0$, for all $t\geq 0$ and $\tilde{\mathcal{N}}(\mathbf{u}(t))$ converges to $0$. Now, if $\|\mathbf{u}(t)\|_2$ goes to infinity, then we may assume $T$ is large enough such that for some $\kappa>0$
		\begin{align}
		\|\mathbf{u}(t)\|_2\geq \frac{\kappa}{(T^*-t)^{1/(L-2)}}, \forall t\in [T,T^*),
		\label{kp_ineq1}
		\end{align}
		where $
		\lim_{t\rightarrow T^*}{\mathbf{u}(t)}/{\|\mathbf{u}(t)\|_2} = \mathbf{u}_*.$ In the second scenario, we may assume that $T$ is large enough such that for some $\eta>0$,  we have
		\begin{equation}
		\|\mathbf{u}(t)\|_2\geq \eta, \forall t\geq T.
		\label{nm_ineq1}
		\end{equation}
		Now, for all non-zero $\mathbf{u}\in \mathbb{R}^k$ we have
		\begin{eqnarray*}
			\lefteqn{\left\|\left(\mathbf{I}-\frac{\mathbf{u}\mathbf{u}^\top}{\|\mathbf{u}\|_2^2}\right) \frac{{\nabla}\mathcal{N}(\mathbf{u})}{\|\mathbf{u}\|_2^{L-1}}\right\|_2  =  \left\|\frac{{\nabla}\mathcal{N}(\mathbf{u})}{\|\mathbf{u}\|_2^{L-1}}-\frac{L\mathbf{u}\mathcal{N}(\mathbf{u})}{\|\mathbf{u}\|_2^{L+1}} \right\|_2}&&\\
			&=& \left\|{\nabla}\mathcal{N}\left(\frac{\mathbf{u}}{\|\mathbf{u}\|_2}\right)-\frac{L\mathbf{u}}{\|\mathbf{u}\|_2}\mathcal{N}\left(\frac{\mathbf{u}}{\|\mathbf{u}\|_2}\right) \right\|_2\\
			&\geq& \left\| \nabla \mathcal{N}(\mathbf{u}_*) - \frac{L\mathbf{u}}{\|\mathbf{u}\|_2}\mathcal{N}\left(\frac{\mathbf{u}}{\|\mathbf{u}\|_2}\right) \right\|_2  -  \left\|{\nabla}\mathcal{N}\left(\frac{\mathbf{u}}{\|\mathbf{u}\|_2}\right)- \nabla \mathcal{N}(\mathbf{u}_*)\right\|_2 \\
			&\geq& \left\| \nabla \mathcal{N}(\mathbf{u}_*) - {L\mathbf{u}_*}\mathcal{N}\left({\mathbf{u}_*}\right) \right\|_2  - \left\| {L\mathbf{u}_*}\mathcal{N}\left({\mathbf{u}_*}\right) - \frac{L\mathbf{u}}{\|\mathbf{u}\|_2}\mathcal{N}\left(\frac{\mathbf{u}}{\|\mathbf{u}\|_2}\right) \right\|_2 \\
			&-& \left\|{\nabla}\mathcal{N}\left(\frac{\mathbf{u}}{\|\mathbf{u}\|_2}\right)- \nabla \mathcal{N}(\mathbf{u}_*)\right\|_2 ,
		\end{eqnarray*}
		where in the second equality we used $(L-1)$-homogeneity of $\nabla\mathcal{N}(\mathbf{u})$ and $L$-homogeneity of  $\mathcal{N}(\mathbf{u})$. The inequalities make repeated use of triangle inequality of norms. Hence, using \cref{gap_kkt}, \cref{gap_grad} and \cref{gap_NCF}, for all $t\geq T$, we have
		\begin{align*}
		&\left\|\left(\mathbf{I}-\frac{\mathbf{u}(t)\mathbf{u}(t)^\top}{\|\mathbf{u}(t)\|_2^2}\right) \frac{{\nabla}\mathcal{N}(\mathbf{u}(t))}{\|\mathbf{u}(t)\|_2^{L-1}}\right\|_2 \geq \gamma/2.
		\end{align*}
		Using the above inequality and \cref{bd_f_tilde} we have
		\begin{align*}
		\frac{d}{dt}\left(\mathcal{N}\left(\frac{\mathbf{u}(t)}{\|\mathbf{u}(t)\|_2}\right)\right) =   \left\|\left(\mathbf{I}-\frac{\mathbf{u}(t)\mathbf{u}(t)^\top}{\|\mathbf{u}(t)\|_2^2}\right) \frac{{\nabla}\mathcal{N}(\mathbf{u}(t))}{\|\mathbf{u}(t)\|_2^{L/2}}\right\|_2^2 \geq \gamma^2\|\mathbf{u}(t)\|_2^{L-2}/4.
		\end{align*}
		Now, if $\|\mathbf{u}(t)\|_2$ goes to infinity, then, from \cref{kp_ineq1}, we have
		\begin{align*}
		\frac{d}{dt}\left(\mathcal{N}\left(\frac{\mathbf{u}(t)}{\|\mathbf{u}(t)\|_2}\right)\right) \geq \frac{\gamma^2\kappa^{L-2}}{4(T^*-t)}, \forall t\in [T,T^*)
		\end{align*}
		Integrating the above equation from $T$ to $t\in (T,T^*)$, we get
		\begin{align*}
		\mathcal{N}\left(\frac{\mathbf{u}(t)}{\|\mathbf{u}(t)\|_2}\right) - \mathcal{N}\left(\frac{\mathbf{u}(T)}{\|\mathbf{u}(T)\|_2}\right) \geq \frac{\gamma^2\kappa^{L-2}}{4}\int_T^{T^*} \frac{1}{(T^*-t)} = \frac{\gamma^2\kappa^{L-2}}{4}\ln\left(\frac{T^*-T}{T^*-t}\right).
		\end{align*}
		As $t$ approaches $T^*$, the LHS remains finite but the RHS goes to infinity, leading to a contradiction.
		In the second scenario, where $\|\mathbf{u}(t)\|_2$ remains finite, bounded away from $0$, for all $t\geq 0$, we have, from \cref{nm_ineq1}, 
		\begin{align*}
		\frac{d}{dt}\left(\mathcal{N}\left(\frac{\mathbf{u}(t)}{\|\mathbf{u}(t)\|_2}\right)\right) \geq \gamma^2\eta^{L-2}/4, \forall t\geq T.
		\end{align*}
		Integrating the above equation from $T$ to $t\geq T$, we get
		\begin{align*}
		\mathcal{N}\left(\frac{\mathbf{u}(t)}{\|\mathbf{u}(t)\|_2}\right) - \mathcal{N}\left(\frac{\mathbf{u}(T)}{\|\mathbf{u}(T)\|_2}\right) \geq \gamma^2\eta^{L-2}(t-T)/4.
		\end{align*}
		Since $ \mathcal{N}\left({\mathbf{u}}/{\|\mathbf{u}\|_2}\right) $ is bounded, the above inequality can not be true for sufficiently large $t$, leading to a contradiction. This completes our proof of \Cref{ncf_convg}.
	\end{proof}
	Before proceeding to proof of \Cref{thm_align_init}, we state a useful lemma.
	\begin{lemma}
		\label{low_bd_U}
		Let $\mathbf{u}_0$ be a fixed vector and let $\mathbf{u}(t)$ be the solution of 
		\begin{align}
		\frac{d{\mathbf{u}}}{dt} = \nabla \mathcal{N}(\mathbf{u}) =   \mathcal{J}(\mathbf{X};\mathbf{u})^\top \mathbf{z}, {\mathbf{u}}(0) = \mathbf{u}_0.
		\label{gd_limit_bd}
		\end{align}
		If $\lim_{t\rightarrow\infty}\mathbf{u}(t) \neq \mathbf{0}, $ then there exists $\eta>0$ and $T\geq 0$ such that $\|\mathbf{u}(t) \|_2\geq \eta$, for all $t\geq T$. Further, for any $\epsilon\in(0,1)$ there exists $T_\epsilon \geq T$ and $B_\epsilon$ such that
		\begin{align}
		\frac{\mathbf{u}(T_\epsilon)^\top{\mathbf{u}_*}}{\|\mathbf{u}(T_\epsilon)\|_2} \geq 1-\epsilon, \text{ and } \|\mathbf{u}(t)\|_2 \leq B_\epsilon, \forall t\leq T_\epsilon,
		\end{align} 
		where $\mathbf{u}_*$  is a non-negative KKT point of 
		\begin{equation}
		\max_{ \|\mathbf{u}\|_2^2 = 1} \mathcal{N}(\mathbf{u}) = \mathbf{z}^\top \mathcal{H}(\mathbf{X};\mathbf{u}).
		\label{const_opt_bd}
		\end{equation}
	\end{lemma}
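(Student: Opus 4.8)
The plan is to derive everything directly from \Cref{ncf_convg} together with the monotonicity facts established in its proof. Since, by hypothesis, $\mathbf{u}(t)$ does not converge to $\mathbf{0}$, \Cref{ncf_convg} leaves exactly two possibilities, which I treat separately: \textbf{(A)} there is a finite $T^*$ with $\lim_{t\to T^*}\|\mathbf{u}(t)\|_2=\infty$ and $\lim_{t\to T^*}\mathbf{u}(t)/\|\mathbf{u}(t)\|_2=\mathbf{u}_*$, a non-negative KKT point of \cref{const_opt_bd}; or \textbf{(B)} $\|\mathbf{u}(t)\|_2<\infty$ for all $t\ge 0$ and $\lim_{t\to\infty}\mathbf{u}(t)/\|\mathbf{u}(t)\|_2=\mathbf{u}_*$, again a non-negative KKT point of \cref{const_opt_bd} (the alternative $\mathbf{u}(t)\to\mathbf{0}$ being excluded). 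In case (A), assertions ``for all $t\ge T$'' are read as ranging over $[T,T^*)$, the maximal interval of existence.

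\emph{Lower bound on the norm.} In case (B), boundedness of the trajectory forces $\mathcal{N}(\mathbf{u}(t))\le 0$ for all $t\ge 0$: otherwise, as in Case~1 of the proof of \Cref{ncf_convg}, the trajectory would blow up in finite time. Hence $\tfrac12\tfrac{d}{dt}\|\mathbf{u}\|_2^2=L\mathcal{N}(\mathbf{u}(t))\le 0$, so $\|\mathbf{u}(t)\|_2$ is non-increasing, and since $\mathbf{u}(t)\not\to\mathbf{0}$ its limit $\eta:=\lim_{t\to\infty}\|\mathbf{u}(t)\|_2$ is strictly positive; thus $\|\mathbf{u}(t)\|_2\ge\eta$ for all $t\ge 0$ and one may take $T=0$. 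In case (A), the trajectory blows up, so $\mathcal{N}(\mathbf{u}(t_0))>0$ for some $t_0\in[0,T^*)$ (were $\mathcal{N}\le 0$ throughout, the norm would be non-increasing and could not blow up); \Cref{pos_init} then yields $\|\mathbf{u}(t)\|_2\ge\|\mathbf{u}(t_0)\|_2>0$ for all $t\ge t_0$, so one takes $T=t_0$ and $\eta=\|\mathbf{u}(t_0)\|_2$.

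\emph{Directional convergence up to a finite time with a uniform norm bound.} Fix $\epsilon\in(0,1)$. In both cases the unit vectors $\mathbf{u}(t)/\|\mathbf{u}(t)\|_2$ converge to the unit vector $\mathbf{u}_*$ (as $t\to T^*$ in case (A), as $t\to\infty$ in case (B)), and for unit vectors $\hat{\mathbf{u}}^\top\mathbf{u}_*=1-\tfrac12\|\hat{\mathbf{u}}-\mathbf{u}_*\|_2^2$. Hence we may choose $T_\epsilon\ge T$ (with $T_\epsilon<T^*$ in case (A)) such that $\big\|\mathbf{u}(T_\epsilon)/\|\mathbf{u}(T_\epsilon)\|_2-\mathbf{u}_*\big\|_2\le\sqrt{2\epsilon}$, which gives $\mathbf{u}(T_\epsilon)^\top\mathbf{u}_*/\|\mathbf{u}(T_\epsilon)\|_2\ge 1-\epsilon$. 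Finally, $[0,T_\epsilon]$ is a compact subset of the interval of existence and $t\mapsto\|\mathbf{u}(t)\|_2$ is continuous there, so $B_\epsilon:=\max_{t\in[0,T_\epsilon]}\|\mathbf{u}(t)\|_2<\infty$ satisfies $\|\mathbf{u}(t)\|_2\le B_\epsilon$ for all $t\le T_\epsilon$; in case (B) one may even take $B_\epsilon=\|\mathbf{u}_0\|_2$ by the non-increasing-norm observation above.

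\emph{Main obstacle.} There is no deep difficulty: the lemma is essentially a repackaging of \Cref{ncf_convg}. The only points needing care are the two-case split (finite-time blow-up versus globally bounded trajectory) and, within it, extracting the \emph{uniform} lower bound on $\|\mathbf{u}(t)\|_2$ — in the blow-up case this comes from \Cref{pos_init} (monotonicity of $\mathcal{N}$ once it has turned positive) rather than from the blow-up itself, while in the bounded case it comes from monotonicity of the norm. One must also remember that in case (A) ``for all $t$'' means ``for all $t\in[0,T^*)$'', so $T_\epsilon$ has to be chosen strictly below $T^*$.
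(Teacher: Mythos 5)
Your proof is correct and takes essentially the same route as the paper's: it rests on \Cref{ncf_convg}, \Cref{pos_init}, and the monotonicity of $\|\mathbf{u}(t)\|_2$ when $\mathcal{N}\leq 0$, with your case split (finite-time blow-up versus no blow-up) being just the paper's split on the sign of $\mathcal{N}$ along the trajectory organized in the contrapositive direction. Your explicit handling of the maximal interval $[0,T^*)$, the choice of $T_\epsilon<T^*$, and the compactness/monotonicity bound $B_\epsilon$ all match the paper's treatment.
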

	\begin{proof}
		Similar to  the proof of  \Cref{ncf_convg},  we consider two cases.
		
		\textbf{Case 1:} $\mathcal{N}(\mathbf{u}(0)) > 0$.\\
		Let $\mathcal{N}(\mathbf{u}(0)) = \gamma > 0,$ thus $\|\mathbf{u}(0)\|_2 > 0$. From \Cref{pos_init}, $\|\mathbf{u}(t)\|_2 \geq \|\mathbf{u}(0)\|_2 $, for all $t\geq0$, which implies we can choose $\eta =  \|\mathbf{u}(0)\|_2$ and $T = 0$.
		
		Next, from \Cref{ncf_convg} and its proof we know that  for some finite $T^*$, $\lim_{t\rightarrow T^*}\frac{\mathbf{u}(t)}{\|\mathbf{u}(t)\|_2} = \mathbf{u}_*$, where $\mathbf{u}_*$  is a non-negative KKT point of the constrained NCF \cref{const_opt_bd}. Thus, for any $\epsilon\in (0,1)$, we can choose $T_\epsilon\geq T=0$, and large enough but less than $T^*$ such that 
		\begin{align}
		\frac{\mathbf{u}(T_\epsilon)^\top{\mathbf{u}_*}}{\|\mathbf{u}(T_\epsilon)\|_2} \geq 1-\epsilon.
		\end{align} 
		Since for all $t\in (0,T^*)$, $\mathbf{u}(t)$ is finite, there exists $B_\epsilon$ such that $\|\mathbf{u}(T_\epsilon)\|_2\leq B_\epsilon.$  We note that as $\epsilon$ gets smaller, we may have to choose $T_\epsilon$ closer to $T^*$. This implies that, since $\lim_{t\rightarrow T^*}\|\mathbf{u}(t)\|_2 = \infty,$  $B_\epsilon$ will increase. However, for a fixed $\epsilon$, both $T_\epsilon$ and $B_\epsilon$ will be fixed.\\
		
		\textbf{Case 2:} $\mathcal{N}(\mathbf{u}(0)) \leq 0$.\\
		In this case, we can also assume that $\mathcal{N}(\mathbf{u}(t)) \leq 0,$ for all $t\geq 0$, since if $\mathcal{N}(\mathbf{u}(\overline{t})) > 0$, for some $\overline{t}$, then from \Cref{pos_init},  we know $\|\mathbf{u}(t)\|_2\geq \|\mathbf{u}(\overline{t})\|_2$, for all $t\geq \overline{t}$. Moreover, since $\mathcal{N}(\mathbf{u}(\overline{t})) > 0$ implies $ \|\mathbf{u}(\overline{t})\|_2> 0$, we can choose $\eta = \|\mathbf{u}(\overline{t})\|_2$ and $T=\overline{t}$.  Also, in this case, from the proof of \Cref{ncf_convg}, we know that for some finite $T^*$, $\lim_{t\rightarrow T^*}\frac{\mathbf{u}(t)}{\|\mathbf{u}(t)\|_2} = \mathbf{u}_*$, where $\mathbf{u}_*$  is a non-negative KKT point of the constrained NCF \cref{const_opt_bd}. Thus, similar to Case 1,  for any $\epsilon\in (0,1)$ we can choose $T_\epsilon\geq \overline{t}$ and $B_\epsilon$ as desired.
		
		Hence, suppose  $\mathcal{N}(\mathbf{u}(t)) \leq 0,$ for all $t\geq 0$. Then, since
		\begin{align}
		\frac{1}{2}\frac{d{\|\mathbf{u}\|_2^2}}{dt} = \mathbf{u}^\top\dot{\mathbf{u}} = L\mathcal{N}(\mathbf{u}(t)) \leq 0, \forall t\geq 0,\nonumber
		\end{align}
		we have that $\|\mathbf{u}(t)\|_2$ is a decreasing function with time, and hence, $\lim_{t\rightarrow\infty} \|\mathbf{u}(t)\|_2$ exists. Also, 
		\begin{equation*}
		\|\mathbf{u}(t)\|_2 \geq \lim_{t\rightarrow\infty} \|\mathbf{u}(t)\|_2, \text{ for all }t\geq 0.
		\end{equation*}
		
		Since we have assumed $\lim_{t\rightarrow\infty}\mathbf{u}(t) \neq \mathbf{0}, $ we know $\lim_{t\rightarrow\infty} \|\mathbf{u}(t)\|_2> 0$, and we can choose $\eta = \lim_{t\rightarrow\infty} \|\mathbf{u}(t)\|_2$ and $T=0$. Further, since $\lim_{t\rightarrow \infty}\frac{\mathbf{u}(t)}{\|\mathbf{u}(t)\|_2} = \mathbf{u}_*$, where $\mathbf{u}_*$  is a non-negative KKT point of the constrained NCF \cref{const_opt_bd},  for any $\epsilon\in (0,1)$, we can choose $T_\epsilon$ large enough such that 
		\begin{align}
		\frac{\mathbf{u}(T_\epsilon)^\top{\mathbf{u}_*}}{\|\mathbf{u}(T_\epsilon)\|_2} \geq 1-\epsilon.
		\end{align} 
		Also, since $\|\mathbf{u}(t)\|_2$ is a decreasing function with time, we have $\|\mathbf{u}(t)\|_2\in [\eta,\|\mathbf{u}(0)\|_2]$, for all $t\geq 0$. Thus, we can choose $B_\epsilon = \|\mathbf{u}(0)\|_2$.
	\end{proof}
	
	\subsection{Proof of \Cref{thm_align_init}}
	
	\begin{proof}
		Let $\mathbf{w}_0$ be a fixed unit norm vector. Suppose $\mathbf{u}(t)$ is the solution of
		\begin{align}
		\frac{d \mathbf{u}}{dt} = \nabla\mathcal{N}_{-\ell'(\mathbf{0},\mathbf{y}),\mathcal{H}}(\mathbf{u}) =  -\mathcal{J}\left(\mathbf{X};\mathbf{u}\right)^\top\ell'(\mathbf{0},\mathbf{y}) , \mathbf{u}(0) = \mathbf{w}_0.
		\end{align}
		From \Cref{ncf_convg}, we know that there are two possibilities. Either $\mathbf{u}(t)$ converges to $\mathbf{0}$, or $\mathbf{u}(t)$  converges in direction to a non-negative KKT point of the constrained NCF. 
		
		If $\mathbf{u}(t)$ converges to $\mathbf{0}$, then we define $\eta = 2.$ Then, for any $\epsilon\in (0,\eta/2)$, we define $B_\epsilon=\epsilon$ and choose $T_\epsilon$ large enough such that 
		\begin{align}
		\|\mathbf{u}(T_\epsilon)\|_2\leq B_\epsilon = \epsilon.
		\label{0convg_bd}
		\end{align}
		Else, if  $\mathbf{u}(t)$ does not converge to $\mathbf{0}$, then from \Cref{low_bd_U}, there exists $\tilde{\eta}>0$ and $T$ such that $\|\mathbf{u}(t)\|_2\geq \tilde{\eta}$, for all $t\geq T$. We further define $\eta = \min(\tilde{\eta},1)$. Then, from \Cref{low_bd_U}, for any $\epsilon\in (0,\eta/2)$ there exists $T_\epsilon\geq T$ and $B_\epsilon$ such that
		\begin{align}
		\frac{\mathbf{u}(T_\epsilon)^\top{\mathbf{u}_*}}{\|\mathbf{u}(T_\epsilon)\|_2} \geq 1-\epsilon, \text{ and } \|\mathbf{u}(t)\|_2 \leq B_\epsilon, \forall t\leq T_\epsilon,
		\label{kkt_convg}
		\end{align} 
		where $\mathbf{u}_*$ is a non-negative KKT point of 
		\begin{align*}
		\max_{\|\mathbf{u}\|_2^2 =1} \mathcal{N}_{-\ell'(\mathbf{0},\mathbf{y}),\mathcal{H}}(\mathbf{u}) = -\mathcal{H}\left(\mathbf{X};\mathbf{u}\right)^\top\ell'(\mathbf{0},\mathbf{y}). 
		\end{align*}
		Further, since $T_\epsilon\geq T$, we have
		\begin{align}
		\|\mathbf{u}(T_\epsilon)\|_2 \geq \tilde{\eta} \geq\eta.
		\label{low_bd_ut}
		\end{align}
		
		We next define other $\epsilon$ dependent parameters which are useful in our proof. Let $\tilde{B}_\epsilon = B_\epsilon+\epsilon$. Since $\mathcal{H}(\mathbf{x};\mathbf{w})$ has locally Lipschitz gradient, there exists $K_\epsilon>0$ such that if $\|\mathbf{w}_1\|_2,\|\mathbf{w}_2\|_2\leq \tilde{B}_\epsilon$, then
		\begin{align}
		\|\mathcal{J}\left(\mathbf{X};\mathbf{w}_1\right) - \mathcal{J}\left(\mathbf{X};\mathbf{w}_2\right)\|_2 \leq K_\epsilon\|\mathbf{w}_1 - \mathbf{w}_2\|_2.
		\label{lips_jacob}
		\end{align}
		Also, since $\nabla_{\hat{y}}\ell(\hat{y},y)$ is locally Lipschitz in $\hat{y}$, there exists $\hat{\beta}$ such that if $\|\mathbf{z}_1\|_2, \|\mathbf{z}_2\|_2 \leq \beta\tilde{B}_\epsilon^L$, then 
		\begin{align}
		\|\ell'\left(\mathbf{z}_1,\mathbf{y}\right) - \ell'\left(\mathbf{z}_2,\mathbf{y}\right)\|_2 \leq \hat{\beta}\|\mathbf{z}_1 - \mathbf{z}_2\|_2,
		\label{lips_loss}
		\end{align}
		where recall
		\begin{align}
		\beta = \sup\{\|\mathcal{H}(\mathbf{X};\mathbf{w})\|_2:\mathbf{w}\in \mathcal{S}^{k-1}\}.
		\label{bd_sphere_mag}
		\end{align}
		We further define $C_\epsilon = \beta\hat{\beta}\tilde{B}_\epsilon^L$, and
		\begin{align}
		\bar{\delta}^L = \min\left(1, \frac{\epsilon}{4\alpha \tilde{B}_\epsilon^{L-1}C_\epsilon T_\epsilon e^{2T_\epsilon K_\epsilon\|\ell'(\mathbf{0},\mathbf{y})\|_2}}\right),
		\label{bd_dlt}
		\end{align}
		where recall
		\begin{align}
		\alpha = \sup\{\|\mathcal{J}(\mathbf{X};\mathbf{w})\|_2:\mathbf{w}\in \mathcal{S}^{k-1}\}.
		\label{bd_sphere_proof}
		\end{align}
		
		Now, for any $\delta\in (0,\bar{\delta})$, let $\mathbf{w}(t)$ be the solution of
		\begin{align}
		\dot{\mathbf{w}}(t) = -\nabla \mathcal{L}(\mathbf{w}(t)), \mathbf{w}(0) = \delta\mathbf{w}_0,
		\end{align}
		then, we can rewrite the above differential equation as 
		\begin{align}
		\dot{\mathbf{w}}
		=-\sum_{i=1}^n\nabla_{\hat{y}}\ell\left(\mathcal{H}(\mathbf{x}_i;\mathbf{w}), y_i\right)\nabla\mathcal{H}(\mathbf{x}_i;\mathbf{w}) =- \mathcal{J}(\mathbf{X};\mathbf{w})^\top \ell'(\mathcal{H}(\mathbf{X};\mathbf{w}),\mathbf{y}), 
		{\mathbf{w}}(0) = \delta\mathbf{w}_0.
		\label{gf_upd_class}
		\end{align}
		
		We define $\xi(t) := \ell'(\mathcal{H}(\mathbf{X};\mathbf{w}),\mathbf{y}) - \ell'(\mathbf{0},\mathbf{y})$, then the dynamics of $\mathbf{w}(t)$ can be written as
		\begin{align}
		\dot{\mathbf{w}}
		\in - \mathcal{J}(\mathbf{X};\mathbf{w})^\top \ell'(\mathcal{H}(\mathbf{X};\mathbf{w}),\mathbf{y})=  -\mathcal{J}(\mathbf{X};\mathbf{w})^\top(\ell'(\mathbf{0},\mathbf{y}) + \xi(t)).
		\label{app_upd}
		\end{align}
		Now, define $\mathbf{s}(t) = \frac{1}{\delta}\mathbf{w}\left(\frac{t}{\delta^{L-2}}\right)$, then $\mathbf{s}(0) = \mathbf{w}_0$, and
		\begin{align}
		\frac{d \mathbf{s}}{dt} =  \frac{d}{dt}\left(\frac{1}{\delta}\mathbf{w}\left(\frac{t}{\delta^{L-2}}\right)\right) &=  \frac{1}{\delta^{L-1}}\dot{\mathbf{w}}\left(\frac{t}{\delta^{L-2}}\right)\nonumber\\
		&=  -\frac{1}{\delta^{L-1}}\mathcal{J}\left(\mathbf{X};\mathbf{w}\left(\frac{t}{\delta^{L-2}}\right)\right)^\top\left(\ell'(\mathbf{0},\mathbf{y}) +\xi\left(\frac{t}{\delta^{L-2}}\right)\right)\nonumber\\
		&=-\mathcal{J}\left(\mathbf{X};\frac{1}{\delta}\mathbf{w}\left(\frac{t}{\delta^{L-2}}\right)\right)^\top\left(\ell'(\mathbf{0},\mathbf{y}) +\xi\left(\frac{t}{\delta^{L-2}}\right)\right)\nonumber\\
		&= -\mathcal{J}\left(\mathbf{X};\mathbf{s}(t)\right)^\top\left(\ell'(\mathbf{0},\mathbf{y})  + \xi\left(\frac{t}{\delta^{L-2}}\right)\right),
		\end{align}
		where in third equality, from \Cref{euler}, we used $(L-1)$-homogeneity of  $\mathcal{J}\left(\mathbf{X};\mathbf{w}\right)$. Using $L$-homogeneity of $\mathcal{H}(\mathbf{X};\mathbf{w})$, we have
		\begin{align*}
		\xi(t/\delta^{L-2}) &= \ell'(\mathcal{H}(\mathbf{X};\mathbf{w}(t/\delta^{L-2})),\mathbf{y}) - \ell'(\mathbf{0},\mathbf{y})\\
		&= \ell'(\delta^L\mathcal{H}(\mathbf{X};\mathbf{w}(t/\delta^{L-2})/\delta),\mathbf{y}) - \ell'(\mathbf{0},\mathbf{y})\\
		& = \ell'(\delta^L\mathcal{H}(\mathbf{X};\mathbf{s}(t)),\mathbf{y}) - \ell'(\mathbf{0},\mathbf{y}).
		\end{align*}
		Let $\tilde{\xi}(t) = \xi(t/\delta^{L-2}) $, then
		\begin{equation*}
		\dot{\mathbf{s}} = -\mathcal{J}\left(\mathbf{X};\mathbf{s}(t)\right)^\top(\ell'(\mathbf{0},\mathbf{y})+ \tilde{\xi}(t)), \mathbf{s}(0) = \mathbf{w}_0. 
		\end{equation*}
		
		Next, we will show that  $\|\mathbf{s}(t) - \mathbf{u}(t) \|_2 \leq \epsilon,$ for all $t\in [0,T_\epsilon]$. We first note that, $\mathbf{s}(0) =\mathbf{u}(0) = \mathbf{w}_0.$ Define
		
		\begin{equation*}
		\overline{T}_\delta = \inf_{t\geq 0} \{t:\|\mathbf{s}(t) - \mathbf{u}(t) \|_2 = \epsilon\}.
		\end{equation*}
		Here, $\overline{T}_\delta$ indicates the time when $\|\mathbf{s}(t) - \mathbf{u}(t) \|_2 $ becomes $\epsilon$ for the first time, and $\delta$ in the subscript indicates that $\overline{T}_\delta$ could change with $\delta$. By definition of $\overline{T}_\delta$, for all $t\in [0,\overline{T}_\delta],$  $\|\mathbf{s}(t) - \mathbf{u}(t) \|_2 \leq \epsilon$. Now, if we can show $\overline{T}_\delta > T_\epsilon$, then we would have proved  $\|\mathbf{s}(t) - \mathbf{u}(t) \|_2 \leq \epsilon,$ for all $t\in [0,T_\epsilon]$. 
		
		For the sake of contradiction, suppose $\overline{T}_\delta \leq T_\epsilon$.  This implies that $\|\mathbf{u}(t)\|_2\leq B_\epsilon $ and $\|\mathbf{s}(t) \|_2\leq B_\epsilon+\epsilon =\tilde{B}_\epsilon$, for all $t\in [0,\overline{T}_\delta]$. Now, using \cref{lips_jacob}, we have
		\begin{align}
		\left\|\mathcal{J}(\mathbf{X};\mathbf{u}(t)) -  \mathcal{J}(\mathbf{X};\mathbf{s}(t))\right\|_2 \leq K_\epsilon\|\mathbf{u}(t) - \mathbf{s}(t)\|_2 ,\forall t\in [0,\overline{T}_\delta].
		\label{bd_lips}
		\end{align}
		Next, since $\delta\leq 1$ and
		\begin{equation*}
		\|\mathcal{H}(\mathbf{X};\mathbf{s}(t))\|_2 \leq \beta\|\mathbf{s}(t)\|_2^L \leq \beta\tilde{B}_\epsilon^L, \forall t\in [0,\overline{T}_\delta],
		\end{equation*}
		using \cref{lips_loss}, we have
		\begin{equation*}
		\|\tilde{\xi}(t)\|_2 = \|\ell'(\delta^L\mathcal{H}(\mathbf{X};\mathbf{s}(t)),\mathbf{y}) - \ell'(\mathbf{0},\mathbf{y})\|_2 \leq \delta^L\hat{\beta}\|\mathcal{H}(\mathbf{X};\mathbf{s}(t))\|_2 \leq \delta^L\hat{\beta}\beta\tilde{B}_\epsilon^L,  \forall t\in [0,\overline{T}_\delta].
		\end{equation*}
		Hence, by the definition of $C_\epsilon$, we have
		\begin{align}
		\|\tilde{\xi}(t)\|_2 \leq  C_\epsilon\delta^L, \text{ for all } t\in [0,\overline{T}_\delta].
		\label{bd_xi}
		\end{align}
		Thus, for any $t\in [0,\overline{T}_\delta]$
		\begin{align*}
		\frac{1}{2}\frac{d \|\mathbf{s} - \mathbf{u}\|_2^2}{dt} &= (\mathbf{s}- \mathbf{u})^\top(\dot{\mathbf{s}} - \dot{\mathbf{u}})\\
		&= -(\mathbf{s} - \mathbf{u})^\top( \mathcal{J}(\mathbf{X};\mathbf{s}) -  \mathcal{J}(\mathbf{X};\mathbf{u}))^\top \ell'(\mathbf{0},\mathbf{y}) - (\mathbf{s} - \mathbf{u})^\top \mathcal{J}(\mathbf{X};\mathbf{s})^\top\tilde{\xi}(t)\\
		&\leq K_\epsilon\|\ell'(\mathbf{0},\mathbf{y})\|_2\|\mathbf{s}- \mathbf{u}\|_2^2 + \alpha\|\mathbf{s}\|_2^{L-1}\|\mathbf{s} - \mathbf{u}\|_2 \|\tilde{\xi}(t)\|_2\\
		& \leq K_\epsilon\|\ell'(\mathbf{0},\mathbf{y})\|_2\|\mathbf{s} - \mathbf{u}\|_2^2 +\alpha\epsilon \tilde{B}_\epsilon^{L-1}C_\epsilon \delta^L,
		\end{align*}
		where the first inequality follows from \cref{bd_lips} and \cref{bd_sphere_proof}. For the second inequality we used \cref{bd_xi}, and $\|\mathbf{s}(t)\|_2\leq \tilde{B}_\epsilon$, for all $t\in [0,\overline{T}_\delta]$. Integrating the above inequality on both sides from $0$ to $t_1\in [0,\overline{T}_\delta]$, we have 
		\begin{equation*}
		\frac{1}{2}\|\mathbf{s}(t_1) - \mathbf{u}(t_1)\|_2^2 \leq (\alpha\epsilon \tilde{B}_\epsilon^{L-1}C_\epsilon \delta^L)t_1 + \int_{0}^{t_1}K_\epsilon\|\ell'(\mathbf{0},\mathbf{y})\|_2\|\mathbf{s}(t)- \mathbf{u}(t)\|_2^2 dt. 
		\end{equation*}
		Then, using the second Gronwall's inequality in \Cref{gronwall}, we have
		\begin{align*}
		\|\mathbf{s}(\overline{T}_\delta) - \mathbf{u}(\overline{T}_\delta)\|_2^2 \leq 2\alpha\epsilon \tilde{B}_\epsilon^{L-1}C_\epsilon \delta^L\overline{T}_\delta e^{2\overline{T}_\delta K_\epsilon\|\ell'(\mathbf{0},\mathbf{y})\|_2} \leq 2\alpha\epsilon\tilde{B}_\epsilon^{L-1} C_\epsilon \delta^LT_\epsilon e^{2T_\epsilon K_\epsilon\|\ell'(\mathbf{0},\mathbf{y})\|_2},
		\end{align*}
		where the second inequality is true since $\overline{T}_\delta \leq T_\epsilon$. By definition of $\overline{T}_\delta$, we know $\|\mathbf{s}(\overline{T}_\delta) - \mathbf{u}(\overline{T}_\delta)\|_2 = \epsilon$.  Thus, using the above inequality, and since $\delta<\overline{\delta}$, where $\overline{\delta}$ satisfies \cref{bd_dlt}, we have
		\begin{equation*}
		\epsilon^2 = \|\mathbf{s}(\overline{T}_\delta) - \mathbf{u}(\overline{T}_\delta)\|_2^2 \leq 2\alpha\epsilon\tilde{B}_\epsilon^{L-1} C_\epsilon \delta^LT_\epsilon e^{2T_\epsilon K_\epsilon\|\ell'(\mathbf{0},\mathbf{y})\|_2} \leq \epsilon^2/2,
		\end{equation*}
		which leads to a contradiction. Hence,  $\overline{T}_\delta > T_\epsilon$.
		
		Since $ \|\mathbf{s}(t) - \mathbf{u}(t)\|_2\leq \epsilon$, for all $t\in [0,T_\epsilon]$, we have
		\begin{equation*}
		\|\mathbf{w}(t/\delta^{L-2})\|_2 \leq \delta(\|\mathbf{u}(t)\|_2+\epsilon) \leq \tilde{B}_\epsilon\delta, \forall t\in [0,T_\epsilon].
		\end{equation*}
		
		We next prove the second part of the theorem. If $\mathbf{u}(t)$ converges to $\mathbf{0}$, then from \cref{0convg_bd} we know $\|\mathbf{u}(T_\epsilon)\|_2\leq \epsilon $, which implies
		\begin{align}
		\left\|\mathbf{w}\left(\frac{T_\epsilon}{\delta^{L-2}}\right)\right\|_2 = \delta\|\mathbf{s}(T_\epsilon) \|_2 \leq 2\delta\epsilon.
		\end{align}
		Else, from \cref{kkt_convg} and \cref{low_bd_ut}, we know
		\begin{align}
		\frac{\mathbf{u}(T_\epsilon)^\top{\mathbf{u}_*}}{\|\mathbf{u}(T_\epsilon)\|_2} \geq 1-\epsilon, \text{ and }  \|\mathbf{u}(T_\epsilon)\|_2\geq \eta.
		\label{kkt_convg1}
		\end{align} 
		Define $\overline{T}_\epsilon = T_\epsilon/\delta^{L-2}$. Since $ \left\|\frac{1}{\delta}{\mathbf{w}\left(\frac{T_\epsilon}{\delta^{L-2}}\right)}- \mathbf{u}(T_\epsilon)\right\|_2 \leq \epsilon$, then for some $\zeta\in\mathbb{R}^k$ we have 
		\begin{align}
		\frac{\mathbf{w}(\overline{T}_\epsilon)}{\delta} = \mathbf{u}(T_\epsilon) + \zeta,
		\label{zt_eq}
		\end{align}
		where $\|\zeta\|_2\leq \epsilon$. From $\epsilon\in (0,\eta/2)$ and $\|\mathbf{u}(T_\epsilon)\|_2\geq \eta$,  we have $\|\mathbf{u}(T_\epsilon) + \zeta\|_2 \geq \eta/2$, which implies
		\begin{equation*}
		\left\|\mathbf{w}\left(\overline{T}_\epsilon\right)\right\|_2  \geq \delta\eta/2.
		\end{equation*}
		Next, from \cref{zt_eq} we have
		\begin{align*}
		\frac{\mathbf{w}(\overline{T}_\epsilon)}{\|\mathbf{w}(\overline{T}_\epsilon)\|_2} = \frac{\mathbf{u}(T_\epsilon) + \zeta}{\|\mathbf{u}(T_\epsilon) + \zeta\|_2}.
		\end{align*}
		Hence,
		\begin{align*}
		\frac{\mathbf{w}(\overline{T}_\epsilon)^\top{\mathbf{u}}_*}{\|\mathbf{w}(\overline{T}_\epsilon)\|_2} &= \frac{\mathbf{u}(T_\epsilon)^\top{\mathbf{u}}_* + \zeta^\top{\mathbf{u}}_*}{\|\mathbf{u}(T_\epsilon) + \zeta\|_2}\\
		& = \left(\frac{\mathbf{u}(T_\epsilon)^\top{\mathbf{u}}_*}{\|\mathbf{u}(T_\epsilon)\|_2}\right)\frac{\|\mathbf{u}(T_\epsilon)\|_2}{\|\mathbf{u}(T_\epsilon) + \zeta\|_2} + \frac{\zeta^\top{\mathbf{u}}_*}{\|\mathbf{u}(T_\epsilon)+ \zeta\|_2}\\
		& \geq (1-\epsilon)\frac{\|\mathbf{u}(T_\epsilon)\|_2}{\|\mathbf{u}(T_\epsilon)\|_2 + \|\zeta\|_2} - \frac{2\epsilon}{\eta}\\
		&= (1-\epsilon)\frac{1}{1+\frac{ \|\zeta\|_2}{\|\mathbf{u}(T_\epsilon)\|_2}}- \frac{2\epsilon}{\eta}\\
		&\geq \frac{1-\epsilon}{1+\frac{ \epsilon}{\eta}}-\frac{2\epsilon}{\eta} \geq (1-\epsilon)\left(1-\frac{\epsilon}{\eta}\right) - \frac{2\epsilon}{\eta} \geq 1 - \left(1+\frac{3}{\eta}\right)\epsilon,
		\end{align*}
		where in the first inequality we used \cref{kkt_convg1} along with the fact that $\|\zeta\|_2\leq \epsilon$ and $\|\mathbf{u}(T_\epsilon) + \zeta\|_2 \geq \eta/2.$ The second inequality follows since $\|\zeta\|_2\leq \epsilon$ and $\|\mathbf{u}(T_\epsilon)\|_2 \geq \eta.$
	\end{proof}
	\section{Proofs omitted from \Cref{sec_results_kkt}}\label{proof_kkt_ncf}
	We first state some key lemmata which are used to prove \Cref{L_layer_relu_ncf} and \Cref{L_layer_high_ncf}.
	\subsection{ Key Lemmata}
	Recall that for any locally Lipschitz continuous function $f:X\to\sR$, $\nabla f(\rvx)$ denotes its gradient at $\rvx\in X$, provided $f$ is differentiable at $\rvx,$ and its Clarke Subdifferential is denoted by $\partial f(\rvx)$, which is defined as
	\begin{align*}
	\partial f(\mathbf{x}) \coloneqq \text{conv}\left\{\lim_{i\rightarrow\infty}\nabla f(\mathbf{x}_i): \lim_{i\rightarrow \infty}\mathbf{x}_i= \mathbf{x}, \mathbf{x}_i\in \Omega\right\},
	\end{align*}
	where $\Omega$ is any full-measure subset of $X$ such that $f$ is differentiable at each of its points. Note that if $f$ is differentiable everywhere, then $\partial f(\mathbf{x}) = \{\nabla f(\rvx)\}$.
	
	To compute the Clarke subdifferentials of functions which are compositions of non-differentiable functions, we use Clarke's chain rule of differentiation described in the following lemma
	\begin{lemma}\citep[Theorem 2.3.9 ]{clarke_83} Let $h_1,\hdots,h_n:\mathbb{R}^d\rightarrow \mathbb{R}$ and $g: \mathbb{R}^n\rightarrow \mathbb{R}$ be locally Lipschitz functions, and $f(\mathbf{x})= g(h_1(\mathbf{x}),\hdots,h_n(\mathbf{x})) $, then,
		$$\partial f(\mathbf{x})  \subseteq \text{conv}\left\{ \sum_{i=1}^n \alpha_i\zeta_i: \zeta_i\in  \partial h_i(\mathbf{x}) , \alpha\in \partial g(h_1(\mathbf{x}),\hdots,h_n(\mathbf{x})) \right\}.$$
		\label{chain_rule_non_diff}
	\end{lemma}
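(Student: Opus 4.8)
The plan is to establish the inclusion by comparing support functions, working through Clarke's generalized directional derivative $f^{\circ}(\mathbf{x};\mathbf{v}) := \limsup_{\mathbf{y}\to\mathbf{x},\,t\downarrow 0}\big(f(\mathbf{y}+t\mathbf{v})-f(\mathbf{y})\big)/t$. I would invoke the standard facts of nonsmooth analysis, all valid under the Rademacher-limit definition of $\partial$ used here (by Clarke's equivalence theorem): for $\phi$ Lipschitz near a point $\mathbf{x}_0$, the set $\partial\phi(\mathbf{x}_0)$ is nonempty, compact and convex, and $\phi^{\circ}(\mathbf{x}_0;\mathbf{v})=\max_{\xi\in\partial\phi(\mathbf{x}_0)}\langle\xi,\mathbf{v}\rangle$ for all $\mathbf{v}$ (so $\phi^{\circ}(\mathbf{x}_0;\cdot)$ is its support function); the map $\partial\phi$ has closed graph and is locally bounded; $S\subseteq T$ for compact convex sets iff their support functions satisfy $\sigma_S\le\sigma_T$ pointwise; and Lebourg's mean value theorem: $g(\mathbf{b})-g(\mathbf{a})=\langle\gamma,\mathbf{b}-\mathbf{a}\rangle$ for some $\mathbf{c}\in(\mathbf{a},\mathbf{b})$ and $\gamma\in\partial g(\mathbf{c})$. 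Writing $H=(h_1,\dots,h_n)$, the composite $f=g\circ H$ is locally Lipschitz, so $\partial f(\mathbf{x})$ is a well-defined compact convex set. Set $K := \mathrm{conv}\{\sum_{i}\alpha_i\zeta_i : \zeta_i\in\partial h_i(\mathbf{x}),\ \alpha\in\partial g(H(\mathbf{x}))\}$; this is compact and convex, since it is the convex hull of the image of the compact set $\partial g(H(\mathbf{x}))\times\prod_i\partial h_i(\mathbf{x})$ under the continuous map $(\alpha,\zeta_1,\dots,\zeta_n)\mapsto\sum_i\alpha_i\zeta_i$, and convex hulls of compact subsets of $\mathbb{R}^d$ are compact. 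By the support-function criterion, it suffices to prove $f^{\circ}(\mathbf{x};\mathbf{v})\le\sigma_K(\mathbf{v})$ for every $\mathbf{v}\in\mathbb{R}^d$.

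For the main estimate, fix $\mathbf{v}$ and choose $\mathbf{y}_k\to\mathbf{x}$, $t_k\downarrow 0$ realizing the $\limsup$ defining $f^{\circ}(\mathbf{x};\mathbf{v})$ (finite, as $f$ is Lipschitz near $\mathbf{x}$). Put $\mathbf{a}_k=H(\mathbf{y}_k)$, $\mathbf{b}_k=H(\mathbf{y}_k+t_k\mathbf{v})$ and $\mathbf{w}_k=(\mathbf{b}_k-\mathbf{a}_k)/t_k\in\mathbb{R}^n$; local Lipschitzness of $H$ bounds $\|\mathbf{w}_k\|$ uniformly and gives $\mathbf{a}_k,\mathbf{b}_k\to H(\mathbf{x})$. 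Applying Lebourg's theorem to $g$ on $[\mathbf{a}_k,\mathbf{b}_k]$ (inside a neighbourhood of $H(\mathbf{x})$ on which $g$ is Lipschitz) yields $\mathbf{c}_k\in(\mathbf{a}_k,\mathbf{b}_k)$ and $\gamma_k\in\partial g(\mathbf{c}_k)$ with $\big(f(\mathbf{y}_k+t_k\mathbf{v})-f(\mathbf{y}_k)\big)/t_k = \langle\gamma_k,\mathbf{w}_k\rangle$. Since $\mathbf{c}_k\to H(\mathbf{x})$ and $\partial g$ is locally bounded, $(\gamma_k)$ is bounded; passing to a subsequence, $\gamma_k\to\gamma_*$ and $\mathbf{w}_k\to\mathbf{w}_*$, and the closed-graph property forces $\gamma_*\in\partial g(H(\mathbf{x}))$. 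For each coordinate, $w_*^{(i)}=\lim_k\big(h_i(\mathbf{y}_k+t_k\mathbf{v})-h_i(\mathbf{y}_k)\big)/t_k$ lies between the $\liminf$ and $\limsup$ of the corresponding difference quotients, i.e. in $[-h_i^{\circ}(\mathbf{x};-\mathbf{v}),\,h_i^{\circ}(\mathbf{x};\mathbf{v})] = [\min_{\zeta\in\partial h_i(\mathbf{x})}\langle\zeta,\mathbf{v}\rangle,\ \max_{\zeta\in\partial h_i(\mathbf{x})}\langle\zeta,\mathbf{v}\rangle]$; since $\partial h_i(\mathbf{x})$ is convex, $\{\langle\zeta,\mathbf{v}\rangle:\zeta\in\partial h_i(\mathbf{x})\}$ is exactly that closed interval, so there is $\zeta_i^*\in\partial h_i(\mathbf{x})$ with $\langle\zeta_i^*,\mathbf{v}\rangle=w_*^{(i)}$. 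Hence $f^{\circ}(\mathbf{x};\mathbf{v})=\langle\gamma_*,\mathbf{w}_*\rangle=\big\langle\sum_i\gamma_*^{(i)}\zeta_i^*,\ \mathbf{v}\big\rangle\le\sigma_K(\mathbf{v})$, because $\sum_i\gamma_*^{(i)}\zeta_i^*\in K$. As $\mathbf{v}$ is arbitrary, $f^{\circ}(\mathbf{x};\cdot)\le\sigma_K(\cdot)$, so $\partial f(\mathbf{x})\subseteq K$.

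The routine parts are the compactness of $K$, the finiteness of $f^{\circ}(\mathbf{x};\mathbf{v})$, and the subsequence extractions; the genuinely delicate step is the coordinatewise control of the limiting vector $\mathbf{w}_*$. Because some entries of $\gamma_*$ may be negative, a one-sided (upper) directional-derivative bound on $w_*^{(i)}$ does not suffice — one must also carry the lower bound $-h_i^{\circ}(\mathbf{x};-\mathbf{v})$ and then exploit convexity of $\partial h_i(\mathbf{x})$ to realize each $w_*^{(i)}$ exactly as $\langle\zeta_i^*,\mathbf{v}\rangle$ for a specific subgradient. Coordinating this with the correct use of upper semicontinuity of $\partial g$ (so that $\gamma_*$ lands in $\partial g(H(\mathbf{x}))$ and not merely in a subdifferential at a nearby point) is where the argument must be handled with care; everything else is bookkeeping with support functions.
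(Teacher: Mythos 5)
This lemma is imported verbatim from \citet{clarke_83} (Theorem 2.3.9) and the paper gives no proof of its own, so the only meaningful comparison is with the standard argument in Clarke's monograph --- which is essentially what you have reproduced. Your proof is correct: the reduction to a support-function inequality via $f^{\circ}(\mathbf{x};\cdot)=\sigma_{\partial f(\mathbf{x})}(\cdot)$, the use of Lebourg's mean value theorem along realizing sequences, the upper semicontinuity of $\partial g$ to place $\gamma_*$ in $\partial g(H(\mathbf{x}))$, and in particular the two-sided bound $w_*^{(i)}\in[-h_i^{\circ}(\mathbf{x};-\mathbf{v}),\,h_i^{\circ}(\mathbf{x};\mathbf{v})]$ combined with convexity of $\partial h_i(\mathbf{x})$ to realize $w_*^{(i)}$ exactly as $\langle\zeta_i^*,\mathbf{v}\rangle$ (which is exactly the step a one-sided estimate would botch when $\gamma_*^{(i)}<0$) are all handled correctly.
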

	
	To prove our results, we will frequently require the gradients of the NCF with respect to weights of each layer, which are derived next.
	
	\begin{lemma}\label{gd_nn}
		For $L\geq 2$, let $\gH$ be an $L$-layer feed-forward neural network whose output is as defined in \cref{f_nn}. For all $l\in [L-1]$ and $i\in [n]$, define 
		\begin{align}
		&\phi^0_i = \rvx_i, \rvh_i^{1}= \rmW_{1}\phi_i^{0},\\
		& \phi_i^{l} = \sigma(\rvh_i^{l}), \rvh_i^{l+1}= \rmW_{l+1}\phi_i^{l},
		\end{align}
		and let $\rmA_i^l = \text{diag}\left(\sigma'(\rvh_i^l)\right)$\footnote{For non-differentiable activation functions such as ReLU, $\sigma'$ could be an interval, implying that $\rmA_i^l $ is a set-valued mapping.}, where $\sigma'(\cdot)$ denotes the Clarke subdifferential of $\sigma(\cdot)$. Also, let
		\begin{align*}
		\rve_i^l = \rmA_i^{l}\rmW_{l+1}^\top\cdots\rmA_i^{L-1} \rmW_L^\top y_i \text{ and } \rve_i^L = y_i,
		\end{align*}
		for all $l\in [L-1]$ and $i\in [n]$, then
		\begin{align*}
		\partial_{{\mathbf{W}}_l} \left(\sum_{i=1}^n y_i\mathcal{H}(\rvx_i;\rmW_1,\cdots\rmW_L)\right) \subseteq \sum_{i=1}^n\rve_i^l (\phi_i^{l-1})^\top, \text{ for all } l\in[L].
		\end{align*}
	\end{lemma}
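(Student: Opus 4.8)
The plan is to dismantle the network one layer at a time, using the Clarke sum and scaling rules ($\partial(f+g)\subseteq\partial f+\partial g$, $\partial(cf)=c\,\partial f$) to peel off the sum over $i$ and the Clarke chain rule (\Cref{chain_rule_non_diff}) to traverse the layers. First I would invoke the sum rule to reduce the statement to the per-example bound $\partial_{\rmW_l}\!\big(y_i\mathcal{H}(\rvx_i;\rmW_1,\dots,\rmW_L)\big)\subseteq \rve_i^l(\phi_i^{l-1})^\top$ for each fixed $i\in[n]$; since $\sigma'$ may be interval-valued, the right-hand side is read as the set $\big\{\rmD_l\rmW_{l+1}^\top\rmD_{l+1}\cdots\rmW_L^\top y_i\,(\phi_i^{l-1})^\top:\rmD_m\in\diag(\sigma'(\rvh_i^m)),\ l\le m\le L-1\big\}$ (and strictly as its convex hull, which is harmless downstream --- see the final paragraph).

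Fix $i$ and $l$, and write $y_i\mathcal{H}(\rvx_i;\rmW_1,\dots,\rmW_L)=\Gamma_i^l\!\big(\rmW_l\phi_i^{l-1}\big)$, where $\Gamma_i^l:\R^{k_l}\to\R$ is the fixed tail map that runs layers $l+1,\dots,L$ on input $\sigma(\cdot)$: $\Gamma_i^l(\rvh):=y_i\rmW_L\sigma\big(\rmW_{L-1}\cdots\sigma(\rmW_{l+1}\sigma(\rvh))\cdots\big)$ for $l\le L-1$, and $\Gamma_i^L(\rvh):=y_i\rvh$. Note $\Gamma_i^l(\rvh_i^l)=y_i\mathcal{H}(\rvx_i;\cdots)$, that $\phi_i^{l-1}$ does not depend on $\rmW_l$, and that $\rmH\mapsto\rmH\phi_i^{l-1}$ is linear with Frobenius-adjoint $\zeta\mapsto\zeta(\phi_i^{l-1})^\top$. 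Applying the chain rule through this linear inner map (its convex-hull wrapper being redundant because $\partial\Gamma_i^l$ is convex) gives
\[
\partial_{\rmW_l}\!\big(y_i\mathcal{H}(\rvx_i;\cdots)\big)\ \subseteq\ \big\{\,\zeta\,(\phi_i^{l-1})^\top:\zeta\in\partial\Gamma_i^l(\rvh_i^l)\,\big\},
\]
so it remains to show $\partial\Gamma_i^l(\rvh_i^l)\subseteq\rve_i^l$.

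I would prove $\partial\Gamma_i^l(\rvh_i^l)\subseteq\rve_i^l$ by downward induction on $l$, from $l=L$ to $l=1$. The base case $l=L$ is immediate since $\Gamma_i^L$ is linear: $\partial\Gamma_i^L(\rvh_i^L)=\{y_i\}=\rve_i^L$. For $l\le L-1$, factor $\Gamma_i^l=(\Gamma_i^{l+1}\circ\rmW_{l+1})\circ\sigma$, with $\sigma$ acting coordinate-wise and $\rmW_{l+1}$ understood as the linear map it induces. Peeling off the outer coordinate-wise $\sigma$ via \Cref{chain_rule_non_diff} --- the $m$-th coordinate map $\rvh\mapsto\sigma\big((\rvh)_m\big)$ depends only on $(\rvh)_m$, with Clarke subdifferential at $\rvh_i^l$ supported on coordinate $m$ with entry in $\partial\sigma\big((\rvh_i^l)_m\big)$ --- produces precisely the diagonal factor $\rmA_i^l=\diag(\sigma'(\rvh_i^l))$, giving $\partial\Gamma_i^l(\rvh_i^l)\subseteq\mathrm{conv}\big(\rmA_i^l\,\partial g(\phi_i^l)\big)$ with $g:=\Gamma_i^{l+1}\circ\rmW_{l+1}$. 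Peeling off the linear $\rmW_{l+1}$ gives $\partial g(\phi_i^l)\subseteq\rmW_{l+1}^\top\,\partial\Gamma_i^{l+1}(\rmW_{l+1}\phi_i^l)=\rmW_{l+1}^\top\,\partial\Gamma_i^{l+1}(\rvh_i^{l+1})$, and the induction hypothesis $\partial\Gamma_i^{l+1}(\rvh_i^{l+1})\subseteq\rve_i^{l+1}$ closes the loop, since $\rmA_i^l\rmW_{l+1}^\top\rve_i^{l+1}$ unfolds to $\rve_i^l$ by definition of $\rve_i^l$. Combining this with the reduction of the previous paragraph and summing over $i$ proves the lemma.

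The main obstacle is the nonsmoothness bookkeeping rather than any individual computation: \Cref{chain_rule_non_diff} yields only a set \emph{containment}, with a convex hull around its right-hand side, so across the $L-l$ recursion steps one must check that these convex hulls, together with the set-valued diagonal factors $\rmA_i^m$, never inflate the estimate past the stated $\rve_i^l$ (equivalently, $\rve_i^l$ is understood up to convex hull, which is all the downstream KKT argument needs). In the differentiable regime --- e.g. $\sigma(x)=\max(\alpha x,x)^p$ with $p\ge2$, which is $C^1$ --- every $\rmA_i^m$ is a singleton, all the containments above become equalities, and the identity is just ordinary back-propagation; genuine care is needed only for (Leaky) ReLU ($p=1$, $\alpha\ne1$), where $\partial\sigma(0)$ is a nontrivial interval.
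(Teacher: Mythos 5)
Your proof follows essentially the same route as the paper's: reduce via the sum rule to a per-example bound, handle $l=L$ directly by linearity, and obtain the $l\in[L-1]$ cases by repeatedly applying the Clarke chain rule (\Cref{chain_rule_non_diff}) layer by layer — the paper simply compresses your downward induction into the phrase ``repeatedly applying the chain rule.'' Your explicit bookkeeping of the convex hulls and the set-valued factors $\rmA_i^m$ is in fact more careful than the paper's treatment of the same point, and your argument is correct.
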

	\begin{proof}
		We begin by noting that, for $ l\in[L]$, 
		\begin{align*}
		\partial_{{\mathbf{W}}_l} \left(\sum_{i=1}^n y_i\mathcal{H}(\rvx_i;\rmW_1,\cdots\rmW_L)\right) \subseteq 	\sum_{i=1}^n\partial_{{\mathbf{W}}_l}\left(y_i\mathcal{H}(\rvx_i;\rmW_1,\cdots\rmW_L)\right).
		\end{align*}
		Thus, we need to prove
		\begin{align*}
		\partial_{{\mathbf{W}}_l}\left(y_i\mathcal{H}(\rvx_i;\rmW_1,\cdots\rmW_L)\right) \subseteq \rve_i^l (\phi_i^{l-1})^\top, \text{ for all } l\in[L].
		\end{align*}
		For $l=L$, the above equation is true since
		\begin{align*}
		\partial_{{\mathbf{W}}_L} \left(y_i\mathcal{H}(\rvx_i;\rmW_1,\cdots\rmW_L)\right) = 	\partial_{{\mathbf{W}}_L}\left(y_i\rmW_L\phi_i^{L-1}\right) = y_i\left(\phi_i^{L-1}\right)^\top.
		\end{align*}
		For $l\in[L-1]$, we have
		\begin{align*}
		\partial_{{\mathbf{W}}_l} \left(y_i\mathcal{H}(\rvx_i;\rmW_1,\cdots\rmW_L)\right) = 	\partial_{{\mathbf{W}}_l}\left(y_i\rmW_L\sigma(\rmW_{L-1}\cdots\sigma(\rmW_l\phi_i^{l-1})\cdots)\right) \subseteq \rve_i^l(\phi_i^{l-1})^\top,
		\end{align*}
		where the last inclusion follows from repeatedly applying the chain rule. This completes the proof.  
	\end{proof}
	
	We next restate and prove \Cref{balance_ncf}.
	\begin{lemma}
		Let $\gH$ be an $L$-layer feed-forward neural network whose output is as defined in \cref{f_nn}, where $L\geq 2$. Suppose $(\overline{\rmW}_{1},\cdots,\overline{\rmW}_{L})$ is a non-zero KKT points of the constrained NCF in \cref{ncf_gn_const}. If $\sigma(x) = \max(\alpha x,x)^p$, for some $p\in \sN$ and $\alpha\in\sR$, then
		\begin{align}
		\text{diag}\left(\overline{\rmW}_{l}\overline{\rmW}_l^\top\right) = p\cdot\text{diag}\left(\overline{\rmW}_{l+1}^\top\overline{\rmW}_{l+1}\right), \text{ for all } l\in[L-1],
		\label{nonlinear_kkt}
		\end{align}  
		and, if $\sigma(x) = x$, then
		\begin{align}
		\overline{\rmW}_{l}\overline{\rmW}_l^\top = \overline{\rmW}_{l+1}^\top\overline{\rmW}_{l+1}, \text{ for all } l\in[L-1].
		\label{linear_kkt}
		\end{align}  
		Furthermore, if $ (\overline{\rmW}_1,\cdots,\overline{\rmW}_{L-1},\overline{\rmW}_L)= (\rva_1\rvb_1^\top,\cdots,\rva_{L-1}\rvb_{L-1}^\top,\overline{\rvw}^\top),$ where $\|\rva_l\|_2 = \|\rvb_l\|_2$, for all $l\in [L-1]$, then $|\rva_l| = p^{1/4}|\rvb_{l+1}|$, for all $l\in [L-2]$, $|\rva_{L-1}| = (p\hat{p})^{1/4}|\overline{\rvw}|$, $\|\rva_l\|_2^4 = p^{L-l}/\hat{p},$  for all $l\in[L-1],$ and $\|\overline{\rvw}\|_2^2 = 1/\hat{p}$, where $\hat{p} = p^{L-1}+p^{L-2}+\cdots+1.$
		\label{balance_ncf_pf}
	\end{lemma}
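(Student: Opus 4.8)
The plan is to combine the first-order KKT conditions for \cref{ncf_gn_const} with the layer-wise homogeneity of $\gH$. Since $\sigma(x)=\max(\alpha x,x)^p$ satisfies $\sigma(cx)=c^p\sigma(x)$ for $c>0$, scaling $\rmW_l$ by $c>0$ while fixing the other weights scales the output of $\gH$ by $c^{p^{L-l}}$, so $\mathcal{N}$ is $p^{L-l}$-homogeneous in $\rmW_l$; scaling all weights jointly by $c$ scales the output by $c^{\hat{p}}$, so $\mathcal{N}$ is $\hat{p}$-homogeneous in $(\rmW_1,\cdots,\rmW_L)$. Since the feasible set of \cref{ncf_gn_const} is a smooth sphere, running the argument of \Cref{kkt_ncf} (with $L$ replaced by $\hat{p}$, using \Cref{euler_gen} and its extension to Clarke subdifferentials of homogeneous functions) shows that a non-zero KKT point $(\overline\rmW_1,\cdots,\overline\rmW_L)$ comes with a multiplier $\mu=\hat{p}\,\mathcal{N}(\overline\rmW_1,\cdots,\overline\rmW_L)$ and elements $\rmG_l\in\partial_{\rmW_l}\mathcal{N}(\overline\rmW_1,\cdots,\overline\rmW_L)$ with $\rmG_l=\mu\overline\rmW_l$ for all $l\in[L]$ and $\sum_l\|\overline\rmW_l\|_F^2=1$; crucially $\mu\neq0$ because the objective value is non-zero.

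Next I would evaluate these identities via the gradient formulas of \Cref{gd_nn}. With its notation $\rvh_i^l,\phi_i^l,\rmA_i^l,\rve_i^l$ (all at $\overline\rmW_1,\cdots,\overline\rmW_L$) each admissible $\rmG_l$ equals $\sum_i\rve_i^l(\phi_i^{l-1})^\top$ with $\rve_i^l=\rmA_i^l\overline\rmW_{l+1}^\top\rve_i^{l+1}$ and the activation patterns $\rmA_i^{\bullet}$ chosen consistently across layers. The decisive scalar fact is the Euler relation for the $p$-homogeneous $\sigma$: $g\,x=p\,\sigma(x)$ for every $g\in\partial\sigma(x)$. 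Writing $t_{i,j}^l:=(\overline\rmW_{l+1}^\top\rve_i^{l+1})_j$, one has $\rmG_l\overline\rmW_l^\top=\sum_i\rve_i^l(\rvh_i^l)^\top$, whose $j$-th diagonal entry is $\sum_i(\rmA_i^l)_{jj}\,t_{i,j}^l\,h_{i,j}^l=p\sum_i\sigma(h_{i,j}^l)\,t_{i,j}^l=p\sum_i\phi_{i,j}^l\,t_{i,j}^l$, while $\overline\rmW_{l+1}^\top\rmG_{l+1}=\sum_i(\overline\rmW_{l+1}^\top\rve_i^{l+1})(\phi_i^l)^\top$ has $j$-th diagonal entry $\sum_i t_{i,j}^l\,\phi_{i,j}^l$. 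Hence $\diag(\rmG_l\overline\rmW_l^\top)=p\,\diag(\overline\rmW_{l+1}^\top\rmG_{l+1})$; substituting $\rmG_l=\mu\overline\rmW_l$, $\rmG_{l+1}=\mu\overline\rmW_{l+1}$ and cancelling $\mu\neq0$ yields \cref{nonlinear_kkt}. For $\sigma(x)=x$ one has $p=1$ and $\rmA_i^l=\mathbf{I}$, so the same computation gives $\rmG_l\overline\rmW_l^\top=\sum_i\rve_i^l(\phi_i^l)^\top=\overline\rmW_{l+1}^\top\rmG_{l+1}$ as \emph{full} matrices, hence \cref{linear_kkt}.

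For the rank-one ``furthermore'' I would simply specialize. If $\overline\rmW_l=\rva_l\rvb_l^\top$ with $\|\rva_l\|_2=\|\rvb_l\|_2$ then $\diag(\overline\rmW_l\overline\rmW_l^\top)=\|\rva_l\|_2^2\,\rva_l^2$ and, for $l\le L-2$, $\diag(\overline\rmW_{l+1}^\top\overline\rmW_{l+1})=\|\rvb_{l+1}\|_2^2\,\rvb_{l+1}^2$, while $\diag(\overline\rmW_L^\top\overline\rmW_L)=\overline\rvw^2$ (coordinate-wise squares). Plugging into \cref{nonlinear_kkt} and taking coordinate-wise square roots gives $\|\rva_l\|_2|\rva_l|=\sqrt{p}\,\|\rvb_{l+1}\|_2|\rvb_{l+1}|$ for $l\le L-2$ and $\|\rva_{L-1}\|_2|\rva_{L-1}|=\sqrt{p}\,|\overline\rvw|$; summing squares over coordinates yields $\|\rva_l\|_2^4=p\|\rvb_{l+1}\|_2^4=p\|\rva_{l+1}\|_2^4$ ($l\le L-2$) and $\|\rva_{L-1}\|_2^4=p\|\overline\rvw\|_2^2$, hence $\|\rva_l\|_2^4=p^{L-l}\|\overline\rvw\|_2^2$ for all $l\in[L-1]$, and back-substitution gives $|\rva_l|=p^{1/4}|\rvb_{l+1}|$ for $l\le L-2$ and $|\rva_{L-1}|=p^{1/4}\|\overline\rvw\|_2^{-1/2}|\overline\rvw|$. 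Finally $\|\overline\rmW_l\|_F^2=\|\rva_l\|_2^4$, so the norm constraint gives $1=\sum_{l=1}^{L-1}\|\rva_l\|_2^4+\|\overline\rvw\|_2^2=\hat{p}\,\|\overline\rvw\|_2^2$, whence $\|\overline\rvw\|_2^2=1/\hat{p}$, $\|\rva_l\|_2^4=p^{L-l}/\hat{p}$, and $|\rva_{L-1}|=(p\hat{p})^{1/4}|\overline\rvw|$.

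The step I expect to be the main obstacle is the non-differentiable case ($p=1$, $\alpha\neq1$). There the element $\rmG=(\rmG_1,\cdots,\rmG_L)$ furnished by the KKT condition lies in the \emph{full} Clarke subdifferential $\partial\mathcal{N}$ rather than a product of per-layer subdifferentials, and the chain rule \Cref{chain_rule_non_diff} only yields an inclusion. One therefore has to argue, using that $\partial\mathcal{N}$ is generated by convex combinations of limits of gradients at differentiability points together with the closedness of $\partial\sigma$, that $\rmG$ is a convex combination of terms $\sum_i\rve_i^l(\phi_i^{l-1})^\top$ with \emph{consistent} activation patterns across layers, and that the diagonal identities above survive these convex combinations; this bookkeeping, rather than any single computation, is where the real work lies. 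For $p\ge2$ or $\alpha=1$ everything is differentiable and this issue disappears.
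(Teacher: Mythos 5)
Your proposal is correct and takes essentially the same route as the paper: the per-layer KKT conditions combined with the subgradient formula of \Cref{gd_nn} and the Euler identity $g\,x=p\,\sigma(x)$ for every $g\in\partial\sigma(x)$ give $\diag(\rmG_l\overline{\rmW}_l^\top)=p\,\diag(\overline{\rmW}_{l+1}^\top\rmG_{l+1})$, cancelling the non-zero multiplier yields \cref{nonlinear_kkt} (and the full-matrix identity in the linear case), and the rank-one specialization together with the unit-norm constraint produces the quantitative relations exactly as in the paper's argument. The non-smooth bookkeeping you flag for $p=1,\alpha\neq 1$ resolves along the lines you sketch — every element of the Clarke subdifferential is a convex combination of limits of gradients taken at differentiable points, where the activation patterns are automatically consistent across layers, and the diagonal identity is linear in the subgradient so it is preserved under such convex combinations — which is the same (implicit) level of care the paper takes.
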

	\begin{proof}
		By KKT condition, for $l\in[L-1]$, we have
		\begin{align}
		\mathbf{0} \in \partial_{\rmW_l} \left(\sum_{i=1}^ny_i\mathcal{H}(\rvx_i;\overline{\rmW}_1,\cdots\overline{\rmW}_L)\right) + \lambda\overline{\rmW}_l
		\end{align}
		Multiplying the above equation by $\overline{\rmW}_l^\top$ from the right, and using \Cref{gd_nn}, we have
		\begin{align}
		\mathbf{0} \in \sum_{i=1}^n\rve_i^l (\phi_i^{l-1})^\top\overline{\rmW}_l^\top + \lambda\overline{\rmW}_l\overline{\rmW}_l^\top
		\end{align}
		Since $\overline{\rmW}_l\phi_i^{l-1} = \rvh_{i}^l$, we have
		\begin{align}
		\mathbf{0} \in \sum_{i=1}^n\rve_i^l(\rvh_{i}^l)^\top + \lambda\overline{\rmW}_l\overline{\rmW}_l^\top.
		\end{align}
		Since $\rve_i^l = \rmA_i^{l}\overline{\rmW}_{l+1}^\top\cdots\rmA_i^{L-1}\overline{\rmW}_L^\top y_i$, if we let $\rvq_i = \overline{\rmW}_{l+1}^\top\cdots\rmA_i^{L-1}\overline{\rmW}_L^\top y_i$, then  we have
		\begin{align}
		\mathbf{0} \in \sum_{i=1}^n  \rmA_i^{l}\rvq_i (\rvh_{i}^l)^\top + \lambda\overline{\rmW}_l\overline{\rmW}_l^\top.
		\end{align}
		Next, similarly using the KKT condition for $\rmW_{l+1}$, we  have
		\begin{align}
		\mathbf{0} \in  \overline{\rmW}_{l+1}^\top\sum_{i=1}^n\rve_i^{l+1} (\phi_i^{l})^\top + \lambda\overline{\rmW}_{l+1}^\top\overline{\rmW}_{l+1}.
		\end{align}
		Expanding $\rve_i^{l+1}$, we have
		\begin{align}
		\mathbf{0} \in  \sum_{i=1}^n\overline{\rmW}_{l+1}^\top\rmA_i^{l+1}\overline{\rmW}_{l+2}^\top\cdots\rmA_i^{L-1}\overline{\rmW}_L^\top\rvy_i^\top (\phi_i^{l})^\top + \lambda\overline{\rmW}_{l+1}^\top\overline{\rmW}_{l+1},
		\label{lp1_kkt}
		\end{align}
		which implies 
		\begin{align}
		\mathbf{0} \in  \sum_{i=1}^n\rvq_i (\phi_i^{l})^\top + \lambda\overline{\rmW}_{l+1}^\top\overline{\rmW}_{l+1}.
		\end{align}
		We next show that 
		\begin{align*}
		\text{diag}(\rmA_i^{l}\rvq_i (\rvh_{i}^l)^\top ) = p\text{diag}(\rvq_i (\phi_i^{l})^\top ), \forall i\in [n] \text{ and } l\in [L-1],
		\end{align*}
		which will prove the first part of the theorem. To show this, first note that since $\rmA_i^{l}$ is a diagonal matrix, and $\rvq_i , \rvh_{i}^l$ are vectors, we have
		\begin{align*}
		\text{diag}(\rmA_i^{l}\rvq_i (\rvh_{i}^l)^\top ) = \rmA_i^{l}\text{diag}(\rvq_i) \text{diag}(\rvh_{i}^l) &= \text{diag}(\rvq_i)  \rmA_i^{l}\text{diag}(\rvh_{i}^l)\\
		& =  \text{diag}(\rvq_i) \text{diag}(\sigma'(\rvh_{i}^l))\text{diag}(\rvh_{i}^l)\\
		& = p\text{diag}(\rvq_i) \text{diag}(\sigma(\rvh_{i}^l))\\
		& = p\text{diag}(\rvq_i)\text{diag}(\phi_{i}^l) = p\text{diag}(\rvq_i(\phi_{i}^l)^\top),
		\end{align*}
		where in the third equality, we used the definition of $\rmA_i^l$ and the fourth equality follows from the fact $\sigma'(x)x = p\sigma(x)$.
		
		For $\sigma(x) = x$, note that $\rmA_i^l$ will be identity and $\phi_i^l = \rvh_i^l$, for all $i\in [n]$ and $l\in [L-1]$. Hence, 
		\begin{align*}
		\rmA_i^{l}\rvq_i (\rvh_{i}^l)^\top = \rvq_i (\phi_i^{l})^\top, \forall i\in [n] \text{ and } l\in [L-1],
		\end{align*}
		which proves \cref{linear_kkt}.
		
		We next prove the second part. From \cref{nonlinear_kkt}, we have 
		$$\|\overline{\rmW}_1\|_F^2 = p\|\overline{\rmW}_2\|_F^2 = \cdots =  p^{L-1}\|\overline{\rmW}_L\|_F^2.$$
		Since 
		\begin{align*}
		\|\overline{\rmW}_1\|_F^2 +\cdots \|\overline{\rmW}_L\|_F^2 = 1,
		\end{align*}
		if we define $\hat{p} =p^{L-1}+p^{L-2}+\cdots+1$, then,
		$$1 = \|\overline{\rmW}_L\|_F^2(p^{L-1}+p^{L-2}+\cdots+1),$$
		which implies $\|\overline{\rmW}_l\|_F^2 = p^{L-l}/\hat{p}$, for all $l\in [L]$. Next, since  $\|\rva_l\|_2 = \|\rvb_l\|_2$, we further get $p^{L-l}/\hat{p} = \|\rva_l\|_2^4,$  for all $l\in[L-1]$ and $\|\overline{\rvw}\|_2^2 = 1/\hat{p}.$
		
		Now, using \cref{nonlinear_kkt}, for all $l\in [L-2]$, we have
		\begin{align*}
		\text{diag}\left(\|\rvb_l\|_2^2\rva_l\rva_l^\top\right) = p\cdot\text{diag}\left(\|\rva_{l+1}\|_2^2\rvb_{l+1}\rvb_{l+1}^\top\right), \text{ and } \text{diag}\left(\|\rvb_{L-1}\|_2^2\rva_{L-1}\rva_{L-1}^\top\right) = p\cdot\text{diag}\left(\overline{\rvw}\overline{\rvw}^\top\right)
		\end{align*}
		which implies
		\begin{align}
		|\rva_l| = {p}^{1/4}|\rvb_{l+1}| \text{ and } |\rva_{L-1}| = {(p\hat{p})}^{1/4}|\overline{\rvw}|,
		\label{bal_nm_lhrl}
		\end{align}
		where the last equality uses $p^{L-l}/\hat{p} = \|\rvb_{L-1}\|_2^4.$ Hence, the proof is complete.
	\end{proof}
	
	\subsection{Proof of \Cref{L_layer_relu_ncf}}\label{proof_rl}
	From \Cref{balance_ncf}, we know  $\|\overline{\rvw}\|_2^2 = 1/L$ and
	\begin{align}
	\|\rva_l\|_2^2  = \|\rvb_l\|_2^2 = \frac{1}{\sqrt{L}}, \text{ for all }l\in [L-1].
	\label{nm_bal}
	\end{align}
	For the remaining proof, we handle each of the cases individually, and begin by $\alpha = 1$. \\
	\textbf{Case 1 }$(\alpha=1):$ 
	We begin by proving the ``only if'' part. From \Cref{balance_ncf}, we have
	\begin{align}
	\overline{\rmW}_{l}\overline{\rmW}_l^\top = \overline{\rmW}_{l+1}^\top\overline{\rmW}_{l+1}, \forall l\in[L-1],
	\end{align}  
	which implies, for $l\in [L-2]$, 
	\begin{align*}
	\|\rvb_{l}\|_2^2\rva_l\rva_l^\top = \|\rva_{l+1}\|_2^2\rvb_{l+1}\rvb_{l+1}^\top, \text { and } \|\rvb_{L-1}\|_2^2\rva_{L-1}\rva_{L-1}^\top = \overline{\rvw}\overline{\rvw}^\top.
	\end{align*}
	Combining the above equality with \cref{nm_bal} we get
	\begin{align*}
	q_l\rva_l = \rvb_{l+1}, \forall l \in [L-2], \text{ and } q_{L-1}\rva_{L-1} = L^{1/4}\overline{\rvw},
	\end{align*}
	where $q_l\in \{-1,1\}$. Hence,
	$$\overline{\rmW}_{1} = \rva_1\rvb_1^\top \text{ and } \overline{\rmW}_{l} = q_{l-1}\rva_l\rva_{l-1}^\top, \text{ if } 2\leq l\leq L-1. $$
	Now, since $(\overline{\rmW}_{1},\cdots,\overline{\rmW}_{L})$ is a non-zero KKT point of \cref{ncf_gn_const}, using \Cref{kkt_ncf}, we have
	\begin{align}
	\mathbf{0} = \lambda\rva_1\rvb_1^\top + \nabla_{\rmW_1} \left(\rvy^\top\gH(\rmX;\overline{\rmW}_{1} ,\cdots,\overline{\rmW}_{L} )\right),\label{kkt1_lin}
	\end{align}
	where $\lambda\neq 0$.  Let $q_{1:L-1} \coloneqq q_1q_2\cdots q_{L-1}$. From \Cref{gd_nn}, we have 
	\begin{align}
	\nabla_{\rmW_1} \left(\rvy^\top\gH(\rvx_i;\overline{\rmW}_{1} ,\cdots,\overline{\rmW}_{L} )\right)
	&= \sum_{i=1}^n \overline{\rmW}_{2}^\top \cdots\overline{\rmW}_{L}^\top y_i\rvx_i^\top\nonumber\\
	&= q_{1:L-1}\sum_{i=1}^n \rva_1\rva_2^\top\rva_2\rva_3^\top \cdots\rva_{L-2}\rva_{L-1}^\top\rva_{L-1} y_i\rvx_i^\top/L^{1/4}\nonumber\\
	&= q_{1:L-1}\left(\frac{1}{\sqrt{L}}\right)^{L-2}\frac{1}{L^{1/4}}\sum_{i=1}^n \rva_1y_i\rvx_i^\top.\label{kkt2_lin}
	\end{align}
	From  \cref{kkt1_lin} and \cref{kkt2_lin}, we have
	\begin{align}
	\mathbf{0} = \lambda\rvb_1^\top + q_{1:L-1}\left(\frac{1}{\sqrt{L}}\right)^{L-2}\frac{1}{L^{1/4}}\sum_{i=1}^n y_i\rvx_i^\top.
	\label{cs1_kkt2}
	\end{align}
	
	Now, $\rvu_*$ is a  KKT point of
	\begin{align}
	\max_{\rvu} \left(\sum_{i=1}^n y_i\rvx_i^\top\right)\rvu, \text{ such that } \|\rvu\|_2^2=1/\sqrt{L},
	\label{small_const_ncf_id}
	\end{align}
	if there exists $\lambda^*$ such that
	\begin{align}
	\sum_{i=1}^ny_i\rvx_i + \lambda^*\rvu_* = \mathbf{0}, \text{ and } \|\rvu_*\|_2^2 =1/\sqrt{L}.
	\label{kkt_cond_ind1_id}
	\end{align}
	Hence, from \cref{cs1_kkt2}, and since $\|\rvb_1\|_2^2 = 1/\sqrt{L}$, we observe that $\rvb_1$ satisfies the KKT conditions in \cref{kkt_cond_ind1_id}.
	
	We now turn towards proving the ``if'' part. 
	
	Note that $(\overline{\rmW}_{1} ,\cdots,\overline{\rmW}_{L}) = (\rva_1\rvb_1^\top, q_1\rva_2\rva_1^\top,\cdots,q_{L-2}\rva_{L-1}\rva_{L-2}^\top,q_{L-1}\rva_{L-1}^\top/L^{1/4})$. Now, since  $\rvb_1$ is a non-zero KKT point of \cref{small_const_ncf_lin}, from \Cref{kkt_ncf}, there exists $\lambda^*\neq 0$ such that
	\begin{align}
	& \sum_{i=1}^ny_i\rvx_i + \lambda^*\rvb_1 = \mathbf{0},
	\label{kkt_cond_ind1_id_if}
	\end{align}
	
	We will complete the proof by showing that for $\lambda = q_{1:L-1}\lambda^*\left({1}/{\sqrt{L}}\right)^{L-2}/L^{1/4} $,
	\begin{align}
	\lambda\overline{\rmW}_l + \nabla_{\rmW_l} \left(\rvy^\top\gH(\rmX;\overline{\rmW}_{1} ,\cdots,\overline{\rmW}_{L} )\right) = \mathbf{0}, \forall l\in [L].
	\end{align} 
	For $l = 1$, we have
	\begin{align*}
	&\lambda\overline{\rmW}_1+ \nabla_{\rmW_1} \left(\rvy^\top\gH(\rmX;\overline{\rmW}_{1} ,\cdots,\overline{\rmW}_{L} )\right)  \\
	&= \lambda\rva_1\rvb_1^\top + \sum_{i=1}^n \overline{\rmW}_{2}^\top \cdots\overline{\rmW}_{L}^\top y_i\rvx_i^\top\\
	&=  \lambda\rva_1\rvb_1^\top + q_{1:L-1}\sum_{i=1}^n \rva_1\rva_2^\top\rva_2\rva_3^\top \cdots\rva_{L-2}\rva_{L-1}^\top\rva_{L-1} y_i\rvx_i^\top/L^{1/4}\\
	&= \lambda\rva_1\rvb_1^\top + q_{1:L-1}\left(\frac{1}{\sqrt{L}}\right)^{L-2}\frac{1}{L^{1/4}}\sum_{i=1}^n \rva_1 y_i\rvx_i^\top\\
	& = \lambda\rva_1\rvb_1^\top - q_{1:L-1}\lambda^*\left(\frac{1}{\sqrt{L}}\right)^{L-2}\frac{1}{L^{1/4}} \rva_1\rvb_1^\top = \mathbf{0},
	\end{align*}
	where the penultimate equality follows from \cref{kkt_cond_ind1_id_if}. For $2\leq l\leq L-1$, we have
	\begin{align*}
	&\lambda\overline{\rmW}_l+ \nabla_{\rmW_l} \left(\rvy^\top\gH(\rmX;\overline{\rmW}_{1} ,\cdots,\overline{\rmW}_{L} )\right)  \\
	&= \lambda q_{l-1}\rva_l\rva_{l-1}^\top + \sum_{i=1}^n \overline{\rmW}_{l+1}^\top \cdots\overline{\rmW}_{L}^\top y_i\rvx_i^\top\overline{\rmW}_{1}^\top \cdots\overline{\rmW}_{l-1}^\top\\
	&=  \lambda q_{l-1}\rva_l\rva_{l-1}^\top  + \frac{q_{1:l-2}q_{l:L-1}}{L^{1/4}}\sum_{i=1}^n \rva_{l}\rva_{l+1}^\top\rva_{l+1}\rva_{l+2}^\top \cdots\rva_{L-2}\rva_{L-1}^\top\rva_{L-1} y_i\rvx_i^\top\rvb_1\rva_1^\top\rva_1\rva_2^\top \cdots\rva_{l-2}\rva_{l-1}^\top\\
	&= \lambda q_{l-1}\rva_l\rva_{l-1}^\top + \frac{q_{1:l-2}q_{l:L-1}}{L^{1/4}}\left(\frac{1}{\sqrt{L}}\right)^{L-3} \rva_l\rva_{l-1}^\top\sum_{i=1}^n y_i\rvx_i^\top\rvb_1.\\
	&= \lambda q_{l-1}\rva_l\rva_{l-1}^\top  - \frac{q_{1:l-2}q_{l:L-1}}{L^{1/4}}\lambda^*\left(\frac{1}{\sqrt{L}}\right)^{L-2} \rva_{l}\rva_{l-1}^\top = \mathbf{0},
	\end{align*}
	where the penultimate equality follows from \cref{kkt_cond_ind1_id_if} and since $\|\rvb_1\|_2^2 = 1/\sqrt{L}.$ Finally, for $l= L$, we have
	\begin{align*}
	&\lambda\overline{\rmW}_L+ \nabla_{\rmW_L} \left(\rvy^\top\gH(\rmX;\overline{\rmW}_{1} ,\cdots,\overline{\rmW}_{L} )\right)  \\
	&= \lambda q_{L-1}\rva_{L-1}^\top/L^{1/4} +\sum_{i=1}^ny_i\rvx_i^\top \overline{\rmW}_{1}^\top \cdots\overline{\rmW}_{L-1}^\top\\
	&= \lambda q_{L-1}\rva_{L-1}^\top/L^{1/4} +  q_{1:L-2}\sum_{i=1}^n y_i\rvx_i^\top\rvb_1\rva_1^\top\rva_1\rva_2^\top \cdots\rva_{L-3}\rva_{L-2}^\top\rva_{L-2}\rva_{L-1}^\top\nonumber\\
	&= \lambda q_{L-1}\rva_{L-1}^\top/L^{1/4} + q_{1:L-2}\left(\frac{1}{\sqrt{L}}\right)^{L-2}\sum_{i=1}^n y_i\rvx_i^\top\rvb_1\rva_{L-1}^\top \\
	&= \lambda q_{L-1}\rva_{L-1}^\top/L^{1/4}  - q_{1:L-2}\lambda^*\left(\frac{1}{\sqrt{L}}\right)^{L-1}\rva_{L-1}^\top = \mathbf{0},
	\end{align*}
	where the penultimate equality follows from \cref{kkt_cond_ind1_id_if}.

	\textbf{Case 2 }$(\alpha\neq1):$ We begin by proving the ``only if'' part. Since $\sigma(\cdot)$ is $1-$positively homogeneous, we have
	\begin{align}
	\sigma(c\rvp) = c\sigma(\rvp), \text{ and } \sigma(d\rvq) = \rvq\sigma(d),
	\label{rl_1hm}
	\end{align}
	where $c>0$, $\rvp$ is a vector, and $d\in \sR$, $\rvq$ is a vector with non-negative entries.
	
	Now, from \Cref{balance_ncf}, for $l\in [L-2]$, we have
	\begin{align}
	|\rva_l| = |\rvb_{l+1}|, \text{ and } |\rva_{L-1}| = L^{1/4}|\overline{\rvw}|.
	\label{s_nm_eq}
	\end{align}
	Now, since $\{\rva_l\}_{l=1}^{L-1},\{\rvb_l\}_{l=2}^{L-1}$ are non-negative, $\rva_{L-1} = L^{1/4}|\overline{\rvw}|,$ and 
	$$\rva_l = \rvb_{l+1}, \forall  l \in [L-2].$$
	We next show that $q\rva_{L-1} = L^{1/4}\overline{\rvw}$, for some $q\in \{-1,1\}$. Since $(\overline{\rmW}_{1},\cdots,\overline{\rmW}_{L})$ is a KKT point of \cref{ncf_gn_const}, from \Cref{gd_nn}, we have 
	\begin{align}
	\mathbf{0}\in & \lambda\overline{\rmW}_{L} + \partial_{\rmW_L} \left(\rvy^\top \gH(\rmX; \overline{\rmW}_1, \cdots,\overline{\rmW}_L)\right)
	= \lambda\overline{\rvw}^\top + \sum_{i=1}^ny_i\left(\phi_i^{L-1}\right)^\top,
	\label{s_wl_kkt}
	\end{align}
	where $\phi_i^{L-1} = \sigma(\overline{\rmW}_{L-1}\cdots\sigma(\overline{\rmW}_1\rvx_i)\cdots) $. Since $\{\rva_l\}_{l=1}^{L-1}$ are non-negative, using \cref{rl_1hm}, we have
	\begin{align*}
	\phi_i^{L-1} = \sigma(\overline{\rmW}_{L-1}\cdots\sigma(\overline{\rmW}_1\rvx_i)\cdots) &= \sigma(\rva_{L-1}\rva_{L-2}^\top\sigma(\rva_{L-2}\rva_{L-3}^\top\cdots \sigma(\rva_2\rva_1^\top\sigma(\rva_1\rvb_1^\top\rvx_i))\cdots)\\
	&= \rva_{L-1}\|\rva_{L-2}\|_2^2 \|\rva_{L-3}\|_2^2 \cdots \|\rva_{1}\|_2^2 \sigma^{L-1}(\rvb_1^\top\rvx_i).
	\end{align*}
	Since $\|\rva_l\|_2^2 = 1/\sqrt{L}$, 
	\begin{align*}
	\mathbf{0} = \lambda\overline{\rvw}^\top  + \left(\frac{1}{\sqrt{L}}\right)^{L-2}\left(\sum_{i=1}^n\sigma^{L-1}(\rvb_1^\top\rvx_i)y_i\right)\rva_{L-1}^\top.
	\end{align*}
	From the above equality it is clear that $\overline{\rvw}$ is parallel to $\rva_{L-1}$. However, since $\rva_{L-1} = L^{1/4}|\overline{\rvw}|$, we get $q\rva_{L-1} = L^{1/4}\overline{\rvw}$, for some $q\in \{-1,1\}$.
	
	From \Cref{gd_nn}, we also have 
	\begin{align}
	\mathbf{0} \in\lambda\overline{\rmW}_{1} +  \partial_{\rmW_1} \left( \rvy^\top \gH(\rmX; \overline{\rmW}_1, \cdots,\overline{\rmW}_L)\right). 
	\label{s_w1_kkt}
	\end{align}
	Without loss of generality, we assume $\rva_{11}>0$, where $\rva_{11}$ denotes the first entry of $\rva_1$. Let $\rvw_{11}$ denote the first row of $\rmW$. Then from the above equation we have
	\begin{align}
	\mathbf{0} \in\lambda\rva_{11}\rvb_1^\top +  \partial_{\rvw_{11}} \left( \rvy^\top \gH(\rmX; \overline{\rmW}_1, \cdots,\overline{\rmW}_L)\right). 
	\label{s1_w1_kkt}
	\end{align}
	Our goal is to simplify $\partial_{\rvw_{11}} \left( \rvy^\top \gH(\rmX; \overline{\rmW}_1, \cdots,\overline{\rmW}_L)\right) .$
	Note that, for any $\rmW_1$, we have
	\begin{align*}
	\gH(\rvx_i; {\rmW}_1, \overline{\rmW}_2,\cdots,\overline{\rmW}_L) &= q
	\rva_{L-1}^\top\sigma(\rva_{L-1}\rva_{L-2}^\top\sigma(\rva_{L-2}\rva_{L-3}^\top\cdots \sigma(\rva_2\rva_1^\top\sigma(\rmW_1\rvx_i))\cdots))/L^{1/4}\\
	& = q\|\rva_{L-1}\|_2^2\cdots\|\rva_{2}\|_2^2\sigma^{L-2}(\rva_1^\top\sigma(\rmW_1\rvx_i))/L^{1/4}\\
	&= q\left(\frac{1}{\sqrt{L}}\right)^{L-2}\sigma^{L-2}(\rva_1^\top\sigma(\rmW_1\rvx_i))/L^{1/4}.
	\end{align*}
	Next, choose $\rmW_1$ such that its equal to $\overline{\rmW}_1$ everywhere except for the first row, and the first row  is assumed to be equal to $\rvw$. Thus, we have
	\begin{align*}
	g(\rvw) &=: \rvy^\top \gH(\rmX; {\rmW}_1, \overline{\rmW}_2,\cdots,\overline{\rmW}_L)\\
	& =  q\left(\frac{1}{\sqrt{L}}\right)^{L-2}\frac{1}{L^{1/4}}\sum_{i=1}^n y_i\sigma^{L-2}\left(\rva_{11}\sigma(\rvw^\top\rvx_i) + \sum_{j=2}^{k_1}\rva_{1j}\sigma(\rva_{1j}\rvb_1^\top\rvx_i)\right).
	\end{align*}
	Now, it is easy to see that
	\begin{align}
	\partial_\rvw g(\rva_{11}\rvb_1) = \partial_{\rvw_{11}} \left( \rvy^\top \gH(\rmX; \overline{\rmW}_1, \cdots,\overline{\rmW}_L)\right).
	\label{eq_kkt}
	\end{align}
	
	To compute $\partial_\rvw g(\rva_{11}\rvb_1) $, we first characterize $g(\rvw)$ near $\rva_{11}\rvb_1$. If for some $i\in [n]$, $\rvb_1^\top\rvx_i = 0$, then
	\begin{align*}
	\sigma^{L-2}\left(\rva_{11}\sigma(\rvw^\top\rvx_i) + \sum_{j=2}^{k_1}\rva_{1j}\sigma(\rva_{1j}\rvb_1^\top\rvx_i)\right) = \sigma^{L-1}(\rva_{11}\rvw^\top\rvx_i) = \rva_{11}\sigma^{L-1}(\rvw^\top\rvx_i).
	\end{align*}
	Next, if for some $i\in [n]$, $\rvb_1^\top\rvx_i > 0$, then  $\rvw^\top\rvx_i > 0$, if $\rvw$ is in a sufficiently small neighborhood of $\rva_{11}\rvb_1$. Further, since $\rva_{11}>0$ and $\rva_{1j}\sigma(\rva_{1j}\rvb_1^\top\rvx_i) \geq 0$, for all $j\geq 2$, we have
	\begin{align*}
	\sigma^{L-2}\left(\rva_{11}\sigma(\rvw^\top\rvx_i) + \sum_{j=2}^{k_1}\rva_{1j}\sigma(\rva_{1j}\rvb_1^\top\rvx_i)\right) &= \sigma^{L-1}(\rva_{11}\rvw^\top\rvx_i) + \sigma^{L-2}(\sum_{j=2}^{k_1}\rva_{1j}\sigma(\rva_{1j}\rvb_1^\top\rvx_i))\\
	& = \rva_{11}\sigma^{L-1}(\rvw^\top\rvx_i) + \sigma^{L-2}(\sum_{j=2}^{k_1}\rva_{1j}\sigma(\rva_{1j}\rvb_1^\top\rvx_i)).
	\end{align*}
	The above equation can also be shown to be true if $\rvb_1^\top\rvx_i<0$ and $\rvw$ is in a sufficiently small neighborhood of $\rva_{11}\rvb_1$.  Therefore, if $\rvw$ is in a sufficiently small neighborhood of $\rva_{11}\rvb_1$, we have
	\begin{align*}
	\partial_\rvw g(\rvw) = q\left(\frac{1}{\sqrt{L}}\right)^{L-2}\frac{1}{L^{1/4}}\partial_\rvw\left( \rva_{11}\sum_{i=1}^n y_i\sigma^{L-1}(\rvw^\top\rvx_i) \right) = q\rva_{11}\left(\frac{1}{\sqrt{L}}\right)^{L-2}\frac{1}{L^{1/4}}\partial_\rvw\left( \sum_{i=1}^ny_i\sigma^{L-1}(\rvw^\top\rvx_i) \right) .
	\end{align*}
	Since $\rva_{11}>0$ and $\sum_{i=1}^n y_i\sigma^{L-1}(\rvw^\top\rvx_i)$ is $1-$positively homogeneous in $\rvw$, using \Cref{euler_gen},we get
	\begin{align}
	\partial_\rvw g(\rva_{11}\rvb_1) &= q\rva_{11}\left(\frac{1}{\sqrt{L}}\right)^{L-2}\frac{1}{L^{1/4}}\partial_\rvw\left( \sum_{i=1}^ny_i\sigma^{L-1}(\rva_{11}\rvb_1^\top\rvx_i) \right)\nonumber\\
	& = q\rva_{11}\left(\frac{1}{\sqrt{L}}\right)^{L-2}\frac{1}{L^{1/4}}\partial_\rvw\left( \sum_{i=1}^ny_i\sigma^{L-1}(\rvb_1^\top\rvx_i) \right).
	\label{part_der_gw}
	\end{align}
	From \cref{eq_kkt}, \cref{s1_w1_kkt} and the above equation, we have
	\begin{align}
	\mathbf{0} \in \lambda\rvb_1 + q\left(\frac{1}{\sqrt{L}}\right)^{L-2}\frac{1}{L^{1/4}}\partial_\rvw\left( \sum_{i=1}^ny_i\sigma^{L-1}(\rvb_1^\top\rvx_i) \right).
	\label{s_btl_kkt}
	\end{align}
	Now, using \Cref{kkt_ncf}, $\rvu_*$ is a  KKT point of
	\begin{align}
	\max_{\rvu} \sum_{i=1}^ny_i\sigma^{L-1}(\rvx_i^\top\rvu), \text{ such that } \|\rvu\|_2^2=1/\sqrt{L},
	\label{s_small_const_ncf_rl}
	\end{align}
	if there exists $\lambda^*$ such that
	\begin{align}
	& \mathbf{0} \in \partial_{\rvu}\left(\sum_{i=1}^ny_i\sigma^{L-1}(\rvx_i^\top\rvu_*) \right) + \lambda^*\rvu_*, \text{ and } \|\rvu_*\|_2^2=1/\sqrt{L}.
	\label{s_kkt_cond_ind1_rl}
	\end{align}
	Hence, from \cref{s_btl_kkt}, and since $\|\rvb_1\|_2^2 = 1/\sqrt{L}$, we get that $\rvb_1$ satisfies the KKT conditions in \cref{s_kkt_cond_ind1_rl}.
	
	We now turn towards proving the ``if''  condition. 
	
	Note that $(\overline{\rmW}_{1} ,\cdots,\overline{\rmW}_{L}) = (\rva_1\rvb_1^\top, \rva_2\rva_1^\top,\rva_3\rva_2^\top,\cdots,\rva_{L-1}\rva_{L-2}^\top,q\rva_{L-1}^\top/L^{1/4})$. Since $\rvb_1$ is a non-zero KKT point of \cref{small_const_ncf_lrl_necc}, there exists a non-zero $\lambda^*$ such that
	\begin{align}
	\mathbf{0} \in \partial_\rvu\left(\sum_{i=1}^n y_i\sigma^{L-1}(\rvx_i^\top\rvb_1) \right) + \lambda^*\rvb_1.
	\label{kkt_cond_ind1_rl_if}
	\end{align}
	Multiplying the above inclusion by $\rvb_1^\top$ and using $\|\rvb_1\|_2^2 = 1/\sqrt{L}$, we get $\lambda^* = -\sqrt{L}\sum_{i=1}^n y_i\sigma^{L-1}(\rvx_i^\top\rvb_1).$  We will complete the proof by showing that for $\lambda = q\lambda^*\left({1}/{\sqrt{L}}\right)^{L-2}/L^{1/4} $,
	\begin{align}
	\mathbf{0}\in \lambda\overline{\rmW}_l + \partial_{\rmW_l} \left(\rvy^\top\gH(\rmX;\overline{\rmW}_{1} ,\cdots,\overline{\rmW}_{L} )\right), \forall l\in [L].
	\label{pf_kkt_cond_rl}
	\end{align} 
	Let $	\phi_i^{l} = \sigma(\overline{\rmW}_l\sigma(\overline{\rmW}_{l-1}\cdots \sigma(\overline{\rmW}_2\sigma(\overline{\rmW}_1\rvx_i)\cdots))),$ for all $i\in [n]$ and $l\in[L-1]$. Since $\{\rva_l\}_{l=1}^{L-1}$ are non-negative, using \cref{rl_1hm}, we have
	\begin{align*}
	\phi_i^{l} &= \sigma(\overline{\rmW}_l\sigma(\overline{\rmW}_{l-1}\cdots \sigma(\overline{\rmW}_2\sigma(\overline{\rmW}_1\rvx_i)\cdots)\\
	&= \sigma(\rva_{l}\rva_{l-1}^\top\sigma(\rva_{l-1}\rva_{l-2}^\top(\cdots\sigma(\rva_2\rva_1^\top\sigma(\rva_1\rvb_1^\top\rvx_i) )\cdots)\\
	& = \|\rva_{l-1}\|_2^2\cdots\|\rva_1\|_2^2\rva_{l}\sigma^{l}(\rvb_1^\top\rvx_i).
	\end{align*} 
	
	Now, using \Cref{gd_nn} and since $\|\rva_l\|_2^2 = 1/\sqrt{L}$, we have
	\begin{align*}
	& \lambda\overline{\rmW}_{L} + \partial_{\rmW_L} \left(\rvy^\top \gH(\rmX; \overline{\rmW}_1, \cdots,\overline{\rmW}_L)\right)\\
	&= \lambda q\rva_{L-1}^\top/L^{1/4} + \sum_{i=1}^n y_i\left(\phi_i^{L-1}\right)^\top\\
	&= \lambda q\rva_{L-1}^\top/L^{1/4} + \sum_{i=1}^n y_i\|\rva_{L-2}\|_2^2\cdots\|\rva_1\|_2^2\rva_{L-1}^\top\sigma^{L-1}(\rvb_1^\top\rvx_i)\\
	&= \left(q\lambda/L^{1/4}+ \left(\frac{1}{\sqrt{L}}\right)^{L-2}\sum_{i=1}^n y_i\sigma^{L-1}(\rvb_1^\top\rvx_i)\right)\rva_{L-1}^\top\\
	& =  \left(q\lambda/L^{1/4} -\left(\frac{1}{\sqrt{L}}\right)^{L-1}\lambda_*\right)\rva_{L-1}^\top = \mathbf{0}.
	\label{wl_kkt_suff}
	\end{align*}
	Next, for any $\rmW_l$, where $2\leq l\leq L-1$, we have
	\begin{align*}
	\gH(\rvx_i;\overline{\rmW}_{1} ,\cdots,\rmW_l,\cdots\overline{\rmW}_{L} ) &=\overline{\rmW}_{L}\sigma(\overline{\rmW}_{L-1}\cdots\sigma({\rmW}_{l}\sigma(\overline{\rmW}_{l-1}\cdots \sigma(\overline{\rmW}_2\sigma(\overline{\rmW}_1\rvx_i)\cdots)\\
	&= q\rva_{L-1}^\top\sigma(\rva_{L-1}\rva_{L-2}^\top\cdots\sigma({\rmW}_{l}\phi_i^{l-1})/L^{1/4}\\ &=q\left(\frac{1}{\sqrt{L}}\right)^{L-3}\sigma^{L-l-1}(\rva_{l}^\top\sigma(\rmW_l\rva_{l-1}\sigma^{l-1}(\rvb_1^\top\rvx_i)))/L^{1/4}.
	\end{align*}
	Let $\rvw_{lj}$ denote the $j$-th row of $\rmW_l$, where $j$ is arbitrary. Choose $\rmW_l$ such that it equal to $\overline{\rmW}_l$ except for the $j$-th row, and the $j$-th row is equal to $\rvw_{lj}$. Then
	\begin{align*}
	g_1(\rvw_{lj}) &=: 	\rvy^\top\gH(\rmX;\overline{\rmW}_{1} ,\cdots,\rmW_l,\cdots\overline{\rmW}_{L} ) \\
	&= q\left(\frac{1}{\sqrt{L}}\right)^{L-3}\frac{1}{L^{1/4}}\sum_{i=1}^ny_i\sigma^{L-l-1}\left(\rva_{lj}\sigma(\rvw_{lj}^\top\rva_{l-1}\sigma^{l-1}(\rvb_1^\top\rvx_i)) + \sum_{p=1,p\neq j}\rva_{lp}^2\|\rva_{l-1}\|_2^2\sigma^{l}(\rvb_1^\top\rvx_i)\right).
	\end{align*}
	Thus, to prove \cref{pf_kkt_cond_rl} for $2\leq l\leq L-1$, we need to prove
	\begin{align*}
	\mathbf{0} \in \lambda\overline{\rvw}_{lj}+ \partial_{\rvw_{lj}} \left(g_1(\overline{\rvw}_{lj}), \right) \text{ for all } j\in [k_{l}].
	\end{align*}
	Now, if $\rva_{lj} = 0$, then $\overline{\rvw}_{lj} = \mathbf{0}$ and $g_1(\rvw_{lj})$ is independent of $\rvw_{lj}$, which implies the above equation is satisfied trivially. Now, if $\rva_{lj} > 0$, then $\overline{\rvw}_{lj}^\top\rva_{l-1} = \rva_{lj}\|\rva_{l-1} \|_2^2 > 0$, which implies   $\rvw_{lj}^\top\rva_{l-1} > 0$ and hence,  $\sigma(\rvw_{lj}^\top\rva_{l-1}\sigma^{l-1}(\rvb_1^\top\rvx_i)) = \rvw_{lj}^\top\rva_{l-1}\sigma^{l}(\rvb_1^\top\rvx_i),$ for all $\rvw_{lj}$ in a sufficiently small neighborhood of $\overline{\rvw}_{lj} $. Therefore,
	\begin{align*}
	g_1(\rvw_{lj}) &= q\left(\frac{1}{\sqrt{L}}\right)^{L-3}\frac{1}{L^{1/4}}\sum_{i=1}^ny_i\sigma^{L-l-1}\left(\rva_{lj}\rvw_{lj}^\top\rva_{l-1}\sigma^{l}(\rvb_1^\top\rvx_i)+ \sum_{p=1,p\neq j}\rva_{lp}^2\|\rva_{l-1}\|_2^2\sigma^{l}(\rvb_1^\top\rvx_i)\right)\\
	&=q\left(\frac{1}{\sqrt{L}}\right)^{L-3}\frac{1}{L^{1/4}}\sum_{i=1}^ny_i\sigma^{L-1}(\rvb_1^\top\rvx_i)\left(\rva_{lj}\rvw_{lj}^\top\rva_{l-1}+ \sum_{p=1,p\neq j}\rva_{lp}^2\|\rva_{l-1}\|_2^2\right),
	\end{align*}
	for all $\rvw_{lj}$ in a sufficiently small neighborhood of $\overline{\rvw}_{lj} $. The above equation implies
	\begin{align*}
	\lambda\overline{\rvw}_{lj}+ \partial_{\rvw_{lj}} g_1(\overline{\rvw}_{lj}) &= \lambda\rva_{lj}\rva_{l-1} + q\left(\frac{1}{\sqrt{L}}\right)^{L-3}\frac{1}{L^{1/4}}\sum_{i=1}^ny_i\sigma^{L-1}(\rvb_1^\top\rvx_i)\rva_{lj}\rva_{l-1}\\
	&= \lambda\rva_{lj}\rva_{l-1} - q\lambda^*\left(\frac{1}{\sqrt{L}}\right)^{L-2}\frac{1}{L^{1/4}}\rva_{lj}\rva_{l-1} = \mathbf{0}.
	\end{align*}
	Now, for any $\rmW_1$, we have
	\begin{align*}
	\gH(\rvx_i;{\rmW}_{1}, \overline{\rmW}_{2},\cdots,\cdots\overline{\rmW}_{L} ) &=\overline{\rmW}_{L}\sigma(\overline{\rmW}_{L-1}\cdots \sigma(\overline{\rmW}_2\sigma({\rmW}_1\rvx_i)\cdots)\\
	&= q\rva_{L-1}^\top\sigma(\rva_{L-1}\rva_{L-2}\sigma(\cdots\rva_{2}\rva_{1}^\top\sigma({\rmW}_{1}\rvx_i)/L^{1/4}\\ &=q\left(\frac{1}{\sqrt{L}}\right)^{L-2}\sigma^{L-2}(\rva_{1}^\top\sigma(\rmW_1\rvx_i))/{L^{1/4}}.
	\end{align*}  
	Let $\rvw_{1j}$ denote the $j$-th row of $\rmW_1$, where $j$ is arbitrary. Choose $\rmW_1$ such that it equal to $\overline{\rmW}_1$ except for the $j$-th row, and the $j$-th row is equal to $\rvw_{1j}$. Then
	\begin{align*}
	g_2(\rvw_{1j}) &=: 	\rvy^\top\gH(\rmX;{\rmW}_{1}, \overline{\rmW}_{2} ,\cdots,\cdots\overline{\rmW}_{L} ) \\
	&= q\left(\frac{1}{\sqrt{L}}\right)^{L-2}\frac{1}{L^{1/4}}\sum_{i=1}^ny_i\sigma^{L-2}\left(\rva_{1j}\sigma(\rvw_{1j}^\top\rvx_i) + \sum_{p=1,p\neq j}\rva_{1p}^2\sigma(\rvb_1^\top\rvx_i)\right)
	\end{align*}
	Thus, to prove \cref{pf_kkt_cond_rl} for $l=1$, we need to prove
	\begin{align*}
	\mathbf{0} \in \lambda\overline{\rvw}_{1j}+ \partial_{\rvw_{1j}} \left(g_2(\overline{\rvw}_{1j}) \right), \text{ for all } j\in [k_1]. 
	\end{align*}
	Now, if $\rva_{1j} = 0$, then $\overline{\rvw}_{1j} = \mathbf{0}$ and $g_2(\rvw_{1j})$ is independent of $\rvw_{1j}$, which implies the above equation is satisfied trivially. If $\rva_{1j} > 0$, then, using the same approach to get \cref{part_der_gw}, we get
	\begin{align}
	\partial_{\rvw_{1j}} g_2(\overline{\rvw}_{1j}) = \partial_\rvw g_2(\rva_{1j}\rvb_j) = q\rva_{1j}\left(\frac{1}{\sqrt{L}}\right)^{L-2}\frac{1}{L^{1/4}}\partial_\rvw\left( \sum_{i=1}^ny_i\sigma^{L-1}(\rvb_1^\top\rvx_i) \right).
	\end{align}
	Hence, using the above equation and \cref{kkt_cond_ind1_rl_if}, we have
	\begin{align*}
	\lambda\overline{\rvw}_{1j} +  \partial_{\rvw_{1j}} \left( \rvy^\top \gH(\rmX; \overline{\rmW}_1, \cdots,\overline{\rmW}_L)\right)  &= \lambda\rva_{1j}\rvb_1+ 	\partial_{\rvw_{1j}} g_2(\overline{\rvw}_{1j}) \\ 
	&= \rva_{1j}\left(\lambda\rvb_1+ q\left(\frac{1}{\sqrt{L}}\right)^{L-2}\frac{1}{L^{1/4}}\partial_{\rvw}\left( \sum_{i=1}^ny_i\sigma^{L-1}(\rvb_1^\top\rvx_i) \right)\right) \ni \mathbf{0},
	\end{align*}
	which completes the proof.
	\subsection{Proof of \Cref{L_layer_high_ncf}}\label{proof_prl}
	From \Cref{balance_ncf} we know 
	$$p^{L-l}/\hat{p} = \|\rva_l\|_2^4, \forall l\in[L-1], \text{ and } \|\overline{\rvw}\|_2^2 = 1/\hat{p}$$
	where $\hat{p} =p^{L-1}+p^{L-2}+\cdots+1$.
	
	\textbf{Case 1 }$(\alpha=1):$  We begin by proving the ``only if'' part. Since $\sigma(\cdot)$ is $p-$homogeneous, 
	\begin{align}
	\sigma(c\rvp) = c^p\sigma(\rvp), \sigma'(\rvp) = p\rvp^{p-1} \text{ and } \sigma(c\rvp) = \rvp^p\sigma(c),
	\label{pl_1hm}
	\end{align}
	where $c\in \sR$, $\rvp$ is a vector. Note that, in this case, $\sigma(\cdot)$ is positively and negatively homogeneous. Also, we use $(\sigma^q)'(\cdot)$ to denote the derivative of $\sigma^q(\cdot)$, where $q\in \sN.$   
	
	Now, from \Cref{balance_ncf}, for all $l\in[L-2]$,
	\begin{align}
	|\rva_l| = {p^{1/4}}|\rvb_{l+1}|, \text{ and } |\rva_{L-1}| = (p\hat{p})^{1/4}|\overline{\rvw}|
	\label{1bal_nm_lhrl}
	\end{align}
	We next show that, for $l\in [L-1]$, each non-zero entry of $\rva_l$ has identical absolute values. Let $\phi_i^{l} = \sigma(\overline{\rmW}_{l}\cdots\sigma(\overline{\rmW}_1\rvx_i)\cdots)$, for $l\in [L-1]$, then, using \cref{pl_1hm},
	\begin{align*}
	\phi_i^{l} = \sigma(\overline{\rmW}_{l}\cdots\sigma(\overline{\rmW}_1\rvx_i)\cdots) &= \sigma(\rva_{l}\rvb_l^\top\sigma(\rva_{l-1}\rvb_{l-1}^\top\cdots \sigma(\rva_2\rvb_2^\top\sigma(\rva_1\rvb_1^\top\rvx_i))\cdots)\\
	&= \rva_{l}^p\sigma(\rvb_l^\top\sigma(\rva_{l-1}\rvb_{l-1}^\top\cdots \sigma(\rva_2\rvb_2^\top\sigma(\rva_1\rvb_1^\top\rvx_i))\cdots) = \rva_l^pc_i^l,
	\end{align*}
	where $c_i^l:=\sigma(\rvb_l^\top\sigma(\rva_{l-1}\rvb_{l-1}^\top\cdots \sigma(\rva_2\rvb_2^\top\sigma(\rva_1\rvb_1^\top\rvx_i))\cdots)$ is a scalar. Since $(\overline{\rmW}_{1},\cdots,\overline{\rmW}_{L})$ is a non-zero KKT point of \cref{ncf_gn_const}, from \Cref{gd_nn} we have, for $2\leq l\leq L-1$,
	\begin{align}
	\mathbf{0} = \lambda\overline{\rmW}_{l} + \nabla_{\rmW_l} \left(\rvy^\top \gH(\rmX; \overline{\rmW}_1, \cdots,\overline{\rmW}_L)\right)  &=  \lambda\rva_l\rvb_l^\top+ \sum_{i=1}^n\rve_i^l\left(\phi_i^{l-1}\right)^\top\nonumber\\
	& =  \lambda\rva_l\rvb_l^\top+ \sum_{i=1}^n\rve_i^lc_i^{l-1}\left(\rva_{l-1}^p\right)^\top.
	\label{wl_kkt_p_l_h}
	\end{align}
	Multiplying the above equation by $\rva_l^\top$ from the left gives us
	\begin{align}
	0 = \lambda\|\rva_l\|_2^2\rvb_l^\top + \left(\sum_{i=1}^n\rva_l^\top\rve_i^lc_i^{l-1}\right)\left(\rva_{l-1}^p\right)^\top.
	\label{gm_bt_eq}
	\end{align}
	The above equation implies that $\rvb_l$ is parallel to $\rva_{l-1}^p$. Now, if $p$ is even, then $\rva_{l-1}^p$ is non-negative and thus, the non-zero entries of $\rvb_l$ must have same sign. Therefore, from \cref{1bal_nm_lhrl}, we get 
	\begin{align*}
	\rvb_l = q_l|\rva_{l-1}|/p^{1/4}, \text{ for some } q_l\in\{-1,1\} \text{ and for all } 2\leq l\leq L-1.
	\end{align*}
	Else, if $p$ is odd, then, $\rva_{l-1}^p$ has the same sign as $\rva_{l-1}$, and thus, from \cref{1bal_nm_lhrl}, we get 
	\begin{align*}
	\rvb_l = q_l\rva_{l-1}/p^{1/4},\text{ for some } q_l\in\{-1,1\} \text{ and for all } 2\leq l\leq L-1.
	\end{align*}
	The above equations and \cref{1bal_nm_lhrl}, implies that $\rva_{l-1}^2$ is parallel to $\rva_{l-1}^{2p}$. Now, since $p\geq 2$, therefore, for $l\in [L-2]$, the non-zero entries of $\rva_l$ have identical absolute values. 
	
	Choose $l=L$ in \cref{wl_kkt_p_l_h}, and since $\rve_i^L = y_i$, we get
	\begin{align}
	\mathbf{0} = \lambda\overline{\rvw}^\top +  \sum_{i=1}^ny_ic_i^{L-1}\left(\rva_{L-1}^p\right)^\top.
	\label{grad_Lp}
	\end{align}
	Hence, $\overline{\rvw}$ is parallel to $\rva_{L-1}^p$. If $p$ is even, then $\rva_{L-1}^p$ is non-negative and thus, the non-zero entries of $\overline{\rvw}$ must have same sign. Therefore, from \cref{1bal_nm_lhrl}, we get 
	\begin{align*}
	\overline{\rvw}= q_L|\rva_{L-1}|/ (p\hat{p})^{1/4}, \text{ for some } q_L\in\{-1,1\}.
	\end{align*}
	Else, if $p$ is odd, then, $\rva_{L-1}^p$ has the same sign as $\rva_{l-1}$, and thus, from \cref{1bal_nm_lhrl}, we get 
	\begin{align*}
	\overline{\rvw}= q_L\rva_{L-1}/ (p\hat{p})^{1/4},\text{ for some } q_L\in\{-1,1\}.
	\end{align*}
	The above equations and \cref{1bal_nm_lhrl}, implies that $\rva_{L-1}^2$ is parallel to $\rva_{L-1}^{2p}$. Since $p\geq 2$, the non-zero entries of $\rva_{L-1}$ have identical absolute values. 
	
	Next, from the KKT conditions, we have
	\begin{align}
	\mathbf{0} = \lambda\overline{\rmW}_{1} +  \nabla_{\rmW_1} \left(\rvy^\top \gH(\rmX; \overline{\rmW}_1, \cdots,\overline{\rmW}_L)\right).
	\label{w1_kkt_p_h_1}
	\end{align}
	We now aim to simplify $\nabla_{\rmW_1} \left(\rvy^\top \gH(\rmX; \overline{\rmW}_1, \cdots,\overline{\rmW}_L)\right).$ For any $\rmW_1$, we have
	\begin{align*}
	\gH(\rvx_i; {\rmW}_1,\overline{\rmW}_2, \cdots,\overline{\rmW}_L) &= \overline{\rmW}_{L}\sigma(\overline{\rmW}_{L-1}\cdots\sigma(\overline{\rmW}_2(\sigma({\rmW}_1\rvx_i)\cdots)\\
	&= \overline{\rvw}^\top\sigma(\rva_{L-1}\rvb_{L-1}^\top\cdots \sigma(\rva_2\rvb_2^\top\sigma(\rmW_1\rvx_i))\cdots)\\
	&= \overline{\rvw}^\top\rva_{L-1}\left(\Pi_{l=1}^{L-3}\sigma^l(\rvb_{L-l}^\top\rva_{L-l-1}^p)\right)\sigma^{L-2}(\rvb_2^\top\sigma(\rmW_1\rvx_i)).
	\end{align*}
	Now,
	\begin{align*}
	\nabla_{\rmW_1}\sigma^{L-2}(\rvb_2^\top\sigma(\rmW_1\rvx_i)) = (\sigma^{L-2})'(\rvb_2^\top\sigma(\rmW_1\rvx_i)) \text{diag}(\sigma'(\rmW_1\rvx_i))\rvb_2\rvx_i^\top,
	\end{align*}
	which implies, using \cref{pl_1hm},
	\begin{align*}
	\nabla_{\rmW_1}\sigma^{L-2}(\rvb_2^\top\sigma(\overline{\rmW}_1\rvx_i)) &= (\sigma^{L-2})'(\rvb_2^\top\rva_1^p\sigma(\rvb_1^\top\rvx_i)) \text{diag}(\sigma'(\rva_1\rvb_1^\top\rvx_i))\rvb_2\rvx_i^\top\\
	&=(\sigma^{L-2})'(\sigma(\rvb_1^\top\rvx_i))(\sigma^{L-2})'(\rvb_2^\top\rva_1^p) \sigma'(\rvb_1^\top\rvx_i)\text{diag}(\sigma'(\rva_1))\rvb_2\rvx_i^\top\\
	&= (\sigma^{L-1})'(\rvb_1^\top\rvx_i)(\sigma^{L-2})'(\rvb_2^\top\rva_1^p)\text{diag}(\sigma'(\rva_1)) \rvb_2\rvx_i^\top,
	\end{align*}
	where the last equality follows by using chain rule on $(\sigma^{L-1})'(\cdot)$. Let $\beta_1 = \overline{\rvw}^\top\rva_{L-1}\left(\Pi_{l=1}^{L-3}\sigma^l(\rvb_{L-l}^\top\rva_{L-l-1}^p)\right)$ and $\beta_2 = (\sigma^{L-2})'(\rvb_2^\top\rva_1^p)$, then
	\begin{align}
	\nabla_{\rmW_1} \left(\rvy^\top \gH(\rmX; \overline{\rmW}_1, \cdots,\overline{\rmW}_L)\right) = \beta_1\beta_2\sum_{i=1}^n y_i(\sigma^{L-1})'(\rvb_1^\top\rvx_i)\text{diag}(\sigma'(\rva_1))\rvb_2\rvx_i^\top.
	\label{grad_1}
	\end{align}
	Multiplying \cref{w1_kkt_p_h_1} by $\rva_1^\top$ from the left and using the above equation, we get
	\begin{align}
	\mathbf{0} &= \lambda\|\rva_1\|_2^2\rvb_1^\top + \beta_1\beta_2\sum_{i=1}^ny_i(\sigma^{L-1})'(\rvb_1^\top\rvx_i)\rva_1^\top\text{diag}(\sigma'(\rva_1))\rvb_2\rvx_i^\top\nonumber\\
	&= \lambda\|\rva_1\|_2^2\rvb_1^\top + p\beta_1\beta_2\rvb_2^\top\rva_1^p\sum_{i=1}^n y_i(\sigma^{L-1})'(\rvb_1^\top\rvx_i)\rvx_i^\top.
	\label{kkt1_p_1_Lhm}
	\end{align}
	Now, $\rvu_*$ is a  KKT point of
	\begin{align}
	\max_{\rvu} \sum_{i=1}^ny_i\sigma^{L-1}(\rvx_i^\top\rvu), \text{ such that } \|\rvu\|_2^2=\sqrt{p^{L-1}/\hat{p}},
	\label{small_const_ncf_p_h_lrl}
	\end{align}
	if $\|\rvu_*\|_2^2 =\sqrt{p^{L-1}/\hat{p}}$, and there exists $\lambda^*$ such that
	\begin{align}
	&\mathbf{0} = \sum_{i=1}^ny_i(\sigma^{L-1})'(\rvx_i^\top\rvu_*) \rvx_i + \lambda^*\rvu_*.
	\label{kkt_cond_ind1_p_h_lrl}
	\end{align}
	From \cref{kkt1_p_1_Lhm} and since $\|\rvb_1\|_2^2 =\sqrt{p^{L-1}/\hat{p}}$, we observe that $\rvb_1$ is a non-zero KKT point of \cref{small_const_ncf_p_h_lrl}.
	
	We now turn towards proving the ``if'' part. 
	
	Note that  $(\overline{\rmW}_{1} ,\cdots,\overline{\rmW}_{L-1} ,\overline{\rmW}_{L}) =  (\rva_1\rvb_1^\top,\cdots,\rva_{L-1}\rvb_{L-1}^\top,\overline{\rvw}^\top)$. Since $\rvb_1$ is a non-zero KKT point of
	\begin{align}
	\max_{\rvu} \sum_{i=1}^ny_i\sigma^{L-1}(\rvx_i^\top\rvu), \text{ such that } \|\rvu\|_2^2=\sqrt{p^{L-1}/\hat{p}},
	\label{small_const_ncf_lin_p_pf}
	\end{align} 
	then there exists $\lambda^*\neq 0$ such that
	\begin{align}
	\mathbf{0} = \sum_{i=1}^ny_i(\sigma^{L-1})'(\rvx_i^\top\rvb_1) \rvx_i + \lambda^*\rvb_1.
	\label{kkt_cond_ind1_p_h_lrl_suff}
	\end{align}
	
	Our aim is to show that for some non-zero $\lambda$,
	\begin{align}
	\lambda\overline{\rmW}_l + \nabla_{\rmW_l} \left(\sum_{i=1}^n\rvy_i^\top\gH(\rvx_i;\overline{\rmW}_{1} ,\cdots,\overline{\rmW}_{L} )\right) = \mathbf{0}, \forall l\in [L].
	\label{all_kkt}
	\end{align} 
	Let $\hat{\lambda} =  \rvy^\top\gH(\rmX;\overline{\rmW}_{1} ,\cdots,\overline{\rmW}_{L} ),$  then, since $\sum_{i=1}^ny_i\sigma^{L-1}(\rvb_1^\top\rvx_i)\neq 0$, we get
	\begin{align*}
	\rvy^\top\gH(\rmX; \overline{\rmW}_1,\overline{\rmW}_2, \cdots,\overline{\rmW}_L) 
	&= \sum_{i=1}^ny_i\overline{\rvw}^\top\sigma(\rva_{L-1}\rvb_{L-1}^\top\cdots \sigma(\rva_1\rvb_1^\top\rvx_i)\cdots)\\
	&= \overline{\rvw}^\top\rva_{L-1}^p\left(\Pi_{l=1}^{L-2}\sigma^l(\rvb_{L-l}^\top\rva_{L-l-1}^p)\right)\sum_{i=1}^ny_i\sigma^{L-1}(\rvb_1^\top\rvx_i)\neq 0.
	\end{align*}
	Define $\rmZ_l = \nabla_{\rmW_l} \left(\rvy^\top\gH(\rmX;\overline{\rmW}_{1} ,\cdots,\overline{\rmW}_{L} )\right).$ Since $\gH(\rvx;{\rmW}_{1} ,\cdots,{\rmW}_{L} )$ is $p^{L-l}$-homogeneous with respect to $\rmW_l$, from \Cref{euler_gen} we have
	\begin{align}
	\text{trace}(\overline{\rmW}_l^\top\rmZ_l) &= p^{L-l}\rvy^\top\gH(\rmX;\overline{\rmW}_{1} ,\cdots,\overline{\rmW}_{L} ) = p^{L-l}\hat{\lambda}.
	\label{euler_kkt}
	\end{align}
	Now, $\hat{\lambda} \neq 0$ implies $\|\rmZ_l\|_F\neq 0$, $\forall l\in [L]$. Suppose we are able to show 
	\begin{align*}
	\rmZ_l/\|\rmZ_l\|_F = \overline{\rmW}_l/\|\overline{\rmW}_l\|_F, \forall l\in [L].
	\end{align*}
	Then we can write $\rmZ_l = \alpha_l\overline{\rmW}_l$, for some $\alpha_l$. Since $\|\overline{\rmW}_l\|_F^2   = p^{L-l}/\hat{p}$, from \cref{euler_kkt}, we get 
	\begin{align}
	p^{L-l}\hat{\lambda} = \text{trace}(\overline{\rmW}_l^\top\rmZ_l)  = \alpha_l\|\overline{\rmW}_l\|_F^2 = \alpha_lp^{L-l}/\hat{p},
	\end{align}
	which implies $\alpha_l = \hat{p}\hat{\lambda}$. Thus, \cref{all_kkt} holds for $\lambda = -\hat{p}\hat{\lambda}$. We next complete the proof by showing $\rmZ_l/\|\rmZ_l\|_F = \overline{\rmW}_l/\|\overline{\rmW}_l\|_F$, for all $l\in [L]$.
	
	For any two matrices (or vectors) $\rmA,\rmB$,  we say $\rmA\propto \rmB$ if $\rmA= s \rmB$, for some non-zero scalar $s$. We aim to show $\rmZ_l\propto \overline{\rmW}_l ,$ which will imply $\rmZ_l/\|\rmZ_l\|_F = \overline{\rmW}_l/\|\overline{\rmW}_l\|_F$, for all $l\in [L]$.
	
	From \cref{grad_1}, \cref{kkt_cond_ind1_p_h_lrl_suff}, and using \cref{pl_1hm}, we know
	\begin{align}
	\rmZ_1&\propto \sum_{i=1}^ny_i(\sigma^{L-1})'(\rvb_1^\top\rvx_i)\text{diag}(\sigma'(\rva_1))\rvb_2\rvx_i^\top \propto \text{diag}(\sigma'(\rva_1))\rvb_2\rvb_1^\top \propto \text{diag}(\rva_1^{p-1})\rvb_2\rvb_1^\top.
	\end{align}
	If $p$ is odd, then $p-1$ is even. In this case, since $\rvb_{2} = q_l\rva_1/p^{1/4}$ and the entries of $\rva_{1}$ have identical absolute values,   $\text{diag}(\rva_{1}^{p-1})\rvb_{2} \propto \rvb_{2}\propto \rva_1$. Hence,  $\rmZ_1\propto \rva_1\rvb_1^\top \propto \overline{\rmW}_1 .$ Now, if $p$ is even, then $p-1$ is odd. Since $\rvb_2 = q_1|\rva_1|/p^{1/4}$, we have $\text{diag}(\rva_{1}^{p-1}) \rvb_{2} \propto \rva_1$, which implies   $\rmZ_1\propto \rva_1\rvb_1^\top \propto \overline{\rmW}_1 .$ 
	
	Next, as shown in \cref{grad_Lp}, we have
	\begin{align}
	\rmZ_L  &\propto (\rva_{L-1}^p)^\top.
	\end{align} 
	If $p$ is odd, then $p-1$ is even. Since the entries of $\rva_{L-1}$ have identical absolute values,   the entries of $(\rva_{L-1}^{p-1})$ are identical and non-negative. Thus, $\rva_{L-1}^{p} = \text{diag}(\rva_{L-1}^{p-1}) \rva_{L-1} \propto \rva_{L-1}.$  Now, since $\overline{\rvw} = q_{L}\rva_{L-1}/(p\hat{p})^{1/4}$, we have $\rva_{L-1}^{p}\propto \overline{\rvw} $, which implies  $\rmZ_L \propto \overline{\rmW}_L .$ Next, if $p$ is even, then $\rva_{L-1}^p = |\rva_{L-1}|^p$. Since the entries of $\rva_{L-1}$ have identical absolute values, $|\rva_{L-1}|^{p}\propto |\rva_{L-1}|.$ Also, since $\overline{\rvw} = q_{l-1}|\rva_{L-1}|/(p\hat{p})^{1/4}$, we have $\rva_{L-1}^{p}\propto \overline{\rvw} $, which implies   $\rmZ_L\propto\overline{\rmW}_L.$ 
	
	We next consider $\rmZ_l$, for $2\leq l\leq L-1$, which requires some additional results. For any $\rmW_l$, using \cref{pl_1hm},
	\begin{align*}
	\gH(\rvx_i;\overline{\rmW}_{1} ,\cdots,{\rmW}_{l},\cdots\overline{\rmW}_{L} ) &= \overline{\rmW}_{L}\sigma(\overline{\rmW}_{L-1}\cdots\sigma({\rmW}_{l}\sigma(\overline{\rmW}_{l-1}\cdots\sigma(\overline{\rmW}_1\rvx_i)\cdots)))\\
	&= \overline{\rvw}^\top\sigma(\rva_{L-1}\rvb_{L-1}^\top\cdots\sigma({\rmW}_{l}\phi_i^{l-1}))\\
	&= \overline{\rvw}^\top\rva_{L-1}^p\sigma(\rvb_{L-1}^\top\rva_{L-2}^p)\cdots\sigma^{L-l -2}(\rvb_{l+2}^\top\rva_{l+1}^p)\sigma^{L-l -1}(\rvb_{l+1}^\top\sigma({\rmW}_{l}\phi_i^{l-1})),
	\end{align*}
	where
	\begin{align*}
	\phi_i^{l-1} = \sigma(\overline{\rmW}_{l-1}\cdots\sigma(\overline{\rmW}_1\rvx_i)\cdots) &= \sigma(\rva_{l-1}\rvb_{l-1}^\top\sigma(\rva_{l-2}\rvb_{l-2}^\top\cdots \sigma(\rva_2\rvb_2^\top\sigma(\rva_1\rvb_1^\top\rvx_i))\cdots))\\
	&= \rva_{l-1}^p\sigma(\rvb_{l-1}^\top\sigma(\rva_{l-2}\rvb_{l-2}^\top\cdots \sigma(\rva_2\rvb_2^\top\sigma(\rva_1\rvb_1^\top\rvx_i))\cdots)) \\
	&=  \rva_{l-1}^p\sigma^{l-1}(\rvb_1^\top\rvx_i)\sigma^{l-2}(\rvb_2^\top\rva_1^p)\cdots \sigma(\rvb_{l-1}^\top\rva_{l-2}^p).
	\end{align*}
	Hence,
	\begin{align*}
	\rmZ_l &\propto \text{diag}(\sigma'(\overline{\rmW}_{l}\rva_{l-1}^p))\rvb_{l+1}(\rva_{l-1}^p)^\top \propto 	\text{diag}(\rva_l^{p-1})\rvb_{l+1}(\rva_{l-1}^p)^\top.
	\end{align*}
	Now, if $p$ is odd, then $p-1$ is even. Since the entries of $\rva_{l-1}$ have identical absolute values,   $(\rva_{l-1}^{p-1})$ have identical entries, and thus, $\rva_{l-1}^{p} = \text{diag}(\rva_{l-1}^{p-1}) \rva_{l-1}\propto \rva_{l-1}.$ Also, since $\rvb_{l} = q_{l-1}\rva_{l-1}/p^{1/4}$, we have $\rva_{l-1}^{p}\propto \rvb_{l} $.  Furthermore,  since $\rvb_{l+1} = q_l\rva_l/p^{1/4}$ and the entries of $\rva_{l}$ have identical absolute values,   $\text{diag}(\rva_{l}^{p-1})\rvb_{l+1} \propto\rvb_{l+1} \propto\rva_l$. Hence $\rmZ_l\propto \rva_l\rvb_l^\top \propto \overline{\rmW}_l .$ Next, if $p$ is even, then $\rva_{l-1}^p = |\rva_{l-1}|^p$. Since the entries of $\rva_{l-1}$ have identical absolute values,   $|\rva_{l-1}|^{p-1}$ have identical entries, and thus, $|\rva_{l-1}|^{p}\propto |\rva_{l-1}|.$ Also, since $\rvb_{l} = q_{l-1}|\rva_{l-1}|/p^{1/4}$, we have $\rva_{l-1}^{p}\propto \rvb_{l} $.  Furthermore, since $\rvb_{l+1} = q_l|\rva_l|/p^{1/4}$, we have $\text{diag}(\rva_{l}^{p-1}) \rvb_{l+1} \propto \rva_l$, which implies   $\rmZ_l\propto \rva_l\rvb_l^\top \propto \overline{\rmW}_l .$ \\

	\textbf{Case 2 }$(\alpha\neq1):$ We begin by proving the ``only if'' part. Since $\sigma(\cdot)$ is $p-$positively homogeneous, therefore,
	\begin{align}
	\sigma(c\rvp) = c^p\sigma(\rvp), \sigma'(\rvq) = p\rvq^{p-1} \text{ and } \sigma(d\rvq) = \rvq^p\sigma(d),
	\label{prl_1hm}
	\end{align}
	where $c>0$, $\rvp$ is a vector, and $d\in \sR$, $\rvq$ is a vector with non-negative entries. Also, we use $(\sigma^q)'(\cdot)$ to denote the derivative of $\sigma^q(\cdot)$, where $q\in \sN.$     
	
	From \Cref{balance_ncf}, for all $l\in[L-2]$,
	\begin{align}
	|\rva_l| = {p^{1/4}}|\rvb_{l+1}|, \text{ and } |\rva_{L-1}| = {(p\hat{p})^{1/4}}|\overline{\rvw}|.
	\label{bal_nm}
	\end{align}
	Now, since the entries of $\{\rva_l\}_{l=1}^{L-1},\{\rvb_l\}_{l=2}^{L-1}$ are non-negative, from \cref{bal_nm}, we have $\rvb_{l} = \rva_{l-1}/p^{1/4}$, for $2\leq l \leq L-1$,  and $|\overline{\rvw}| = \rva_{L-1}/(p\hat{p})^{1/4}$. 
	
	We next show that,  for $l\in[ L-1]$, each non-zero entry of $\rva_l$ has identical values. Let $\phi_i^{l} = \sigma(\overline{\rmW}_{l}\cdots\sigma(\overline{\rmW}_1\rvx_i)\cdots)$, for $l\in[ L-1]$, for $1\leq l\leq L-1$, then, using \cref{prl_1hm},
	\begin{align*}
	\phi_i^{l} = \sigma(\overline{\rmW}_{l}\cdots\sigma(\overline{\rmW}_1\rvx_i)\cdots) &= \sigma(\rva_{l}\rvb_l^\top\sigma(\rva_{l-1}\rvb_{l-1}^\top\cdots \sigma(\rva_2\rvb_2^\top\sigma(\rva_1\rvb_1^\top\rvx_i))\cdots)\\
	&= \rva_{l}^p\sigma(\rvb_l^\top\sigma(\rva_{l-1}\rvb_{l-1}^\top\cdots \sigma(\rva_2\rvb_2^\top\sigma(\rva_1\rvb_1^\top\rvx_i))\cdots) = \rva_l^pc_i^l,
	\end{align*}
	where $c_i^l:=\sigma(\rvb_l^\top\sigma(\rva_{l-1}\rvb_{l-1}^\top\cdots \sigma(\rva_2\rvb_2^\top\sigma(\rva_1\rvb_1^\top\rvx_i))\cdots)$ is a scalar.   Since $(\overline{\rmW}_{1},\cdots,\overline{\rmW}_{L})$ is a non-zero KKT point of \cref{ncf_gn_const}, from \Cref{gd_nn} we have, for $2\leq l\leq L-1$,
	\begin{align}
	\mathbf{0} = \lambda\overline{\rmW}_{l} + \nabla_{\rmW_l} \left(\rvy^\top \gH(\rmX; \overline{\rmW}_1, \cdots,\overline{\rmW}_L)\right)  &=  \lambda\rva_l\rvb_l^\top+ \sum_{i=1}^n\rve_i^l\left(\phi_i^{l-1}\right)^\top\nonumber\\
	& =  \lambda\rva_l\rvb_l^\top+ \sum_{i=1}^n\rve_i^lc_i^{l-1}\left(\rva_{l-1}^p\right)^\top.
	\label{wl_kkt_p_rl_h}
	\end{align}
	Multiplying the above equation by $\rva_l^\top$ from the left gives us
	\begin{align}
	0 = \lambda\|\rva_l\|_2^2\rvb_l^\top + \left(\sum_{i=1}^n\rva_l^\top\rve_i^lc_i^{l-1}\right)\left(\rva_{l-1}^p\right)^\top.
	\label{gm_bt_eq_rl}
	\end{align}
	The above equation implies that $\rvb_l$ is parallel to $\rva_{l-1}^p$. Hence, from \cref{bal_nm}, $\rva_{l-1}$ is parallel to $\rva_{l-1}^{p}$. Now, since $p\geq 2$, and $\rva_{l}$ has non-negative entries, we get that, for $1\leq l\leq L-2$, the non-zero entries of $\rva_l$ are identical. 
	
	From the KKT condition for $l=L$, we have
	\begin{align}
	\mathbf{0} = \lambda\overline{\rvw}^\top +  \sum_{i=1}^n\rvy_ic_i^{L-1}\left(\rva_{L-1}^p\right)^\top.
	\label{gradL_p_rl}
	\end{align}
	Hence, $\overline{\rvw}$ is parallel to $\rva_{L - 1}^p$. Since $\rva_{L - 1}$ has non-negative entries, we get that the entries of $\overline{\rvw}$ have the same sign. Therefore, from $|\overline{\rvw}| = \rva_{L-1}/(p\hat{p})^{1/4}$, we have $\overline{\rvw} = q\rva_{L-1}/(p\hat{p})^{1/4}$, for some $q\in\{-1,1\}$. Also, $\rva_{L-1}$ is parallel to $\rva_{L-1}^{p}$. Since $p\geq 2$, and $\rva_{L-1}$ has non-negative entries, the non-zero entries of $\rva_{L-1}$ are identical. 
	
	Next, from the KKT conditions, we have
	\begin{align}
	\mathbf{0} = \lambda\overline{\rmW}_{1} +  \nabla_{\rmW_1} \left(\rvy^\top \gH(\rmX; \overline{\rmW}_1, \cdots,\overline{\rmW}_L)\right).
	\label{w1_kkt_p_h_r1}
	\end{align}
	We now aim to simplify $\nabla_{\rmW_1} \left(\rvy^\top \gH(\rmX; \overline{\rmW}_1, \cdots,\overline{\rmW}_L)\right).$ For any $\rmW_1$, we have
	\begin{align*}
	\gH(\rvx_i; {\rmW}_1,\overline{\rmW}_2, \cdots,\overline{\rmW}_L) &= \overline{\rmW}_{L}\sigma(\overline{\rmW}_{L-1}\cdots\sigma(\overline{\rmW}_2(\sigma({\rmW}_1\rvx_i)\cdots)\\
	&= \overline{\rvw}^\top\sigma(\rva_{L-1}\rvb_{L-1}^\top\cdots \sigma(\rva_2\rvb_2^\top\sigma(\rmW_1\rvx_i))\cdots)\\
	&= \overline{\rvw}^\top\rva_{L-1}\left(\Pi_{l=1}^{L-3}\sigma^l(\rvb_{L-l}^\top\rva_{L-l-1}^p)\right)\sigma^{L-2}(\rvb_2^\top\sigma(\rmW_1\rvx_i)).
	\end{align*}
	Now,
	\begin{align*}
	\nabla_{\rmW_1}\sigma^{L-2}(\rvb_2^\top\sigma(\rmW_1\rvx_i)) = (\sigma^{L-2})'(\rvb_2^\top\sigma(\rmW_1\rvx_i)) \text{diag}(\sigma'(\rmW_1\rvx_i))\rvb_2\rvx_i^\top,
	\end{align*}
	which implies, using \cref{prl_1hm},
	\begin{align*}
	\nabla_{\rmW_1}\sigma^{L-2}(\rvb_2^\top\sigma(\overline{\rmW}_1\rvx_i)) &= (\sigma^{L-2})'(\rvb_2^\top\rva_1^p\sigma(\rvb_1^\top\rvx_i)) \text{diag}(\sigma'(\rva_1\rvb_1^\top\rvx_i))\rvb_2\rvx_i^\top\\
	&=(\sigma^{L-2})'(\sigma(\rvb_1^\top\rvx_i))(\sigma^{L-2})'(\rvb_2^\top\rva_1^p) \sigma'(\rvb_1^\top\rvx_i)\text{diag}(\sigma'(\rva_1))\rvb_2\rvx_i^\top\\
	&= (\sigma^{L-1})'(\rvb_1^\top\rvx_i)(\sigma^{L-2})'(\rvb_2^\top\rva_1^p)\text{diag}(\sigma'(\rva_1)) \rvb_2\rvx_i^\top,
	\end{align*}
	where the last equality follows by using chain rule on $(\sigma^{L-1})'(\cdot)$. Let $\beta_1 = \overline{\rvw}^\top\rva_{L-1}\left(\Pi_{l=1}^{L-3}\sigma^l(\rvb_{L-l}^\top\rva_{L-l-1}^p)\right)$ and $\beta_2 = (\sigma^{L-2})'(\rvb_2^\top\rva_1^p)$, then
	\begin{align}
	\nabla_{\rmW_1} \left(\rvy^\top \gH(\rmX; \overline{\rmW}_1, \cdots,\overline{\rmW}_L)\right) = \beta_1\beta_2\sum_{i=1}^ny_i (\sigma^{L-1})'(\rvb_1^\top\rvx_i)\text{diag}(\sigma'(\rva_1))\rvb_2\rvx_i^\top.
	\label{grad_1_rl}
	\end{align}
	Multiplying \cref{w1_kkt_p_h_r1} by $\rva_1^\top$ from the left and using the above equation, we get
	\begin{align}
	\mathbf{0} &= \lambda\|\rva_1\|_2^2\rvb_1^\top + \beta_1\beta_2\sum_{i=1}^ny_i(\sigma^{L-1})'(\rvb_1^\top\rvx_i)\rva_1^\top\text{diag}(\sigma'(\rva_1))\rvb_2\rvx_i^\top\nonumber\\
	&= \lambda\|\rva_1\|_2^2\rvb_1^\top + p\beta_1\beta_2\rvb_2^\top\rva_1^p\sum_{i=1}^ny_i(\sigma^{L-1})'(\rvb_1^\top\rvx_i)\rvx_i^\top.
	\label{kkt1_p_r1_Lhm}
	\end{align}
	Now, $\rvu_*$ is a non-zero KKT point of
	\begin{align}
	\max_{\rvu} \sum_{i=1}^ny_i\sigma^{L-1}(\rvx_i^\top\rvu), \text{ such that } \|\rvu\|_2^2=\sqrt{p^{L-1}/\hat{p}},
	\label{small_const_ncf_p_h_rl}
	\end{align}
	if $\|\rvu_*\|_2^2 =\sqrt{p^{L-1}/\hat{p}}$, and there exists $\lambda^*$ such that
	\begin{align}
	\mathbf{0} = \sum_{i=1}^ny_i(\sigma^{L-1})'(\rvx_i^\top\rvu_*) \rvx_i + \lambda^*\rvu_*.
	\label{kkt_cond_ind1_p_h_rl}
	\end{align}
	From \cref{kkt1_p_r1_Lhm} and since $\|\rvb_1\|_2^2 = \sqrt{p^{L-1}/\hat{p}}$, we observe that $\rvb_1$ is a non-zero KKT point of \cref{small_const_ncf_p_h_rl}.
	
	We now turn towards proving the ``if'' condition. 
	
	Note that  $(\overline{\rmW}_{1} ,\cdots,\overline{\rmW}_{L-1},\overline{\rmW}_{L}) =  (\rva_1\rvb_1^\top,\cdots,\rva_{L-1}\rvb_{L-1}^\top,\overline{\rvw}^\top)$. Since $\rvb_1$ is a non-zero KKT point of
	\begin{align}
	\max_{\rvu} \sum_{i=1}^n y_i\sigma^{L-1}(\rvx_i^\top\rvu), \text{ such that } \|\rvu\|_2^2=\sqrt{p^{L-1}/\hat{p}},
	\label{small_const_ncf_lin_rl_pf}
	\end{align} 
	then there exists $\lambda^*\neq 0$ such that
	\begin{align}
	\mathbf{0} = \sum_{i=1}^ny_i(\sigma^{L-1})'(\rvx_i^\top\rvb_1) \rvx_i + \lambda^*\rvb_1.
	\label{kkt_cond_ind1_rl_suff}
	\end{align}
	
	Our aim is to show that for some non-zero $\lambda$,
	\begin{align}
	\lambda\overline{\rmW}_l + \nabla_{\rmW_l} \left(\sum_{i=1}^n\rvy_i^\top\gH(\rvx_i;\overline{\rmW}_{1} ,\cdots,\overline{\rmW}_{L} )\right) = \mathbf{0}, \forall l\in [L].
	\label{all_kkt_p}
	\end{align} 
	Let $\hat{\lambda} =  \rvy^\top\gH(\rmX;\overline{\rmW}_{1} ,\cdots,\overline{\rmW}_{L} )$, then since $\sum_{i=1}^ny_i\sigma^{L-1}(\rvx_i^\top\rvb_1 ) \neq 0$, we get
	\begin{align*}
	\rvy^\top\gH(\rmX; \overline{\rmW}_1,\overline{\rmW}_2, \cdots,\overline{\rmW}_L) 
	&= \sum_{i=1}^ny_i\overline{\rvw}^\top\sigma(\rva_{L-1}\rvb_{L-1}^\top\cdots \sigma(\rva_1\rvb_1^\top\rvx_i)\cdots)\\
	&= \overline{\rvw}^\top\rva_{L-1}^p\left(\Pi_{l=1}^{L-2}\sigma^l(\rvb_{L-l}^\top\rva_{L-l-1}^p)\right)\sum_{i=1}^ny_i\sigma^{L-1}(\rvb_1^\top\rvx_i) \neq 0.
	\end{align*}
	Define $\rmZ_l = \nabla_{\rmW_l} \left(\rvy^\top\gH(\rmX;\overline{\rmW}_{1} ,\cdots,\overline{\rmW}_{L} )\right).$ Since $\gH(\rvx;{\rmW}_{1} ,\cdots,{\rmW}_{L} )$ is $p^{L-l}$-homogeneous with respect to $\rmW_l$, therefore, from \Cref{euler_gen},
	\begin{align}
	\text{trace}(\overline{\rmW}_l^\top\rmZ_l) &= p^{L-l}\rvy^\top\gH(\rmX;\overline{\rmW}_{1} ,\cdots,\overline{\rmW}_{L} ) = p^{L-l}\hat{\lambda}.
	\label{euler_kkt1}
	\end{align}
	
	Now, $\hat{\lambda} \neq 0$ implies $\|\rmZ_l\|_F\neq 0$, $\forall l\in [L]$. Suppose we are able to show 
	\begin{align*}
	\rmZ_l/\|\rmZ_l\|_F = \overline{\rmW}_l/\|\overline{\rmW}_l\|_F, \forall l\in [L].
	\end{align*}
	Then we can write $\rmZ_l = \alpha_l\overline{\rmW}_l$, for some $\alpha_l$. Since $\|\overline{\rmW}_l\|_F^2  = p^{L-l}/\hat{p}$, from \cref{euler_kkt1}, we get 
	\begin{align}
	p^{L-l}\hat{\lambda} = \text{trace}(\overline{\rmW}_l^\top\rmZ_l)  = \alpha_l\|\overline{\rmW}_l\|_F^2 = \alpha_lp^{L-l}/\hat{p},
	\end{align}
	which implies $\alpha_l = \hat{p}\hat{\lambda}$. Thus, \cref{all_kkt_p} holds for $\lambda = -\hat{p}\hat{\lambda}$. We next complete the proof by showing $\rmZ_l/\|\rmZ_l\|_F = \overline{\rmW}_l/\|\overline{\rmW}_l\|_F$, for all $l\in [L]$.
	
	For any two matrices (or vectors) $\rmA,\rmB$,  we say $\rmA\propto \rmB$ if $\rmA= s \rmB$, for some non-zero scalar $s$. We aim to show $\rmZ_l\propto \overline{\rmW}_l ,$ which will imply $\rmZ_l/\|\rmZ_l\|_F = \overline{\rmW}_l/\|\overline{\rmW}_l\|_F$, for all $l\in [L]$.
	
	From \cref{grad_1_rl}, \cref{kkt_cond_ind1_rl_suff}, and using \cref{prl_1hm}, we know
	\begin{align}
	\rmZ_1&\propto \sum_{i=1}^ny_i(\sigma^{L-1})'(\rvb_1^\top\rvx_i)\text{diag}(\sigma'(\rva_1))\rvb_2\rvx_i^\top \propto \text{diag}(\sigma'(\rva_1))\rvb_2\rvb_1^\top \propto \text{diag}(\rva_1^{p-1})\rvb_2\rvb_1^\top.
	\end{align}
	Since $\rvb_{2} = \rva_1/p^{1/4}$ and the entries of $\rva_{1}$ are identical and non-negative,   $\text{diag}(\rva_{1}^{p-1})\rvb_2\propto\rvb_2\propto\rva_1$. Hence,  $\rmZ_1\propto \rva_1\rvb_1^\top \propto \overline{\rmW}_1 .$
	
	Next, as shown in \cref{gradL_p_rl}, we have $\rmZ_L  \propto (\rva_{L-1}^p)^\top .$ Since $\overline{\rvw}= q\rva_{L-1}/(p\hat{p})^{1/4}$ and the entries of $\rva_{L-1}$ are identical and non-negative, $\rva_{L-1}^{p} \propto \rva_{L-1}\propto\overline{\rvw}.$  Hence,  $\rmZ_L\propto\overline{\rmW}_L .$
	
	We next consider $\rmZ_l$, for $2\leq l\leq L-1$, which requires some additional results. For any $\rmW_l$, using \cref{prl_1hm}, 
	\begin{align*}
	\gH(\rvx_i;\overline{\rmW}_{1} ,\cdots,{\rmW}_{l},\cdots\overline{\rmW}_{L} ) &= \overline{\rmW}_{L}\sigma(\overline{\rmW}_{L-1}\cdots\sigma({\rmW}_{l}\sigma(\overline{\rmW}_{l-1}\cdots\sigma(\overline{\rmW}_1\rvx_i)\cdots)\\
	&= \overline{\rvw}^\top\sigma(\rva_{L-1}\rvb_{L-1}^\top\cdots\sigma({\rmW}_{l}\phi_i^{l-1}))\\
	&= \overline{\rvw}^\top\rva_{L-1}^p\sigma(\rvb_{L-1}^\top\rva_{L-2}^p)\cdots\sigma^{L-l -2}(\rvb_{l+2}^\top\rva_{l+1}^p)\sigma^{L-l -1}(\rvb_{l+1}^\top\sigma({\rmW}_{l}\phi_i^{l-1})),
	\end{align*}
	where
	\begin{align*}
	\phi_i^{l-1} = \sigma(\overline{\rmW}_{l-1}\cdots\sigma(\overline{\rmW}_1\rvx_i)\cdots) &= \sigma(\rva_{l-1}\rvb_{l-1}^\top\sigma(\rva_{l-2}\rvb_{l-2}^\top\cdots \sigma(\rva_2\rvb_2^\top\sigma(\rva_1\rvb_1^\top\rvx_i))\cdots)\\
	&= \rva_{l-1}^p\sigma(\rvb_{l-1}^\top\sigma(\rva_{l-2}\rvb_{l-2}^\top\cdots \sigma(\rva_2\rvb_2^\top\sigma(\rva_1\rvb_1^\top\rvx_i))\cdots) \\
	&=  \rva_{l-1}^p\sigma^{l-1}(\rvb_1^\top\rvx_i)\sigma^{l-2}(\rvb_2^\top\rva_1^p)\cdots \sigma(\rvb_{l-1}^\top\rva_{l-2}^p).
	\end{align*}
	Hence,
	\begin{align*}
	\rmZ_l \propto \text{diag}(\sigma'(\overline{\rmW}_{l}\rva_{l-1}^p))\rvb_{l+1}(\rva_{l-1}^p)^\top\propto 	\text{diag}(\rva_l^{p-1})\rvb_{l+1}(\rva_{l-1}^p)^\top.
	\end{align*}
	Since $\rvb_{l} =\rva_{l-1}/p^{1/4}$ and the entries of $\rva_{l-1}$ are identical and non-negative, $\rva_{l-1}^{p}\propto \rva_{l-1}\propto\rvb_{l}.$ Also, since $\rvb_{l+1} = \rva_l/p^{1/4}$ and the entries of $\rva_{l}$ are identical and non-negative, $\text{diag}(\rva_l^{p-1})\rvb_{l+1} \propto\rva_l$. Hence, $\rmZ_l\propto \rva_l\rvb_l^\top \propto \overline{\rmW}_l .$
	\subsection{Experimental details}\label{details_experiment}
	This section provides more details about the experiments in this paper. They were conducted using PyTorch \cite{pytorch_ref}, and the code is submitted along with the paper. 
	
	\textbf{Training data. } For our experiments, the training data consists of 100 points with inputs in $\sR^{10}$, drawn uniformly from the unit-norm sphere, and outputs in $\sR$, sampled i.i.d. from $\mathcal{N}(0,1)$.
	
	\textbf{Algorithm. }To obtain a KKT point of the constrained NCF, the NCF is maximized using projected gradient ascent with sufficiently small step-size, i.e., after each gradient step, the weights are projected back to the unit sphere. This is primarily done  because gradient ascent can diverge to infinity. However, from \cref{gd_pgd_lemma}, it is known that, for homogeneous objectives, projected gradient ascent with fixed learning is equivalent to gradient ascent with adaptive step-size up to a scaling factor. Hence, projected gradient ascent is also a valid method to discretize gradient flow in this case. 
	
	\textbf{Initialization. }The initial weights are drawn uniformly at random from the unit norm sphere. Since gradient flow can converge to the origin, we only choose those initial weights for which the NCF is positive, which ensures gradient flow diverges to infinity (see proof of Lemma 2 and \citep[Lemma 5.3]{kumar_dc}).
	
	\textbf{Stopping criterion. }From \Cref{kkt_ncf}, we know that at a first-order KKT point of the NCF, gradient of the NCF will be aligned with the weights. We use this fact as the stopping criterion for our algorithm, that is, we run the projected gradient ascent algorithm until the weights are sufficiently aligned with the gradient of the NCF. 
	
	\begin{remark}
		We make some changes to the algorithm when the activation function is not continuously differentiable, such as ReLU or Leaky ReLU. Such activation functions require computation of the Clarke subdifferential. However, the gradient computed using automatic differentiation provided in PyTorch uses the chain rule of function differentiation,which may not give the true Clarke subdifferential for non-smooth functions. As noted in \Cref{chain_rule_non_diff}, the true Clarke subdifferential belongs to a set which is obtained using the chain rule. Therefore, when using such gradients, projected gradient ascent may not converge towards a true KKT point which satisfies \Cref{kkt_ncf}.  To overcome this issue, a line of work has focused on the behavior of stochastic gradient descent \cite{bianchi_convergence, bolte_conservative}. The authors of \cite{bianchi_convergence} show that the constant step-size stochastic gradient descent, where the gradient is computed using automatic differentiation, asymptotically converges to the true KKT point, for almost all initializations. They also derive similar result for stochastic projected gradient descent. Motivated by these results, we run mini-batch stochastic projected gradient ascent for sufficiently long time, when the activation function is not continuously differentiable such as ReLU or Leaky ReLU.
	\end{remark}

	\section{Challenges for non-differentiable neural networks}\label{relu_ch}
Recall that \Cref{thm_align_init} holds for neural networks with locally Lipschitz gradient, and thus excludes ReLU neural networks. We first provide an empirical evidence that early directional convergence also occurs in ReLU networks, and then briefly describe the challenges in extending \Cref{thm_align_init} to ReLU networks. 

 Let $\sigma(x) = \max(x,0)$. We train the following $3-$homogeneous ReLU network $\mathcal{H}(\mathbf{x};\mathbf{W}_1,\mathbf{W}_2,\mathbf{v}) = \mathbf{v}^\top\sigma(\mathbf{W}_2\sigma(\mathbf{W}_1\mathbf{x})))$, where $\mathbf{W}_1\in\mathbb{R}^{20\times20}$, $\mathbf{W}_2\in\mathbb{R}^{30\times20}$ and $\mathbf{v}\in\mathbb{R}^{30}$.  For training, we  minimize the square loss with respect to the output of a smaller neural network $\mathcal{H}^*(\mathbf{x};\mathbf{W}_1^*,\mathbf{W}_2^*,\mathbf{v}_*) = 10\cdot\mathbf{v}_*^\top\sigma(|\mathbf{W}_2^*|\sigma(\mathbf{W}_1^*\mathbf{x}))$, where the entries of $\mathbf{W}_1^*\in\mathbb{R}^{2\times20}$, $\mathbf{W}_2^*\in\mathbb{R}^{2\times2}, \mathbf{v}_*\in\mathbb{R}^{2\times1}$ are drawn from standard normal distribution. The training data has 100 points sampled uniformly from unit sphere in $\mathbb{R}^{20}$. We define $\mathbf{w}$ to be the vector that contains all the entries of $\mathbf{W}_1, \mathbf{W}_2 \mbox{ and } \mathbf{v}$, $\widetilde{\mathbf{w}} = \mathbf{w}/\|\mathbf{w}\|_2$, and $\mathcal{N}(\mathbf{w})$ denotes the NCF for this case. The network is trained for 64900 iterations using gradient descent with step-size $5\cdot10^{-3}$, and  the initialization $\mathbf{w}(0) = \delta\mathbf{w}_0$, where $\delta=0.01$ and $\mathbf{w}_0$ is a random unit norm vector. From \Cref{near_init_loss_evol_rl}, we observe that the training loss does not change much, and the norm of the weights increases by a multiplicative factor but remains small. Also, from \Cref{near_init_KKT_rl}, it is clear that the weights approximately converge in direction to a KKT point of the constrained NCF. In \Cref{wt_fin_rl}, we plot the normalized absolute value of the weights at initialization and at iteration 64900. Similar to the experiment in \Cref{closer_look}, low-rank structure emerges among the hidden weights of neural networks. In fact, the two largest singular values of $\mathbf{W}_1/\|\mathbf{w}\|_2 \mbox{ and } \mathbf{W}_2/\|\mathbf{w}\|_2$ are $(0.5741, 0.0346)$ and $(0.5772, 0.0067)$ respectively, indicating that they have approximately rank one.
\begin{figure}
	\centering
	\begin{minipage}[b]{0.45\linewidth}
		\centering
		\includegraphics[width=1\linewidth]{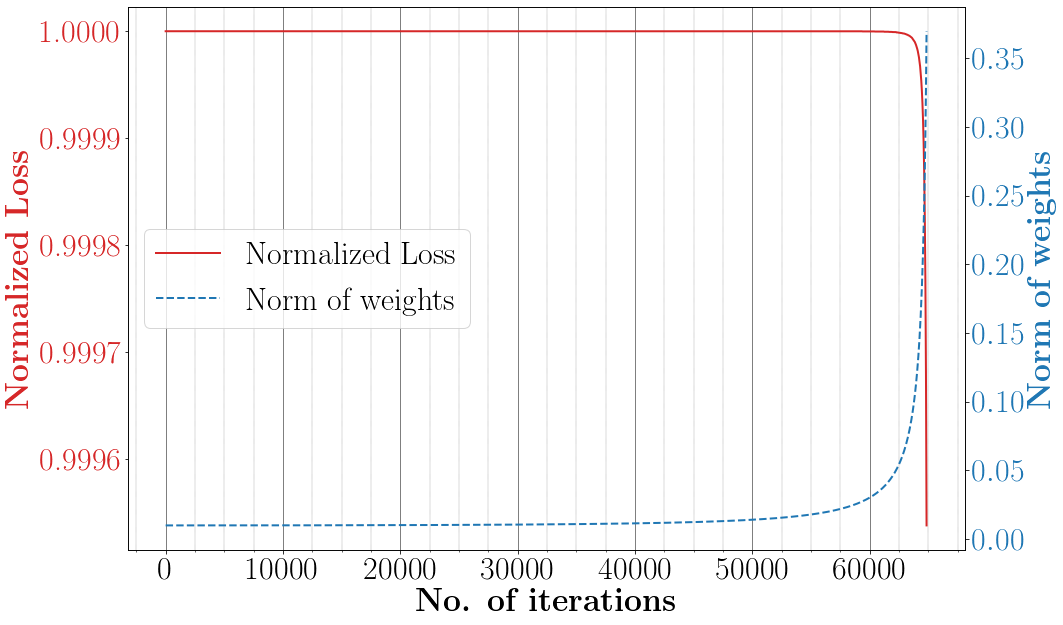}
		\subcaption{}
		\label{near_init_loss_evol_rl}
	\end{minipage}\hfill
	\begin{minipage}[b]{0.45\linewidth}
		\centering
		\includegraphics[width=1\linewidth]{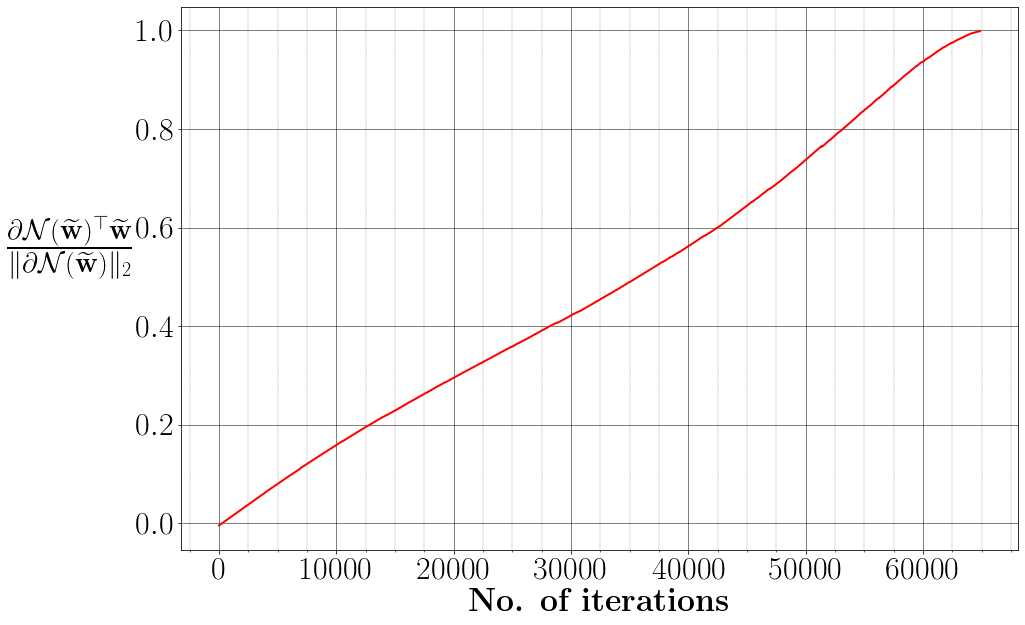}
		\subcaption{}
		\label{near_init_KKT_rl}
	\end{minipage}
	\caption{  Early training dynamics of a three layer ReLU neural network. Panel $(a)$: the evolution of training loss (normalized with respect to loss at initialization) and the $\ell_2$-norm of all the weights with iterations.  Panel $(b)$: the evolution of $\nabla\mathcal{N}(\widetilde{\mathbf{w}}(t))^\top\widetilde{\mathbf{w}}(t)/\|\nabla\mathcal{N}(\widetilde{\mathbf{w}}(t))\|_2$ (a measure of directional convergence of $\mathbf{w}(t)$).  The loss has barely changed, and the norm of weights remains small. Also, the weights approximately converge in direction to a KKT point of the constrained NCF. }
	\label{full_loss_traj_rl}
\end{figure}

\begin{figure}
	\centering
	\begin{minipage}[b]{0.8\linewidth}
		\centering
		\includegraphics[width=0.8\linewidth]{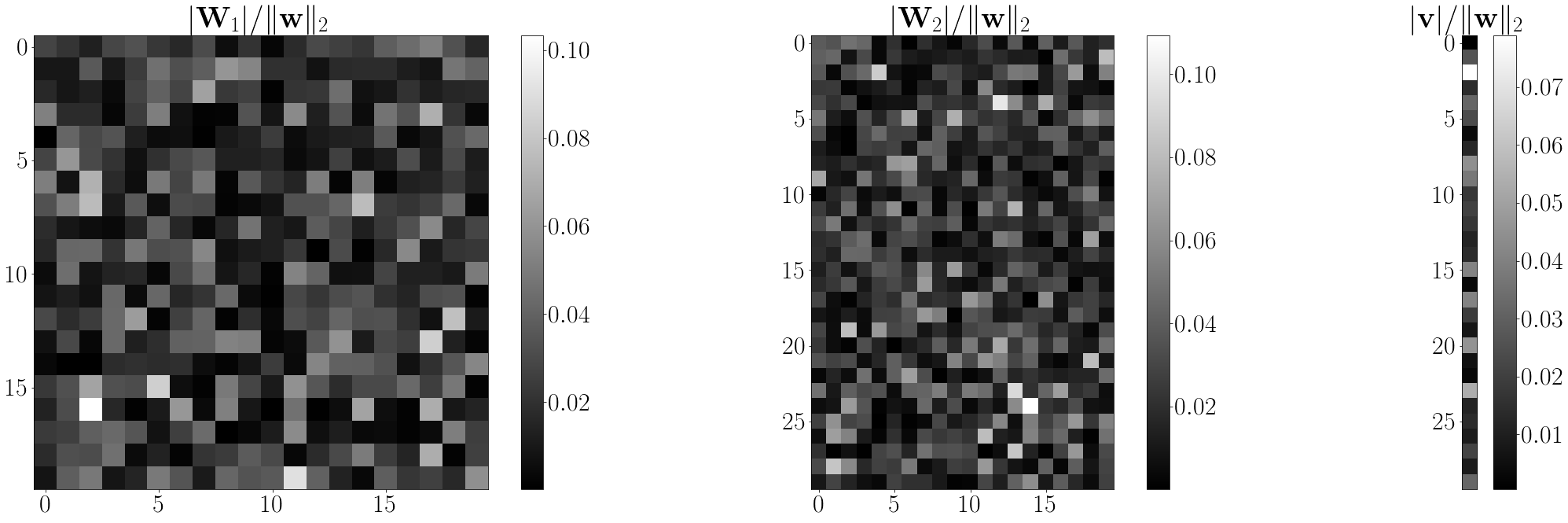}
		\subcaption{At initialization}
		\label{wt_init}
	\end{minipage}\ \ \  \ \ \ \ \
	\begin{minipage}[b]{0.8\linewidth}
		\centering
		\includegraphics[width=0.8\linewidth]{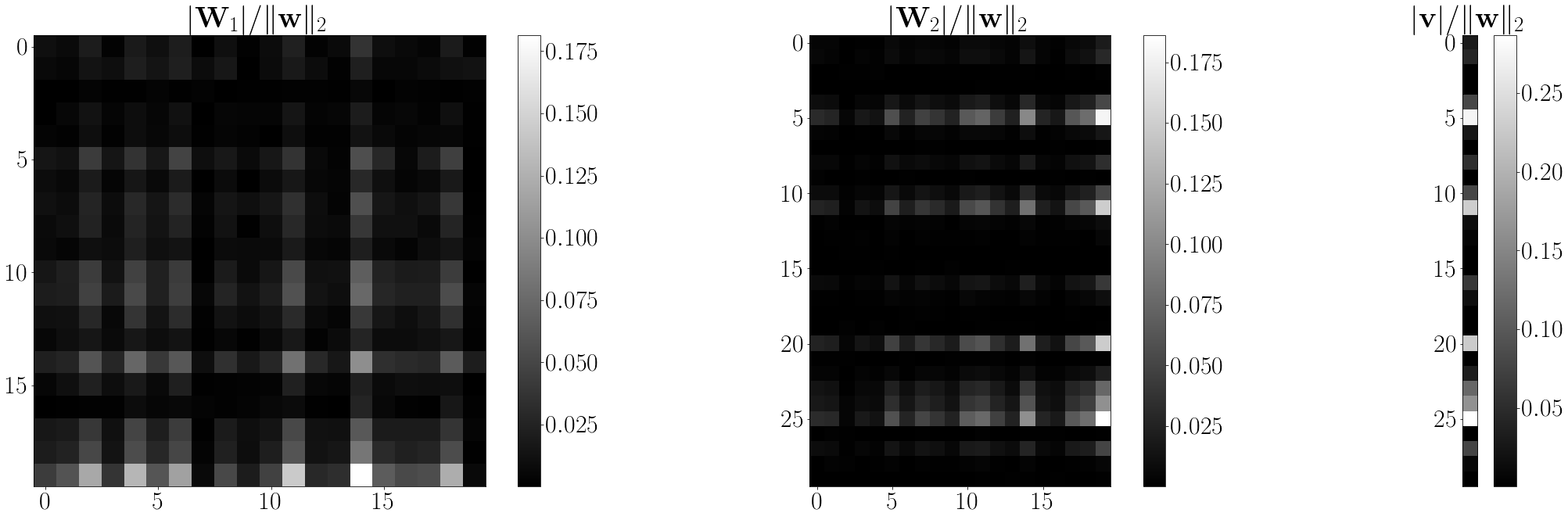}
		\subcaption{At iteration 64900}
		\label{wt_ncf}
	\end{minipage}
	\caption{Normalized absolute value of the weights at different stages of training}
	\label{wt_fin_rl}
\end{figure}
 We next describe the difficulty in extending \Cref{thm_align_init} to ReLU networks. Consider a neural network that satisfies \Cref{L_homo_assumption}, except that the gradient is not assumed to be locally Lipschitz (this will include ReLU networks). Then, assuming square loss, we can define the gradient flow dynamics using the Clarke subdifferential as follows
	\begin{align}
	\dot{\mathbf{w}} \in \sum_{i=1}^n \left({y}_i -  \mathcal{H}(\mathbf{x}_i;\mathbf{w}) \right)\partial \mathcal{H}(\mathbf{x}_i;\mathbf{w}) , \mathbf{w}(0) = \delta\mathbf{w}_0.
	\label{gf_upd_sq_non}
	\end{align}
	
	It is also well-known that the Clarke differential of an $L$-homogeneous function is  $(L-1)$-homogeneous \cite[Theorem B.2]{Lyu_ib}. Thus, if we define $\mathbf{s}(t) = \frac{1}{\delta}\mathbf{w}\left(\frac{t}{\delta^{L-2}}\right)$, then similar to \cref{S_dyn}, we can show that  $\mathbf{s}(t)$ will evolve according to
	\begin{align}
	\dot{\mathbf{s}} &\in \sum_{i=1}^n({y}_i - \delta^L \mathcal{H}\left(\mathbf{x}_i;\mathbf{s}\right)) \partial \mathcal{H}\left(\mathbf{x}_i;\mathbf{s}\right), {\mathbf{s}}(0) = \mathbf{w}_0.
	\label{S_dyn_non}
	\end{align}
	Further, the gradient flow for the NCF can be defined by the differential inclusion 
	\begin{align}
	\dot{\mathbf{u}} \in  \sum_{i=1}^ny_i\partial \mathcal{H}(\mathbf{x}_i;\mathbf{u}), {\mathbf{u}}(0) = \mathbf{w}_0.
	\label{main_flow_non}
	\end{align}
	Following the proof sketch of \Cref{thm_align_init} described in \Cref{pf_sketch}, to show approximate directional converegence of $\mathbf{w}(t)$, we would need to first show that for small enough $\delta$, the solutions of \cref{main_flow_non} and \cref{S_dyn_non} will be sufficiently small for a sufficiently long time. From comparing \cref{main_flow_non} and \cref{S_dyn_non}, it is clear that for small $\delta$, the two solutions will be close in the beginning. However, we \emph{can not} claim that the solutions will remain close for sufficiently long time by choosing small enough $\delta$. In fact, the proof of the analogous result in the context of \Cref{thm_align_init} crucially relied on gradients being locally Lipschitz, which is not the case here. We believe that proving this would be a crucial step towards establishing directional convergence for ReLU networks, and defer it for future investigations.

	\section{Additional lemmata}
	In the following lemma we show that for two-homogeneous neural networks, the gradient flow dynamics of the NCF remains finite for any finite time.
	
	\begin{lemma}\label{ncf_two}
		Let $\mathcal{H}(\mathbf{x};\mathbf{w})$ be a two-homogeneous neural network that satisfies first and third property of \Cref{L_homo_assumption}. Suppose $\mathbf{u}(t)$ is a solution of
		\begin{align}
		\frac{d \mathbf{u}}{dt} = \nabla\mathcal{N}_{\mathbf{z},\mathcal{H}}(\mathbf{u}) =  \mathcal{J}\left(\mathbf{X};\mathbf{u}\right)^\top\mathbf{z} , \mathbf{u}(0) = \mathbf{u}_0.
		\end{align}
		Then, for any finite time $T\geq 0$, $\|\mathbf{u}(T)\|_2$ is finite.
	\end{lemma}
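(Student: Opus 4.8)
The plan is to exploit the fact that for a two-homogeneous network the Jacobian $\mathcal{J}(\mathbf{X};\cdot)$ is only $1$-homogeneous, so that the right-hand side of the gradient flow grows at most \emph{linearly} in $\|\mathbf{u}\|_2$. This is in sharp contrast with the $L>2$ case analyzed in \Cref{ncf_convg}, where the $(L-1)$-homogeneity of the Jacobian produces a superlinear growth rate and permits finite-time blow-up; a linear growth rate is exactly what Gronwall's inequality needs in order to rule out escape to infinity in finite time.

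First I would record the homogeneity bookkeeping. Since the network satisfies the third property of \Cref{L_homo_assumption}, $\mathcal{H}(\mathbf{x};\cdot)$ is differentiable with locally Lipschitz gradient, so $\mathcal{N}_{\mathbf{z},\mathcal{H}}(\mathbf{u}) = \mathbf{z}^\top\mathcal{H}(\mathbf{X};\mathbf{u})$ is differentiable and $2$-positively homogeneous, and hence by \Cref{euler_gen} its gradient $\nabla\mathcal{N}_{\mathbf{z},\mathcal{H}}(\mathbf{u}) = \mathcal{J}(\mathbf{X};\mathbf{u})^\top\mathbf{z}$ is $1$-homogeneous and satisfies $\mathbf{u}^\top\nabla\mathcal{N}_{\mathbf{z},\mathcal{H}}(\mathbf{u}) = 2\,\mathcal{N}_{\mathbf{z},\mathcal{H}}(\mathbf{u})$. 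Letting $\beta$ be the finite supremum of $\|\mathcal{H}(\mathbf{X};\cdot)\|_2$ over the unit sphere (finite since the map is continuous and $\mathcal{S}^{k-1}$ is compact; cf.\ \cref{bd_sphere}), $2$-homogeneity of $\mathcal{H}(\mathbf{X};\cdot)$ gives $\|\mathcal{H}(\mathbf{X};\mathbf{u})\|_2 \leq \beta\|\mathbf{u}\|_2^2$ for every $\mathbf{u}$.

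Next, to sidestep the non-differentiability of $\|\mathbf{u}(t)\|_2$ at the origin, I would differentiate the squared norm along the flow:
\[
\frac{1}{2}\frac{d}{dt}\|\mathbf{u}(t)\|_2^2 = \mathbf{u}(t)^\top\dot{\mathbf{u}}(t) = \mathbf{u}(t)^\top\nabla\mathcal{N}_{\mathbf{z},\mathcal{H}}(\mathbf{u}(t)) = 2\,\mathcal{N}_{\mathbf{z},\mathcal{H}}(\mathbf{u}(t)),
\]
using Euler's identity in the last step. Bounding $|\mathcal{N}_{\mathbf{z},\mathcal{H}}(\mathbf{u})| \leq \|\mathbf{z}\|_2\|\mathcal{H}(\mathbf{X};\mathbf{u})\|_2 \leq \beta\|\mathbf{z}\|_2\|\mathbf{u}\|_2^2$ then yields the differential inequality $\frac{d}{dt}\|\mathbf{u}(t)\|_2^2 \leq 4\beta\|\mathbf{z}\|_2\|\mathbf{u}(t)\|_2^2$. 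Applying the second part of \Cref{gronwall} (with the non-decreasing constant function $t\mapsto\|\mathbf{u}_0\|_2^2$ and $\beta(\cdot)\equiv 4\beta\|\mathbf{z}\|_2$) gives $\|\mathbf{u}(t)\|_2^2 \leq \|\mathbf{u}_0\|_2^2\,e^{4\beta\|\mathbf{z}\|_2 t}$ for all $t\geq 0$, whence $\|\mathbf{u}(T)\|_2 \leq \|\mathbf{u}_0\|_2\,e^{2\beta\|\mathbf{z}\|_2 T} < \infty$.

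There is no real obstacle here — the argument is essentially a one-line Gronwall estimate once the homogeneity bookkeeping is set up. The only points deserving a moment's care are the already-mentioned treatment of $\mathbf{u}(t)=\mathbf{0}$ (handled by working with $\|\mathbf{u}\|_2^2$ rather than $\|\mathbf{u}\|_2$) and the conceptual observation that two-homogeneity, as opposed to higher homogeneity, is exactly what turns the a priori growth rate from superlinear to linear and thereby forbids finite-time blow-up.
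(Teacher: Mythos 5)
Your proof is correct and follows essentially the same route as the paper's: differentiate $\|\mathbf{u}\|_2^2$ along the flow, use Euler's identity for the two-homogeneous $\mathcal{N}$, bound $|\mathcal{N}(\mathbf{u})|\leq\beta\|\mathbf{z}\|_2\|\mathbf{u}\|_2^2$, and conclude via Gronwall. If anything, your explicit multiplicative bound $\|\mathbf{u}(t)\|_2^2\leq\|\mathbf{u}_0\|_2^2e^{4\beta\|\mathbf{z}\|_2 t}$ is stated more cleanly than the paper's final inequality.
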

	\begin{proof}
		Let $\beta = \sup\{\|\mathcal{H}(\mathbf{X};\mathbf{w})\|_2:\mathbf{w}\in \mathcal{S}^{k-1}\}$. Then
		\begin{align*}
		\frac{1}{2}\frac{d{\|\mathbf{u}\|_2^2}}{dt} = \mathbf{u}^\top\dot{\mathbf{u}}  = \mathbf{u}^\top\mathcal{J}\left(\mathbf{X};\mathbf{u}\right)^\top\mathbf{z} = 2\mathcal{H}\left(\mathbf{X};\mathbf{u}\right)^\top\mathbf{z} \leq 2\beta\|\mathbf{u}\|_2^2\|\mathbf{z}\|_2,
		\end{align*}
		where in the last equality we used \Cref{euler}, and the inequality follows from Cauchy-Schwartz. The above equation implies $\|\mathbf{u}(T)\|_2^2\leq \|\mathbf{u}_0\|_2^2 + e^{4T\beta\|\mathbf{z}\|_2}$, for $T\geq 0$, which proves our claim.
	\end{proof}
	
	\begin{lemma}\label{gd_pgd_lemma}
		Let $F(\rvw)$ be a $q$-homogeneous function, for some $q\geq 2$. Consider the following two sequences:
		\begin{align}
		\rvv(t+1) = c_t(\rvv(t) + \eta \nabla F(\rvv(t))), \rvv(0) = \rvw_0, c_t>0, \forall t\geq 0,
		\label{pgd_sq}
		\end{align}
		and
		\begin{align}
		\rvu(t+1) = \rvu(t) + \eta_t \nabla F(\rvu(t)), \rvu(0) = \rvw_0, \eta_0 = \eta, \eta_t = \eta \left(\Pi_{m=0}^{t-1} c_m\right)^{L-2}, \forall t\geq 0.
		\label{gd_sq}
		\end{align}
		Then, for all $T\geq 1$, $ \rvv(T) = \left(\Pi_{t=0}^{T-1} c_t\right) \rvu(T).$
		
	\end{lemma}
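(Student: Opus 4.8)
The plan is to prove the identity $\rvv(T) = \bigl(\prod_{t=0}^{T-1} c_t\bigr)\,\rvu(T)$ by induction on $T$, using nothing beyond Euler's identity for gradients of homogeneous functions. Write $P_T := \prod_{t=0}^{T-1} c_t$ with the empty-product convention $P_0 = 1$, so that $P_{T+1} = c_T P_T$ and the step sizes in \cref{gd_sq} read $\eta_T = \eta\,P_T^{\,q-2}$, where $q$ denotes the order of homogeneity (written $L$ in the statement); in particular $\eta_0 = \eta\,P_0^{\,q-2} = \eta$, matching the stated initialization. The first step is to invoke \Cref{euler_gen}: since $F$ is $q$-homogeneous, its gradient is $(q-1)$-homogeneous, that is $\nabla F(c\rvw) = c^{\,q-1}\nabla F(\rvw)$ for all $c \ge 0$ and all $\rvw$. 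The assumption $c_t > 0$ is exactly what licenses applying this with $c = P_t$ at each step.

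Next I would run the induction. The base case is immediate: $\rvv(0) = \rvw_0 = P_0\,\rvu(0)$. For the inductive step, assume $\rvv(t) = P_t\,\rvu(t)$. Substituting into \cref{pgd_sq} and applying the homogeneity of $\nabla F$,
\begin{align*}
\rvv(t+1) &= c_t\bigl(\rvv(t) + \eta\,\nabla F(\rvv(t))\bigr)
 = c_t\bigl(P_t\,\rvu(t) + \eta\,\nabla F(P_t\,\rvu(t))\bigr)\\
 &= c_t P_t\,\rvu(t) + c_t\,\eta\, P_t^{\,q-1}\,\nabla F(\rvu(t))
 = P_{t+1}\,\rvu(t) + P_{t+1}\,\eta\, P_t^{\,q-2}\,\nabla F(\rvu(t))\\
 &= P_{t+1}\bigl(\rvu(t) + \eta_t\,\nabla F(\rvu(t))\bigr)
 = P_{t+1}\,\rvu(t+1),
\end{align*}
where the fourth equality absorbs one factor via $c_t P_t = P_{t+1}$, the fifth uses $\eta_t = \eta\,P_t^{\,q-2}$, and the last uses \cref{gd_sq}. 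This closes the induction, and evaluating the relation at $t = T-1$ gives the claim for every $T \ge 1$.

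I do not anticipate a genuine obstacle here; the argument is essentially bookkeeping. The single point that deserves attention is the exponent count: homogeneity of $\nabla F$ produces a factor $P_t^{\,q-1}$, but exactly one of those $c_t$-factors is consumed by the recursion $P_{t+1} = c_t P_t$, leaving the $P_t^{\,q-2}$ that the definition of $\eta_t$ supplies — this is the reason the adaptive step size carries exponent $q-2$ (equivalently $L-2$) rather than $q-1$. It is also worth double-checking the empty-product convention so that both the base case and the formula $\eta_0 = \eta$ are consistent, and noting that positivity of the $c_t$ (rather than mere nonnegativity) is used only insofar as it keeps $P_t > 0$ and permits the use of \Cref{euler_gen}.
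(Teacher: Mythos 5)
Your induction is exactly the paper's argument: the same base case, the same use of \Cref{euler_gen} to write $\nabla F(P_t\,\rvu(t)) = P_t^{\,q-1}\nabla F(\rvu(t))$, and the same absorption of one factor of $c_t$ via $P_{t+1}=c_tP_t$ to recover the adaptive step size $\eta_t=\eta P_t^{\,q-2}$. The exponent bookkeeping and the reading of the statement's $L$ as the homogeneity order $q$ are both correct, so nothing further is needed.
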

	\begin{proof}
		We prove this via induction. The claim is true for $T=1$, since
		\begin{align*}
		\rvv(0) = \rvw_0 = \rvu(0), \text{ and } \rvv(1) = c_0(\rvv(0)+\eta\nabla \rmF(\rvv(0))) = c_0\rvu(1).
		\end{align*} 
		Suppose the claim holds for some $t_0>1$. Then, we have
		\begin{align}
		\rvv(t_0) = \left(\Pi_{t=0}^{t_0-1} c_t\right) \rvu(t_0).
		\end{align}
		Now,
		\begin{align*}
		\rvu(t_0+1) = \rvu(t_0) + \eta_{t_0} \nabla F(\rvu(t_0)).
		\end{align*}
		and
		\begin{align*}
		\rvv(t_0+1) &= c_{t_0}(\rvv(t_0) + \eta \nabla F(\rvv(t_0)))\\
		&= c_{t_0}(\left(\Pi_{t=0}^{t_0-1} c_t\right) \rvu(t_0) + \eta \left(\Pi_{t=0}^{t_0-1} c_t\right)^{L-1} \nabla  F(\rvu(t_0)))\\
		& = c_{t_0}\left(\Pi_{t=0}^{t_0-1} c_t\right) ( \rvu(t_0) + \eta \left(\Pi_{t=0}^{t_0-1} c_t\right)^{L-2} \nabla  F(\rvu(t_0)))\\
		&= \left(\Pi_{t=0}^{t_0} c_t\right) ( \rvu(t_0) + \eta_{t_0} \nabla  F(\rvu(t_0))) = \left(\Pi_{t=0}^{t_0} c_t\right) \rvu(t_0+1) ,
		\end{align*}
		which completes the proof.
	\end{proof}
	If we choose $c_t = 1/\|\rvv(t) + \eta \nabla F(\rvv(t))\|_2$ in \cref{gd_sq}, then \cref{gd_sq} becomes the iterates of projected gradient ascent with constant step-size. Therefore, the above lemma shows that, for homogeneous objectives, projected gradient ascent with fixed step-size is equivalent to gradient ascent with adaptive step-size, up to a scaling factor. Consequently, for any finite $T$, the  direction of $\rvv(T)$ and $\rvu(T)$ will be the same.   
	
\end{document}